\newcommand{\ba}        {\mathbf{a}}
\newcommand{\bA}        {\mathbf{A}}
\newcommand{\bC}        {\mathbf{C}}
\newcommand{\E}         {\mathbb{E}}
\newcommand{\cE}        {\mathcal{E}}
\newcommand{\bff}       {\mathbf{f}}
\newcommand{\bg}        {\mathbf{g}}
\newcommand{\cG}        {\mathcal{G}}
\newcommand{\bh}        {\mathbf{h}}
\newcommand{\bI}        {\mathbf{I}}
\newcommand{\cN}        {\mathcal{N}}
\newcommand{\cO}        {\mathcal{O}}
\newcommand{\bP}        {\mathbf{P}}
\newcommand{\bq}        {\mathbf{q}}
\newcommand{\R}         {\mathbb{R}}
\newcommand{\bs}        {\mathbf{s}}
\newcommand{\bS}        {\mathbf{S}}
\newcommand{\bv}        {\mathbf{v}}
\newcommand{\cV}        {\mathcal{V}}
\newcommand{\bW}        {\mathbf{W}}
\newcommand{\bx}        {\mathbf{x}}
\newcommand{\bX}        {\mathbf{X}}
\newcommand{\by}        {\mathbf{y}}
\newcommand{\bY}        {\mathbf{Y}}
\newcommand{\bSigma}    {\boldsymbol{\Sigma}}
\newcommand{\bone}      {\mathbf{1}}
\DeclareMathOperator*{\argmin}  {arg\,min}
\newcommand{\tT}        {\mathrm{T}}
\theoremstyle{plain}
\newtheorem{lemma}{Lemma}
\newtheorem{theorem}{Theorem}
\theoremstyle{definition}
\theoremstyle{remark}
\newcommand*{\rom}[1]{\expandafter\@slowromancap\romannumeral #1@}
\begin{document}

\title{FAST-PCA: A Fast and Exact Algorithm for Distributed Principal Component Analysis}

\author{Arpita Gang and Waheed U.\ Bajwa
\thanks{Preliminary versions of some of the results reported in this paper were presented at the 2019 IEEE International Conference on Acoustics, Speech and Signal Processing (ICASSP), Brighton, United Kingdom, May 12-17 2019~\cite{gang.raja.bajwa.2019}. AG and WUB are with the Department of Electrical and Computer Engineering, Rutgers University--New Brunswick, NJ 08854 (Emails: {\tt \{arpita.gang,~waheed.bajwa\}@rutgers.edu}).}

\thanks{This work was supported in part by the National Science Foundation under Awards CCF-1907658, and OAC-1940074, and by the Army Research Office under Awards W911NF-17-1-0546 and W911NF-21-1-0301.}}

\maketitle
	
\begin{abstract}
Principal Component Analysis (PCA) is a fundamental data preprocessing tool in the world of machine learning. While PCA is often thought of as a dimensionality reduction method, the purpose of PCA is actually two-fold: dimension reduction and uncorrelated feature learning. Furthermore, the enormity of the dimensions and sample size in the modern day datasets have rendered the centralized PCA solutions unusable. In that vein, this paper reconsiders the problem of PCA when data samples are distributed across nodes in an arbitrarily connected network. While a few solutions for distributed PCA exist, those either overlook the uncorrelated feature learning aspect of the PCA, tend to have high communication overhead that makes them inefficient and/or lack `exact' or `global' convergence guarantees. To overcome these aforementioned issues, this paper proposes a distributed PCA algorithm termed \textit{FAST-PCA (Fast and exAct diSTributed PCA)}. The proposed algorithm is efficient in terms of communication and is proven to converge linearly and exactly to the principal components, leading to dimension reduction as well as uncorrelated features. The claims are further supported by experimental results.
\end{abstract}

\begin{IEEEkeywords}
Dimension reduction, distributed learning, exact convergence, Krasulina's method, principal component analysis
\end{IEEEkeywords}	
\section{Introduction}\label{sec:intro}
Massive and high-dimensional datasets are becoming an increasingly essential part of the modern world ranging from healthcare to finance and from social media to the Internet-of-Things (IoT). In a related trend, machine learning algorithms are finding their applications in every possible domain because of their data-driven nature and the ability to generalize to new unseen data. But these algorithms need a considerable amount of data preprocessing for their effective and efficient use. One of the major steps in this preprocessing is dimension reduction and feature learning for compression and extraction of useful features from raw data that can be used in downstream machine learning algorithms for classification, clustering, etc. Principal Component Analysis (PCA)~\cite{Hotelling.1933} is a workhorse tool for such dimension reduction and feature extraction purposes. In a nutshell, PCA transforms a large set of correlated features to a smaller set of uncorrelated features that contain maximum information of the raw data. 

The increasing volume of available data along with concerns like privacy, communication cost, etc., as well as emerging applications such as smart cities, autonomous vehicles, etc., have also led to a significant interest in the development of distributed algorithms for doing PCA on non-collocated data in the last couple of decades~\cite{BajwaCevherEtAl.ISPM20}. Data tends to be distributed for a multitude of reasons; it can be inherently distributed like in IoT, sensor networks, etc., or it can be distributed due to storage and/or computational limitations. The ultimate goal of any distributed algorithm is to solve a common problem using data shared among the distributed entities through communication with each other so that all entities collectively reach a solution that is nearly as good as the solution of the centralized algorithms, for which data is available at a single location. Motivated by these reasons, we develop and analyze an effective solution for distributed PCA that is efficient in terms of communication, that does not require exchange of raw data and that can be proved to converge exactly for any arbitrary network topology and at a linear rate to a solution that is the same as the one returned by \textit{centralized} PCA.

Distributed setups can be largely classified into two types: i) those having a central entity/server that coordinates with the other nodes in a master-slave architecture, and ii) those lacking any central entity, in which the nodes are connected in an arbitrary network. In the first type of setup, the central entity aggregates information from all the nodes and yields the final result. Since the second type of architecture does not rely on any central entity, it is a more general setup and it lacks a single point of failure. The detailed review in~\cite{Yang.Gang.Bajwa.2020} discusses these setups along with various algorithms developed for both in more detail. Although the terms distributed and decentralized are used interchangeably for both setups in the literature, we consider the latter scenario for the distributed PCA and call it \textit{distributed} in this paper.

The goal of dimension reduction can be accomplished by learning a low-dimensional subspace spanned by the dominant eigenvectors of the covariance matrix of the distribution to which the data samples belong. Mathematically speaking, for a data point $\by \in \R^d$ sampled from a distribution with zero mean and covariance $\bSigma \in \R^{d\times d}$, dimension reduction can be achieved by projecting $\by$ onto a matrix $\bX \in \R^{d\times K}, K \ll d$, such that $\bX$ spans a subspace spanned by the leading $K$ eigenvectors of $\bSigma$ under the constraint $\bX^T\bX = \bI$, that is $\bX$ lies on a Stiefel manifold. When $\by$ is compressed as $\tilde{\by} = \bX^\tT\by$ with such an $\bX$, its reconstruction $\bX\bX^\tT\by$ has minimum error in the Frobenius norm sense. However, this approach can only be called \textit{principal subspace analysis} as it does not ensure that the resultant $K$ features in $\tilde{\by}$ are uncorrelated. The uncorrelatedness constraint requires $\E\begin{bmatrix}\tilde{\by}\tilde{\by}^T\end{bmatrix}= \E\begin{bmatrix}\bX^\tT\by\by^\tT\bX\end{bmatrix}$ to be a diagonal matrix, which is fulfilled only when $\bX$ contains the eigenvectors of $\bSigma$, not just any orthogonal basis of the subspace spanned by the said eigenvectors. The true purpose of PCA is thus fulfilled by a specific element of the Stiefel manifold that corresponds to the eigenvectors of $\bSigma$. 

An autoencoder is another popular neural-network based tool for data compression. The good generalization capability of neural network-based systems along with their ease of parallelization in the case of massive data make them very attractive and efficient solutions for PCA. A study in~\cite{baldi.hornik.1989} showed that the optimum weights of an autoencoder for efficient data compression and features decorrelation, when the loss function is the reconstruction error, are given by the space spanned by the eigenvectors of the input covariance matrix. It was also noted in~\cite{oja, sanger} that neural networks trained using the Hebbian learning rule~\cite{Hebb.1949} extract principal components of the input correlation matrix in the streaming data case. In the same vein of streaming setting, an earlier work by Krasulina~\cite{krasulina} proposed a similar learning method that converges to the dominant eigenvector of the expectation of input sample covariance matrix. Even though the methods proposed by Krasulina~\cite{krasulina} and Oja~\cite{oja} have many similarities as pointed out in~\cite{oja, balsubramani2013}, Oja's~\cite{oja} rule has been studied more extensively than Krasulina's~\cite{krasulina} method. The original Krasulina's method is a simple iterative method for the estimation of the top eigenvector in the streaming case and its matrix version called Matrix Krasulina was proposed much later in~\cite{matrix_krasulina2019} that extends the original method to estimate the subspace spanned by the top $K$ eigenvectors. Since we aim to find the top $K$ eigenvectors, in this paper we propose a learning method based on the original Krasulina's method that can be shown to converge to the first $K$ eigenvectors (principal components) of the sample covariance matrix, not just the principal subspace, in distributed batch settings. Due to the parallelization potential and an iterative update-based rule, our proposed method is applicable to autoencoder training as well.
\subsection{Relation to Prior Work}
The problem of dimension reduction goes back to as early as 1901 when Pearson~\cite{Pearson.1901} aimed at fitting a line to a set of data points. Later, Hotelling~\cite{Hotelling.1933} proposed a PCA method for decorrelating and compressing a set of data points by finding their principal components. Since then, many iterative methods like power method, orthogonal iterations~\cite{Golub}, Lanczos method~\cite{Lanczos.1950} have been proposed to estimate eigenvectors or low-dimensional subspaces of symmetric matrices, a class under which covariance matrices fall. A stochastic approximation algorithm was proposed by Krasulina in~\cite{krasulina} for the estimation of the dominant eigenvector in the streaming data case. From the point of view of training neural networks for data compression, an algorithm very similar to Krasulina's method was later proposed by Oja~\cite{oja}, which was then extended for multiple eigenvector estimation by Sanger~\cite{sanger}. Both Oja's and Sanger's method were based on the Hebbian learning rule~\cite{Hebb.1949} and it was shown that the weights of an autoencoder trained using this rule converge to the eigenvectors of the input correlation matrix. The works in~\cite{yi.tan2005, lv.yi.tan.2007} proved that in deterministic batch settings Oja's rule and the generalized Hebbian rule proposed by Sanger converge to the eigenvectors of a covariance matrix at a linear rate. Krasulina's method was also generalized for the estimation of a subspace of dimension greater than one in~\cite{matrix_krasulina2019}, although it only guarantees convergence to the principal subspace, instead of principal components, at a linear rate under the low-rank matrix assumption. 

The problem of PCA in the distributed setting is relatively recent. In any distributed network, data can be distributed by either features or samples and the solutions for these two data distribution types are significantly different. A detailed review of various distributed PCA algorithms for both kinds of data distribution is done in~\cite{wu2018review}. For the case of feature-wise distribution as in~\cite{mcsherry, scaglione.krim.2008, d-oja}, each node in the network estimates one or a subset of features of the entire subspace. In this paper we focus on the case of sample-wise data distribution, where each node estimates the entire basis and consensus in the network is a necessary condition. The sample-wise data distribution was considered in~\cite{cksvd.allerton.2013, cksvd, depm}, where a power method-based approach was proposed for estimation of the dominant eigenvector ($K=1$). This method requires an explicit consensus loop~\cite{consensus} in every iteration of the power method and the final error is a function of the number of consensus iterations. The power method-based distributed PCA solutions can be used for multiple ($K>1$) eigenvector estimation in a sequential manner, where lower-order eigenvectors are estimated using the residue of the covariance matrix left after its projection on the higher-order eigenvectors. Since estimation of any lower-order eigenvector requires that the higher-order eigenvectors are fully estimated, this sequential approach results in a rather slow algorithm. To overcome the issues of the sequential approach, an orthogonal iteration-based solution for the case of $K>1$ was proposed in~\cite{xiang.gang.bajwa.2021}. Although this method estimates the $K$-dimensional subspace simultaneously, its convergence guarantees are in terms of subspace angles and thus it proves convergence to the principal subspace. Moreover, all these aforementioned methods require an explicit consensus loop making these algorithms inefficient in terms of communication overhead. 

PCA is a non-convex problem since the uncorrelated constraint requires the solution to be a specific element on the Stiefel manifold. Recently, some algorithms in the field of distributed optimization were proposed to deal with non-convex problems. While some of those deal with unconstrained problems~\cite{proxpda}, some are developed for non-convex objectives with convex constraints~\cite{bianchi.jakubowicz.2013, wai.scaglione.lafond.2016}, while some methods guarantee convergence only to a stationary point~\cite{chen.mingyi.2021}. For these reasons, none of the existing distributed algorithms for non-convex problems are directly applicable for the PCA objective. A recent work based on perturbation theory for linear operators based on the Picard iteration was proposed for distributed optimization in~\cite{picard}. The extension of this work in~\cite{picardPCA} demonstrated the application of the distributed Picard iteration (DPI) method to distributed PCA but it could only prove local convergence, i.e., if the estimate is already ``close enough" to the optimal solution, then it converges to the optimal point at a linear rate. Furthermore, the DPI method suffers from two more limitations in terms of its theoretical analysis, namely it requires the covariance matrix to be full rank as well as the upper bound on step size required for convergence guarantees is not quantified in terms of problem parameters like eigengap, data dimension etc. Thus, many gaps still remain to be filled in distributed PCA.

The work in this paper is an extension of our preliminary work in~\cite{gang.raja.bajwa.2019} that proposed two fast and efficient algorithms for distributed PCA in the case of sample-wise distributed data. A distributed algorithm for PCA based on generalized Hebbian algorithm using a combine-and-adapt strategy called distributed Sanger's algorithm (DSA) was developed and analyzed in our previous work~\cite{gang.bajwa.2021}. Although this strategy has mainly been used in distributed optimization literature for convex and strongly convex problems, we showed using extensive analysis that even for the non-convex PCA problem, each node converges linearly and globally, i.e., starting from any random initial point. Though it is a linearly convergent one-time scale algorithm, it only reaches to a neighborhood of the optimal solution for a fixed step size. The algorithm, however, does converge exactly in the case of decreasing step sizes but with a slower rate of convergence. This result is coherent with the combine-and-update based gradient descent solutions~\cite{dgd} for distributed optimization. To overcome such limitations of simple gradient descent-based algorithms, some new methods have been proposed recently that deploy a technique called ``gradient-tracking", which has been shown to converge exactly in the case of convex~\cite{extra,qu.li.2018}, strongly convex and some non-convex problems~\cite{next}. In this paper, we use this gradient-tracking idea to develop an algorithm for the \textit{non-convex} distributed PCA problem that linearly converges to an optimal solution that is the same as its centralized counterpart. 

A very recent paper on distributed PCA~\cite{deepca} used this gradient-tracking idea to develop a two-time scale algorithm called DeEPCA for subspace estimation. Our work has three major differences as compared to DeEPCA: firstly, our algorithm guarantees convergence to the eigenvectors of the global covariance matrix and not just any rotated basis of the same subspace, thereby making our algorithm a true PCA and not just a principal subspace analysis (PSA) solution. Secondly, we do not use any explicit consensus loop for ensuring agreement in the network, making it a very communication-efficient solution and finally, DeEPCA requires explicit QR decomposition in every iteration unlike our algorithm, thus requiring more computations.

\subsection{Our Contributions}
The main contributions of this paper are 1) a novel algorithm for distributed PCA called \textit{Fast and exAct diSTributed PCA} (FAST-PCA) based on a generalization of Krasulina's method, 2) theoretical guarantees that show that the estimates given by our method converge exactly and globally at a linear rate to the eigenvectors of the global covariance matrix, and 3) experimental results that further demonstrate the efficiency of our solution for both synthetic and real-world datasets.

Our primary focus in this paper is to develop a solution for distributed PCA when the data samples are scattered across an arbitrarily connected network with no central node. While PCA is often reduced to dimension reduction, we focus on the dual goal of PCA that requires dimensionality reduction as well as feature decorrelation. To that end, we propose an algorithm based on Krasulina's method using a gradient-tracking approach. Since the original Krasulina's method only finds the dominant eigenvector, we also generalize it to the distributed setting for the estimation of top $K$ eigenvectors.  Our proposed FAST-PCA method is an iterative update algorithm and its main attributes are that it is fast since it lacks any explicit consensus loop and hence reduces the communication overhead, and it converges exactly to the true eigenvectors of the global covariance matrix at a linear rate. We provide detailed convergence analysis to support our claims as well as extensive numerical experiments where we compare our method to centralized orthogonal iteration (OI) as the centralized baseline, as well as distributed PCA algorithms of sequential distributed power method (SeqDistPM), DeEPCA and DSA. We provide the results for different network topologies as well as eigengaps to further solidify our claims.

To the best of our knowledge, this is a first novel algorithm for distributed PCA based on Krasulina's method which achieves fast and exact convergence to the true eigenvectors of the global covariance matrix at every node of an arbitrarily connected network.

\subsection{Notation and Organization}
The following notation is used in this paper. Scalars and vectors are denoted by lower-case and lower-case bold letters, respectively, while matrices are denoted by upper-case bold letters. The operator $|\cdot|$ denotes the absolute value of a scalar quantity. The superscript in $\ba^{(t)}$ denotes time (or iteration) index, while $a^t$ denotes the exponentiation operation. The superscript $(\cdot)^\tT$ denotes the transpose operation, the operator $\otimes$ denotes Kronecker product, $\|\cdot\|_F$ denotes the Frobenius norm of matrices, while both $\|\cdot\|$ and $\|\cdot\|_2$ denote the $\ell_2$-norm of vectors. Given a matrix $\bA$, both $a_{ij}$ and $(\bA)_{ij}$ denote its entry at the $i^{th}$ row and $j^{th}$ column, while $\ba_j$ denotes its $j^{th}$ column. The matrix $\bI_a \in \R^{a\times a}$ denotes the identity matrix of dimension $a$.

The rest of the paper is organized as follows. In Section~\ref{sec:problem}, we describe and mathematically formulate the distributed PCA problem, while Section~\ref{sec:algo} describes the proposed distributed algorithm, which is based on Krasulina's algorithm. In Section~\ref{sec:analysis_centralized}, we derive an auxiliary result based on Krasulina's method that aids in the convergence analysis of the proposed distributed algorithm, while convergence guarantees for the proposed algorithm are provided in Section~\ref{sec:analysis_fastpca}. We provide numerical results in Section~\ref{sec:results} to show efficacy of the proposed method and provide concluding remarks in Section~\ref{sec:conc}.

\section{Problem Description}\label{sec:problem}
Principal Component Analysis (PCA) is a widely used data preprocessing tool to find a low-dimensional subspace that would decorrelate data features while retaining maximum information. For data samples $\by \in \R^d$ sampled from a zero-mean distribution with covariance matrix $\bSigma$, PCA can be mathematically formulated as
\begin{align}\nonumber
    \bX &= \underset{\bX \in \R^{d\times K}, \bX^{\tT}\bX = \bI} \argmin \quad \E\begin{bmatrix}\|\by-\bX\bX^{\tT}\by\|_2^2\end{bmatrix} \\\label{eq:pca}
    &\text{such that}\qquad \forall l\neq q, \ \Big(\E \begin{bmatrix} \bX^{\tT}\by\by^{\tT}\bX \end{bmatrix}\Big)_{lq} = 0.
\end{align}
The constraint $\Big(\E \begin{bmatrix} \bX^{\tT}\by\by^{\tT}\bX \end{bmatrix}\Big)_{lq} = 0, \forall l\neq q$, ensures that $\bX$ decorrelates the features of $\by$. It is evident that $\E \begin{bmatrix} \bX^{\tT}\by\by^{\tT}\bX \end{bmatrix} = \bX^{\tT}\E \begin{bmatrix} \by\by^{\tT} \end{bmatrix}\bX$ will be a diagonal matrix if and only if $\bX$ contains the eigenvectors of $\E \begin{bmatrix} \by\by^{\tT} \end{bmatrix} = \bSigma$. Thus the search for a solution of PCA not only requires a minimum reconstruction error solution, which will be given by any basis of the subspace spanned by the dominant $K$ eigenvectors of the covariance matrix $\bSigma$, but the basis vectors should specifically be the eigenvectors of $\bSigma$. In practice the actual distribution of the samples and hence $\bSigma$ is unknown and a sample covariance matrix is used instead for PCA. For a set of samples $\{\by_t\}_{t=1}^N$, the sample covariance matrix is given by $\bC = \frac{1}{N-1}\sum_{t=1}^{N}(\by_t - \bar{\by})(\by_t - \bar{\by})^T$, where $\bar{\by} = \frac{1}{N}\sum_{t=1}^{N}\by_t$ is the sample mean. Henceforth, we shall assume $\bar{\by} = 0$ without loss of generality because the mean can otherwise be calculated and subtracted from the samples. The empirical formulation of the PCA problem in terms of samples is thus given as
\begin{align}\nonumber
    \bX  &= \underset{\bX \in \R^{d\times K}, \bX^{\tT}\bX = \bI} \argmin \quad \sum_{t=1}^N\|\by_t-\bX\bX^{\tT}\by_t\|_2^2 \\\label{eq:pca1}
    &\text{such that}\quad \forall l\neq q, \Big(\bX^{\tT}(\sum_{t=1}^{N}\by_t\by_t^{\tT})\bX\Big)_{lq} = 0.
\end{align}

A distributed setting implies that the entire data matrix $\bY = \begin{bmatrix}\by_1,\ldots,\by_N\end{bmatrix} \in \R^{d\times N}$ is unavailable at a single location. Let us consider an undirected and connected network of $M$ nodes described by a graph $\cG = \{\cV, \cE\}$, where $\cV = \{1,\ldots,M\}$ is the set of nodes and $\cE$ is the set of edges between the nodes. For each node $i$, the set of its directly connected neighbors is given by $\cN_i$. The data can be distributed among the nodes along the rows, i.e., by features or along the columns, i.e., by samples. In this paper, we consider the case when the samples $\{\by_t\}_{t=1}^N$ are scattered spatially over a network. Thus, each node $i \in \cV$ has a subset of the samples $\bY_i \in \R^{d\times N_i}$ such that $\bY = \begin{bmatrix}\bY_1,\ldots,\bY_M\end{bmatrix}$. The PCA formulation in this distributed case is: 
\begin{align}\nonumber
    \bX &= \underset{\bX \in \R^{d\times K}, \bX^{\tT}\bX = \bI} \argmin \quad \sum_{i=1}^M\|\bY_i-\bX\bX^{\tT}\bY_i\|_F^2 \\ \label{eq:pca_d}
    &\text{such that}\qquad \forall l\neq q, \Big(\bX^{\tT}(\sum_{i=1}^{M}\bY_i\bY_i^{\tT})\bX\Big)_{lq} = 0.
\end{align}

Although the formulations~\eqref{eq:pca1} and~\eqref{eq:pca_d} look similar, a major difference is the unavailability of $\bY_i$'s at a single location rendering the methods for solving~\eqref{eq:pca1} unusable directly for solving~\eqref{eq:pca_d}. Since each node carries different local data, there is a difference in local objective function even though the constraint is globally shared. This in turn leads to each node maintaining its own copy $\bX_i$ of the variable $\bX$ . As mentioned before, the goal of distributed PCA is for each node to eventually reach the same solution, i.e., achieve network consensus, given by the eigenvectors of $\bC$. Thus, the actual PCA objective for the distributed case is
\begin{align}\nonumber
&\underset{\bX_i \in \R^{d\times K}, \bX_i^{\tT}\bX_i = \bI} \argmin \quad \sum_{i=1}^M\|\bY_i-\bX_i\bX_i^{\tT}\bY_i\|_F^2 \quad \text{such that}\\ \label{eq:pca_d2}
&\forall j\in\cN_i, \bX_i = \bX_j \quad \text{and}\quad \forall l\neq q, \Big(\bX_i^{\tT}(\sum_{i=1}^{M}\bY_i\bY_i^{\tT})\bX_i\Big)_{lq} = 0.
\end{align}
Since each node $i$ has access to a subset of data points $\bY_i$ and subsequently has a local covariance matrix $\bC_i = \frac{1}{N_i}\bY_i\bY_i^\tT$, a naive solution is that each node solves its own PCA formulation as follows:
\begin{align}\nonumber
    \bX_i &= \underset{\bX_i \in \R^{d\times K}, \bX_i^{\tT}\bX_i = \bI} \argmin \quad \|\bY_i-\bX_i\bX_i^{\tT}\bY_i\|_F^2 \\ \label{eq:pca_d11}
    & \text{such that} \qquad \forall l\neq q, \Big(\bX_i^{\tT}\bY_i\bY_i^{\tT}\bX_i\Big)_{lq} = 0.
\end{align}
However, the naive solution of~\eqref{eq:pca_d11} will have major drawbacks. As explained earlier, PCA ideally aims to find the eigenvectors of covariance $\bSigma$ of the distribution the data points are sampled from but instead uses sample covariance matrix $\bC$ because $\bSigma$ is unknown in practice and $\E\begin{bmatrix}\bC\end{bmatrix} = \bSigma$. Since $\bC\rightarrow \bSigma$ as the number of samples $N$ increases, using only the local covariance matrices would incur a higher loss in the estimation of the eigenvectors. Furthermore, it is plausible that the samples at a single node are not uniformly sampled from the entire distribution and hence any estimation made using local covariances would result in a biased estimate. These reasons dictate that all the $N$ samples in the network should be incorporated somehow in the estimation of the eigenvectors for dimension reduction and decorrelation at all the nodes of the network. Additionally, in the case of sample-wise distributed data, all nodes should agree and converge to a common solution that is the same as the solution of~\eqref{eq:pca1} when all the samples are available at a single location. 

The constraint in~\eqref{eq:pca_d2} has two important properties. First, since the solution lies on the Stiefel manifold and particularly, it is a specific element of the manifold, the problem is non-convex. Although this issue can be dealt with through convex approximation of the problem~\cite{arora2013stochastic}, such an approach will result in $\cO(d^2)$ computational and memory requirements since it approximates the projection matrix of the $d\times K$ dimensional subspace and that can be restrictive in the case of high-dimensional data. At the same time, such convexification leads to a relaxed constraint that would only give a rotated basis of the subspace spanned by the eigenvectors of $\bC$ and not the eigenvectors themselves. Second, the constraint $\bX_i^{\tT}(\sum_{i=1}^M\bY_i\bY_i^{\tT})\bX_i$ being diagonal is shared by all nodes due to the reasons explained earlier. Thus meeting this global constraint requires that all nodes of the network collaborate to reach a common solution $\bX = \bX_i, \forall i \in \cV$. Hence, in this paper we propose an iterative algebraic method based on Krasulina's rule~\cite{krasulina} for distributed PCA that ensures that all nodes simultaneously converge to the eigenvectors of the global covariance matrix $\bC$ without having to share their local covariance $\bC_i$. The algorithm converges exactly to the eigenvectors of the global covariance matrix $\bC$ at a linear rate when the error is measured in terms of angles between the estimates and the true eigenvectors.

\section{Proposed Algorithm: FAST-PCA}\label{sec:algo}
Iterative solutions such as the power method, Oja's rule and Krasulina's method have proven to be powerful tools for PCA, i.e., dimension reduction and simultaneous feature decorrelation in centralized settings when the data is collocated or streaming at a single location. Although Krasulina's and Oja's method have similar update rules, in this paper we extend the Krasulina's method to develop an algorithm for distributed PCA in batch settings. The original Krasulina's method was developed as a stochastic approximation algorithm for estimating the dominant eigenvector of the expected  correlation matrix (which is same as covariance matrix for zero-mean inputs) in case of streaming data. Let $\by_t, t = 1,2,\ldots$ be data sample drawn from a zero-mean distribution at time $t$. Then Krasulina's method estimated the leading eigenvector of $\bSigma = \E\begin{bmatrix}\by_t\by_t^\tT\end{bmatrix}$ by the following update equation:
\begin{align}\label{eq:krasulina}
    \bx^{(t+1)} = \bx^{(t)} + \alpha_t\Big(\bC_t\bx^{(t)} - \frac{(\bx^{(t)})^T\bC_t\bx^{(t)}}{\|\bx^{(t)}\|^2}\bx^{(t)}\Big),
\end{align}
where $\bC_t = \by_t\by_t^\tT$ is the covariance matrix obtained from one sample and $\alpha_t$ is the step size at time $t$. It was proved in~\cite{krasulina} that if the spectral norm of $\E\begin{bmatrix}\by_t\by_t^\tT\end{bmatrix}$ remains bounded and $\sum_t \alpha_t^2$ converges to zero as $t\rightarrow \infty$, the update equation~\eqref{eq:krasulina} yields the dominant eigenvector of $\E\begin{bmatrix}\bC_t\end{bmatrix}$. 

In the distributed setup considered in this paper, samples are not streaming but distributed across a connected network of $M$ nodes, where node $i$ has access to a local covariance matrix $\bC_i$ such that $\sum_{i=1}^M\bC_i = \bC$, the global covariance matrix. It is noteworthy that $\E\begin{bmatrix}\bC_t\end{bmatrix}=\E\begin{bmatrix}\bC_i\end{bmatrix} = \bSigma$ and this similarity between streaming and distributed setting motivates the extrapolation of Krasulina's method for the distributed setting. For the dominant eigenvector $K=1$, a naive approach would be for each node to estimate an eigenvector using its local data and update rule~\eqref{eq:krasulina}. However, that would result in each node $i$ to only estimate the dominant eigenvector of $\bC_i$ whereas the goal of distributed PCA is for every node to estimate the eigenvector of the global covariance matrix $\bC$. Furthermore, since Matrix Krasulina~\cite{matrix_krasulina2019} only estimates the dominant subspace, Krasulina's method also needs to be generalized for the estimation of $K > 1$ dominant \textit{eigenvectors}. 

Estimation of the eigenvectors of $\bC$ at every node without sharing raw local covariance matrix $\bC_i$ would require some form of collaboration among the nodes of the network. As mentioned earlier, our previous work~\cite{gang.bajwa.2021} used a combine-and-adapt strategy in a way that each node converges linearly but only to a neighborhood of the true eigenvectors of the global covariance matrix $\bC$. Even though we used the generalized Hebbian algorithm~\cite{sanger}, some straightforward calculations and manipulations can show similar results for Krasulina's method. In this paper, we aim to fill that gap of inexact convergence and propose a gradient-tracking based solution~\cite{extra, qu.li.2018} that converges exactly and linearly to the true eigenvectors of $\bC$ at every node. If $\bx_{i,1}^{(t)}$ is the estimate of the dominant eigenvector at node $i$ after the $t^{th}$ iteration, then we define a pseudo-gradient at node $i$ as follows:
\begin{align}
    \bh_i(\bx_{i,1}^{(t)}) = \bC_i\bx_{i,1}^{(t)} - \frac{(\bx_{i,1}^{(t)})^T\bC_i\bx_{i,1}^{(t)}}{\|\bx_{i,1}^{(t)}\|^2}\bx_{i,1}^{(t)},
\end{align}
which is similar to the update portion of~\eqref{eq:krasulina}. Additionally, for the estimation of $k^{th}, k = 2,\ldots,K$, eigenvector we propose to generalize Krasulina's update rule along the lines of the generalized Hebbian algorithm~\cite{sanger} and combine Krasulina's method with Gram-Schmidt orthogonalization to define a general pseudo-gradient as:
\begin{align}
    \bh_i(\bx_{i,k}^{(t)}) = \bC_i\bx_{i,k}^{(t)} - \frac{(\bx_{i,k}^{(t)})^T\bC_i\bx_{i,k}^{(t)}}{\|\bx_{i,k}^{(t)}\|^2}\bx_{i,k}^{(t)} - \sum_{p=1}^{k-1}\frac{(\bx_{i,p}^{(t)})^T\bC_i\bx_{i,k}^{(t)}}{\|\bx_{i,p}^{(t)}\|^2}\bx_{i,p}^{(t)}.
\end{align}
Here, the term $\frac{(\bx_{i,p}^{(t)})^T\bC_i\bx_{i,k}^{(t)}}{\|\bx_{i,p}^{(t)}\|^2}\bx_{i,p}^{(t)}$ is analogous to Gram-Schmidt orthogonalization and enforces the orthogonality of $\bx_{i,k}^{(t)}$ to $\bx_{i,p}^{(t)}, p = 1,\ldots, k-1$. 

Let $\bX_i^{(t)} = \begin{bmatrix}\bx_{i,1}^{(t)},\ldots,\bx_{i,K}^{(t)}\end{bmatrix} \in \R^{d\times K}$ be the estimate of the $K$ eigenvectors of the global covariance matrix $\bC$. A gradient tracking-based algorithm also updates a second variable~\cite{next, qu.li.2018} in every iteration that essentially tracks the average of the gradients at the nodes. In a similar fashion, let us define a \textit{pseudo-gradient tracker} matrix $\bS_i^{(t)} = \begin{bmatrix}\bs_{i,1}^{(t)},\ldots,\bs_{i,K}^{(t)}\end{bmatrix} \in \R^{d\times K}$ that tracks the average of the pseudo-gradients at each node. These $\bS_i^{(t)}$ are updated along with the eigenvector estimates $\bX_i^{(t)}$ in each iteration of our algorithm \textit{Fast and exAct diSTributed PCA (FAST-PCA)}, which is described in Algorithm~\ref{algo:edsa}. At each node $i$, the eigenvector estimates $\bX_j^{(t)}, j \in \cN_i$, where $\cN_i$ is the set of neighbors of node $i$, are combined as a weighted average and updated with the local copy of the gradient tracker $\bS_i^{(t)}$ using a constant step size $\alpha$. Along with that, $\bS_i^{(t)}$ is also updated as a weighted average of $\bS_j^{(t)}$ and difference of pseudo-gradients. The entity $\bh_i(\bX_i^{(t)})$ in the algorithm is the matrix of the psuedo-gradients, i.e., $\bh_i(\bX_i^{(t)})=\begin{bmatrix}\bh_i(\bx_{i,1}^{(t)}),\ldots,\bh_i(\bx_{i,K}^{(t)})\end{bmatrix} \in \R^{d\times K}$. The weight matrix $\bW = \begin{bmatrix}w_{ij}\end{bmatrix}$ is a doubly stochastic matrix that conforms to the underlying graph topology~\cite{Boydfastestmixing.2003}, i.e., $w_{ij} \neq 0$ if $(i,j) \in \cE$ or $i=j$ and 0 otherwise. A necessary assumption for convergence of the algorithm here is the graph connectivity, which ensures that the magnitude of the second largest eigenvalue of $\bW$ is strictly less than 1. The gradient-tracking based solutions are recently being very popular in distributed optimization literature because of their fast and exact convergence guarantees. Our main challenge here was providing theoretical convergence guarantees inspite of the non-convex nature of the problem. In the next section, we provide detailed analysis of our proposed algorithm FAST-PCA and show that the estimates $\bx_{i,k}^{(t)}$ at each node $i$ converge at a linear rate $\cO(\rho^t), 0<\rho<1$, for any random unit-norm initialization and a certain condition on step size, to the eigenvectors $\pm \bq_k$ of the global covariance matrix $\bC$.
\begin{algorithm}[ht]
	\textbf{Input:} $\bY_1,\bY_2, \dots \bY_M, \bW, \alpha, K$\\
	\textbf{Initialize:} $\forall i, \bX_i^{(0)} \gets \bX_{\text{init}}: \bX_{\text{init}} \in \R^{d\times K}, \bX_{\text{init}}^T \bX_{\text{init}} = \bI$; $\bS_i^{(0)} \gets \bh_i(\bX_i^{(0)})$
	\begin{algorithmic}
		\For{$t=0,1,\dots$}
		    \State Communicate $\bX_i^{(t)}$ from each node $i$ to its neighbors
		    \State Subspace estimate at node $i$: $\mbox{\quad} \bX_i^{(t+1)} \gets \frac{1}{2}\bX_{i}^{(t)} + \sum_{j\in \cN_i}\frac{w_{ij}}{2}\bX_{j}^{(t)} + \alpha \bS_i^{(t)}$
		    \State Psuedo-gradient estimate at node $i$: $\mbox{\quad} \bS_i^{(t+1)} \gets \frac{1}{2}\bS_{i}^{(t)} + \sum_{j\in \cN_i}\frac{w_{ij}}{2}\bS_{j}^{(t)} + \bh_i(\bX_i^{(t+1)}) - \bh_i(\bX_i^{(t)})$
		\EndFor
	\end{algorithmic}
	{\bf Return:} $\tilde{\bX}_i^{(t+1)}= \begin{bmatrix}\frac{\bx_{i,1}^{(t+1)}}{\|\bx_{i,1}^{(t+1)}\|},\ldots,\frac{\bx_{i,K}^{(t+1)}}{\|\bx_{i,K}^{(t+1)}\|}\end{bmatrix}, i = 1,2, \dots, M$
	\caption{Fast and exAct diSTributed PCA (FAST-PCA)}
	\label{algo:edsa}
\end{algorithm}

\section{Convergence Analysis}\label{sec:analysis_fastpca}
This section entails detailed analysis for our proposed FAST-PCA algorithm. The first subsection is an auxiliary result followed by the main result in the following subsection.
\subsection{Auxiliary Result}\label{sec:analysis_centralized}
In this subsection, we provide an intermediary result that will help the analysis of our proposed algorithm. 
Let $\bC \in \R^{d\times d}$ be a covariance matrix whose eigenvectors are $\bq_l, l = 1,\ldots, d$, with corresponding eigenvalues $\lambda_l$. With an aim to estimate the first $K$ eigenvectors of $\bC$, we define a general update rule of the following form:
\begin{align}\nonumber
	&\bx_{g,k}^{(t+1)} = \bx_{g,k}^{(t)}  + \alpha\big(\bC\bx_{g,k}^{(t)} - \frac{(\bx_{g,k}^{(t)})^T\bC\bx_{g,k}^{(t)}}{\|\bx_{g,k}^{(t)}\|^2}\bx_{g,k}^{(t)} \\
	& \qquad \qquad \qquad \qquad \qquad \qquad \qquad  - \sum_{p=1}^{k-1}\bq_p\bq_p^T\bC\bx_{g,k}^{(t)}\big)\\\label{eq:centralized_sangerk}
	&= \bx_{g,k}^{(t)}  + \alpha\big(\bC\bx_{g,k}^{(t)} - \frac{(\bx_{g,k}^{(t)})^T\bC\bx_{g,k}^{(t)}}{\|\bx_{g,k}^{(t)}\|^2}\bx_{g,k}^{(t)} - \sum_{p=1}^{k-1}\lambda_p\bq_p\bq_p^T\bx_{g,k}^{(t)}\big),
\end{align}
where $\alpha$ is a constant step size. Note that this is not an algorithm in the true sense of the term as it cannot be implemented because of its dependence on the true eigenvectors $\bq_p$. The sole purpose of this update equation is to help in our ultimate goal of providing convergence guarantee for the FAST-PCA algorithm.

Since $\bq_l, l = 1, \ldots, d$, are the eigenvectors of a real symmetric matrix, they form a basis for $d$-dimensional space and can be used for expansion of any vector $\bx \in \R^d$. Let
\begin{equation}\label{eq:expansion_centralized}
	\tilde{\bx}_{g,k}^{(t)} = \frac{{\bx}_{g,k}^{(t)}}{\|{\bx}_{g,k}^{(t)}\|}=\sum_{l=1}^{d}z_{k,l}^{(t)}\bq_l,
\end{equation}
where $z_{k,l}^{(t)}$ is the coefficient corresponding to the eigenvector $\bq_l$ in the expansion of $\tilde{\bx}_{g,k}^{(t)}$. 
The update equation~\eqref{eq:centralized_sangerk} can be re-written as:
\begin{align}\nonumber
    \frac{{\bx}_{g,k}^{(t+1)}}{\|{\bx}_{g,k}^{(t+1)}\|} &= \Bigg(\frac{{\bx}_{g,k}^{(t)}}{\|{\bx}_{g,k}^{(t)}\|}  + \alpha\big(\bC\frac{{\bx}_{g,k}^{(t)}}{\|{\bx}_{g,k}^{(t)}\|} - \frac{(\bx_{g,k}^{(t)})^T\bC\bx_{g,k}^{(t)}}{\|\bx_{g,k}^{(t)}\|^2}\frac{{\bx}_{g,k}^{(t)}}{\|{\bx}_{g,k}^{(t)}\|} \\
    & \qquad \quad \quad - \sum_{p=1}^{k-1}\lambda_p\bq_p\bq_p^T\frac{{\bx}_{g,k}^{(t)}}{\|{\bx}_{g,k}^{(t)}\|}\big)\Bigg)\frac{\|{\bx}_{g,k}^{(t)}\|}{\|{\bx}_{g,k}^{(t+1)}\|}\\ \nonumber
    \tilde{\bx}_{g,k}^{(t+1)} &= \Bigg(\tilde{\bx}_{g,k}^{(t)}  + \alpha\big(\bC\tilde{\bx}_{g,k}^{(t)} - \frac{(\bx_{g,k}^{(t)})^T\bC\bx_{g,k}^{(t)}}{\|\bx_{g,k}^{(t)}\|^2}\tilde{\bx}_{g,k}^{(t)} \\ \label{eq:centralized_sangerk1}
    & \qquad \qquad \qquad \quad - \sum_{p=1}^{k-1}\lambda_p\bq_p\bq_p^T\tilde{\bx}_{g,k}^{(t)}\big)\Bigg)a_k^{(t)},
\end{align}
where $a_k^{(t)}= \frac{\|{\bx}_{g,k}^{(t)}\|}{\|{\bx}_{g,k}^{(t+1)}\|}$. Multiplying both sides of \eqref{eq:centralized_sangerk1} by $\bq_l^T$ and using the fact that $\bq_l^T\bq_{l'} = 0$ for $l \neq l'$, we get 
\begin{align*}
	{z}_{k,l}^{(t+1)} &= a_k^{(t)}\big({z}_{k,l}^{(t)} + \alpha(\bq_l^T\bC\tilde{\bx}_{g,k}^{(t)} - \bq_l^T(\sum_{p=1}^{k-1}\lambda_p\bq_p\bq_p^T\tilde{\bx}_{g,k}^{(t)}) \\
	& \qquad \qquad \qquad \qquad \qquad - (\tilde{\bx}_{g,k}^{(t)})^T\bC\tilde{\bx}_{g,k}^{(t)}{z}_{k,l}^{(t)})\big).
\end{align*}
This gives
\begin{align} \nonumber
    {z}_{k,l}^{(t+1)} &= a_k^{(t)}\big({z}_{k,l}^{(t)} - \alpha(\tilde{\bx}_{g,k}^{(t)})^T\bC\tilde{\bx}_{g,k}^{(t)}{z}_{k,l}^{(t)}\big), \\ \label{eq:eqn_z_lower}
    & \qquad \qquad \qquad \text{for} \quad l= 1, \ldots k - 1, \\ \nonumber
    \text{and} \quad {z}_{k,l}^{(t+1)} &= a_k^{(t)}\big({z}_{k,l}^{(t)} + \alpha(\lambda_l - (\tilde{\bx}_{g,k}^{(t)})^T\bC\tilde{\bx}_{g,k}^{(t)}){z}_{k,l}^{(t)}\big), \\ \label{eq:eqn_z_upper}
    & \qquad \qquad \qquad \text{for} \quad l= k, \ldots d.
\end{align}
In the following theorem, we show that $\bx_{g,k}^{(t)}$ converges to a multiple of the true eigenvector $\bq_k$ by proving convergence of the coefficients ${z}_{k,l}^{(t)}$ for $l=1,\ldots,d$.
\begin{theorem} \label{theorem:convergence_centralized}
    Suppose $\bC$ has $K$ distinct eigenvalues, i.e., $\lambda_1 > \lambda_2 > \cdots > \lambda_K > \lambda_{K+1} \geq \cdots \geq \lambda_d \geq 0$ and $\alpha < \frac{1}{\lambda_1}$, $\bq_k^T\bx_{g,k}^{(0)} \neq 0$, and $\|\bx_{g,k}^{(0)}\| = 1$ for all $k = 1,\ldots, K$. Then the update equation for $\bx_{g,k}^{(t)}$ given by \eqref{eq:centralized_sangerk} converges at a linear rate to a multiple of the eigenvector $\pm\bq_k$ corresponding to the $k^{th}$ largest eigenvalue $\lambda_k$ of the covariance matrix $\bC$ for $k=1,\ldots,K$.
\end{theorem}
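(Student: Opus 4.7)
The key observation that makes this theorem tractable is that, unlike an implementable deflation scheme that would use running estimates, the update rule~\eqref{eq:centralized_sangerk} uses the \emph{true} eigenvectors $\bq_p$ in its deflation term. Hence each $k$ can be analyzed independently---no induction on $k$ is required---and the problem reduces to controlling a scalar recursion in the eigenbasis. My plan is to work entirely with the coordinates $z_{k,l}^{(t)}$ from the expansion~\eqref{eq:expansion_centralized}, showing that the ``non-$k$'' coordinates decay geometrically relative to the ``$k$''-th coordinate, which is exactly what is needed because $\tilde{\bx}_{g,k}^{(t)}$ has unit norm.

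Concretely, from~\eqref{eq:eqn_z_lower}--\eqref{eq:eqn_z_upper}, for every $l\neq k$ the ratio $\xi_{k,l}^{(t)} := z_{k,l}^{(t)}/z_{k,k}^{(t)}$ satisfies $\xi_{k,l}^{(t+1)} = \beta_{k,l}^{(t)}\,\xi_{k,l}^{(t)}$, where the normalizer $a_k^{(t)}$ cancels and
\[
\beta_{k,l}^{(t)} = \begin{cases}
\dfrac{1-\alpha r_k^{(t)}}{1+\alpha(\lambda_k-r_k^{(t)})}, & l<k,\\[1ex]
\dfrac{1+\alpha(\lambda_l-r_k^{(t)})}{1+\alpha(\lambda_k-r_k^{(t)})}, & l>k,
\end{cases}
\qquad r_k^{(t)} := (\tilde{\bx}_{g,k}^{(t)})^\tT\bC\tilde{\bx}_{g,k}^{(t)}.
\]
Since $\tilde{\bx}_{g,k}^{(t)}$ is a unit vector and $\bC\succeq 0$, the Rayleigh quotient satisfies $r_k^{(t)}\in[0,\lambda_1]$ for every $t$. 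I will then show that $\alpha<1/\lambda_1$ forces the denominator $1+\alpha(\lambda_k-r_k^{(t)})$ to stay in a fixed positive interval, independent of $t$, and that the numerator is bounded so that $|\beta_{k,l}^{(t)}|\leq\rho_{k,l}<1$ for a constant $\rho_{k,l}$ depending only on $\alpha,\lambda_k,\lambda_l$ and the global bound $\lambda_1$. For $l>k$ this uses the strict gap $\lambda_k-\lambda_l>0$ (given by the distinct-eigenvalue hypothesis), and for $l<k$ it uses $\lambda_k>0$, which is guaranteed because $\lambda_k\geq\lambda_{K+1}\geq 0$ together with the distinctness $\lambda_{k-1}>\lambda_k$.

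To make the argument rigorous I still have to verify that the ratio $\xi_{k,l}^{(t)}$ is well-defined for every $t$, i.e.\ that $z_{k,k}^{(t)}\neq 0$ throughout. From~\eqref{eq:eqn_z_upper} with $l=k$, $z_{k,k}^{(t+1)}=a_k^{(t)}z_{k,k}^{(t)}(1+\alpha(\lambda_k-r_k^{(t)}))$, and the bounds above give both $a_k^{(t)}>0$ and $1+\alpha(\lambda_k-r_k^{(t)})>0$, so the sign and non-vanishing of $z_{k,k}^{(t)}$ propagate from the initial condition $\bq_k^\tT\bx_{g,k}^{(0)}=z_{k,k}^{(0)}\neq 0$. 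Iterating $\xi_{k,l}^{(t)}=\prod_{s=0}^{t-1}\beta_{k,l}^{(s)}\,\xi_{k,l}^{(0)}$ then yields $|\xi_{k,l}^{(t)}|\leq \rho_{k,l}^{\,t}|\xi_{k,l}^{(0)}|$, so all off-target coordinates vanish geometrically. Combined with $\sum_l (z_{k,l}^{(t)})^2=1$, this forces $(z_{k,k}^{(t)})^2\to 1$ and $\sum_{l\neq k}(z_{k,l}^{(t)})^2\to 0$ at the rate $\max_{l\neq k}\rho_{k,l}^{\,2t}$, giving $\tilde{\bx}_{g,k}^{(t)}\to\sgn(z_{k,k}^{(0)})\,\bq_k$ linearly and hence $\bx_{g,k}^{(t)}\to c_k\bq_k$ for a nonzero scalar $c_k$.

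The main obstacle I anticipate is the time-varying Rayleigh quotient $r_k^{(t)}$ inside $\beta_{k,l}^{(t)}$: unlike a linear dynamical system, the contraction factor depends on the iterate itself. The resolution is to replace it with the \emph{worst-case} value over $r\in[0,\lambda_1]$, turning the pointwise bound into a uniform one; this is clean for $l>k$ but for $l<k$ one must check carefully that $\alpha<1/\lambda_1$ keeps $1-\alpha r_k^{(t)}$ strictly smaller than $1+\alpha(\lambda_k-r_k^{(t)})$ in absolute value, i.e.\ that $|\beta_{k,l}^{(t)}|<1$ and not merely $\beta_{k,l}^{(t)}<1$. A short case analysis on the sign of $1-\alpha r_k^{(t)}$ (using $\alpha\lambda_1<1$) closes this gap, after which the linear-rate conclusion and the $\pm\bq_k$ ambiguity (inherited from the sign of $z_{k,k}^{(0)}$) follow immediately.
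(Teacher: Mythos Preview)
Your proposal is correct and follows essentially the same route as the paper: the paper's Lemmas~\ref{lemma:coeff_decay_lower} and~\ref{lemma:coeff_decay_upper} carry out precisely the ratio analysis $(z_{k,l}^{(t)}/z_{k,k}^{(t)})^2$ that you describe, bounding the time-varying contraction factor by its worst case over $r_k^{(t)}\in[0,\lambda_1]$ to obtain the rates $(1+\alpha\lambda_k)^{-2}$ and $((1+\alpha\lambda_{k+1})/(1+\alpha\lambda_k))^2$. Your explicit check that $z_{k,k}^{(t)}\neq 0$ for all $t$ is a detail the paper uses but does not state; note also that the ``case analysis on the sign of $1-\alpha r_k^{(t)}$'' you anticipate is unnecessary, since $\alpha<1/\lambda_1$ and $r_k^{(t)}\leq\lambda_1$ already force $1-\alpha r_k^{(t)}>0$.
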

\begin{proof} We prove the linear convergence of $\bx_{g,k}^{(t)}$ to a multiple of $\bq_k$ by proving that $\tilde{\bx}_{g,k}^{(t)}$ converges to $\bq_k$ at a linear rate. The convergence of $\tilde{\bx}_{g,k}^{(t)}$ to $\bq_k$ requires convergence of the lower-order coefficients $z_{k,1}^{(t)}, \ldots, z_{k,k-1}^{(t)}$ and the higher-order coefficients $z_{k,k+1}^{(t)}, \ldots, z_{k,d}^{(t)}$ to 0 and convergence of $z_{k,k}^{(t)}$ to $\pm 1$. 
To this end, Lemma~\ref{lemma:coeff_decay_lower} in the appendix proves linear convergence of the lower-order coefficients $z_{k,1}^{(t)}, \ldots, z_{k,k-1}^{(t)}$ to 0 by showing $\sum_{l=1}^{k-1}(z_{k,l}^{(t)})^2 \leq a_1\gamma_k^{t}$ for some constants $a_1 > 0, \gamma_k = \frac{1}{1+\alpha\lambda_k} < 1$. Furthermore, Lemma~\ref{lemma:coeff_decay_upper} in the appendix shows that $\sum_{l=k+1}^{d}({z}_{k,l}^{(t)})^2 \leq a_2\delta_k^{t}$, where $a_2 > 0$ and $\delta_k = \frac{1+\alpha\lambda_{k+1}}{1+\alpha\lambda_k} < 1$, thereby proving linear convergence of the higher-order coefficients to 0. 
The formal statements and proofs of Lemma~\ref{lemma:coeff_decay_lower} and Lemma~\ref{lemma:coeff_decay_upper} are given in Appendix~\ref{app:lemma3} and Appendix~\ref{app:lemma4}, respectively. Finally, since $\|\tilde{\bx}_{g,k}^{(t)}\|^2=1$, we have 
\begin{align*}
    \sum_{l=1}^d (z_{k,l}^{(t)})^2 &= 1 \\
    \text{or,}\quad 1 - (z_{k,k}^{(t)})^2 &= \sum_{l=1}^{k-1} (z_{k,l}^{(t)})^2 + \sum_{l=k+1}^d (z_{k,l}^{(t)})^2\\
    &\leq a_1\gamma_k^{t} + a_2\delta_k^{t} \\
    &< a_3\delta_k^{t}, 
\end{align*}
where $a_3 = \max\{a_1,a_2\}$ and $\delta_k = \max\{\gamma_k,\delta_k\}$. This shows $(z_{k,k}^{(t)})^2$ converges to 1 and $(z_{l,k}^{(t)})^2, l \neq k$, converges to 0 at a linear rate of $\cO(\delta_k^t)$ where $\delta_k = \frac{1+\alpha\lambda_{k+1}}{1+\alpha\lambda_k}$. Thus, $\tilde{\bx}_{g,k}^{(t)} \rightarrow \pm \bq_k$ and $(\tilde{\bx}_{g,k}^{(t)})^T\bC\tilde{\bx}_{g,k}^{(t)} \rightarrow \lambda_k$. We also know from~\eqref{eq:centralized_sangerk} that
\begin{align}\nonumber
	&\bx_{g,k}^{(t+1)} = \bx_{g,k}^{(t)}  + \alpha\big((\bC -  \sum_{p=1}^{k-1}\lambda_p\bq_p\bq_p^T)\bx_{g,k}^{(t)} - \frac{(\bx_{g,k}^{(t)})^T\bC\bx_{g,k}^{(t)}}{\|\bx_{g,k}^{(t)}\|^2}\bx_{g,k}^{(t)} \big)\\\nonumber
	&\text{i.e.,}\quad \|\bx_{g,k}^{(t+1)}\|^2 = \|\bx_{g,k}^{(t)}\|^2  + \alpha^2\|\big((\bC -  \sum_{p=1}^{k-1}\lambda_p\bq_p\bq_p^T){\bx}_{g,k}^{(t)}  \\\nonumber
	& \qquad \qquad \qquad \qquad \qquad - \frac{(\bx_{g,k}^{(t)})^T\bC\bx_{g,k}^{(t)}}{\|\bx_{g,k}^{(t)}\|^2}{\bx}_{g,k}^{(t)} \big)\|^2 - \\ \nonumber
	& 2\alpha(\bx_{g,k}^{(t)})^T\big((\bC - \sum_{p=1}^{k-1}\lambda_p\bq_p\bq_p^T){\bx}_{g,k}^{(t)} - \frac{(\bx_{g,k}^{(t)})^T\bC\bx_{g,k}^{(t)}}{\|\bx_{g,k}^{(t)}\|^2}{\bx}_{g,k}^{(t)} \big) \\\nonumber
	&= \|\bx_{g,k}^{(t)}\|^2  + \alpha^2\|\big((\bC -  \sum_{p=1}^{k-1}\lambda_p\bq_p\bq_p^T){\bx}_{g,k}^{(t)} - \\ \nonumber 
	& \qquad \quad \frac{(\bx_{g,k}^{(t)})^T\bC\bx_{g,k}^{(t)}}{\|\bx_{g,k}^{(t)}\|^2}{\bx}_{g,k}^{(t)} \big)\|^2  + 2\alpha \sum_{p=1}^{k-1}\lambda_p({\bx}_{g,k}^{(t)})^T\bq_p\bq_p^T{\bx}_{g,k}^{(t)} \\ \nonumber
	&= \|\bx_{g,k}^{(t)}\|^2  + \alpha^2\|(\bC -  \sum_{p=1}^{k-1}\lambda_p\bq_p\bq_p^T)\tilde{\bx}_{g,k}^{(t)}\|{\bx}_{g,k}^{(t)}\|   \\ \nonumber
	& \qquad \qquad \qquad - \frac{(\bx_{g,k}^{(t)})^T\bC\bx_{g,k}^{(t)}}{\|\bx_{g,k}^{(t)}\|^2}\tilde{\bx}_{g,k}^{(t)}\|{\bx}_{g,k}^{(t)}\| \|^2  \\ \label{eq:general_kras}
	& \qquad \qquad \qquad + 2\alpha \|{\bx}_{g,k}^{(t)}\|^2\sum_{p=1}^{k-1}\lambda_p(\tilde{\bx}_{g,k}^{(t)})^T\bq_p\bq_p^T\tilde{\bx}_{g,k}^{(t)} 
\end{align}
As $\tilde{\bx}_{g,k}^{(t)} \rightarrow \pm \bq_k$ and $\frac{(\bx_{g,k}^{(t)})^T\bC\bx_{g,k}^{(t)}}{\|\bx_{g,k}^{(t)}\|^2} \rightarrow \lambda_k$, we have 
\begin{align*}
    (\bC -  \sum_{p=1}^{k-1}\lambda_p\bq_p\bq_p^T)\tilde{\bx}_{g,k}^{(t)}\|{\bx}_{g,k}^{(t)}\| \rightarrow \pm\bC\bq_k\|{\bx}_{g,k}^{(t)}\| = \pm \lambda_k\bq_k\|{\bx}_{g,k}^{(t)}\|. 
\end{align*}
and,
\begin{align*}
    \sum_{p=1}^{k-1}\lambda_p(\tilde{\bx}_{g,k}^{(t)})^T\bq_p\bq_p^T\tilde{\bx}_{g,k}^{(t)} \rightarrow 0.
\end{align*}
Thus from~\eqref{eq:general_kras}, we get
\begin{align*}
    \|\bx_{g,k}^{(t+1)}\|^2 - \|\bx_{g,k}^{(t)}\|^2  \rightarrow 0,
\end{align*}
which implies $\|\bx_{g,k}^{(t)}\|$ converges to some constant $c_k > 0$ which further implies ${\bx}_{g,k}^{(t)} \rightarrow \pm c_k\bq_k$.\end{proof}
\subsection{Analysis of FAST-PCA}
In this subsection, we provide a detailed analysis proving that the FAST-PCA algorithm converges at a linear rate to the true eigenvectors $\bq_k, k = 1,\ldots,K$, of the global covariance matrix $\bC$. Specifically, let $\bX_i^{(t)} = \begin{bmatrix}\bx_{i,1}^{(t)},\ldots,\bx_{i,K}^{(t)}\end{bmatrix} \in \R^{d\times K}$ be the estimate of the $K$ eigenvectors at node $i$, then we show that square of the sine of the angle between $\bx_{i,k}^{(t)}, \forall i = 1,\ldots,M$, and $\bq_k$ for $k=1,\ldots,K$ converges to 0 at the rate of $\cO(\rho^t)$ for some $\rho \in (0,1)$. 

We know from Theorem~\ref{theorem:convergence_centralized} in the previous section that for estimation of the $k^{th}$ eigenvector, any general iterate of the form
\begin{equation}\label{eq:centralized_adsa_kras}
    \bx_{g,k}^{(t+1)} = \bx_{g,k}^{(t)} +  \alpha\Big(\bC\bx_{g,k}^{(t)} - \frac{(\bx_{g,k}^{(t)})^T\bC\bx_{g,k}^{(t)}}{\|\bx_{g,k}^{(t)}\|^2}\bx_{g,k}^{(t)} - \sum_{p=1}^{k-1}\bq_p\bq_p^T\bC\bx_{g,k}^{(t)}\Big)
\end{equation}
converges at a linear rate to a scalar multiple of the eigenvector $\bq_k$ of $\bC$ if the top $K$ eigenvalues of $\bC$ are distinct, i.e., $\lambda_1 > \lambda_2 > \cdots > \lambda_K > \lambda_{K+1} \geq \cdots \geq \lambda_d \geq 0$ as well as if $\alpha < \frac{1}{\lambda_1}$ and $\bq_k^T\bx_{g,k}^{(0)} \neq 0$. Specifically, $\bx_{g,k}^{(t)}$ converges to either $c_k\bq_k$ or $-c_k\bq_k$ at a linear rate in this case. Mathematically,
\begin{align}\nonumber
	& \|\bx_{g,k}^{(t+1)} - \bx_k^*\| \leq \delta_k\|\bx_{g,k}^{(t)} - \bx_k^*\|, \quad \text{for} \\ \label{eq:convergence_kras}
	& \quad 0< \delta_k = \frac{1+\alpha\lambda_{k+1}}{1+\alpha\lambda_{k}} < 1 \quad \text{and} \quad \bx_k^* = c_k\bq_k \quad \text{or}\quad -c_k\bq_k.
\end{align}
Now, if $\bW = [w_{ij}]$ is the weight matrix underlying the graph representing the network, then the iterates of FAST-PCA for the estimation of the $k^{th}$ eigenvector are given as follows:
\begin{align}\label{eq:edsa_xi}
	\bx_{i,k}^{(t+1)} &= \frac{1}{2}\bx_{i,k}^{(t)} + \sum_{j\in \cN_i}\frac{w_{ij}}{2}\bx_{j,k}^{(t)} + \alpha\bs_{i,k}^{(t)}\\ \label{eq:edsa_si}
	\bs_{i,k}^{(t+1)} &= \frac{1}{2}\bs_{i,k}^{(t)}+ \sum_{j\in \cN_i}\frac{w_{ij}}{2}\bs_{j,k}^{(t)} + \bh_i(\bx_{i,k}^{(t+1)}) - \bh_i(\bx_{i,k}^{(t)}), 
\end{align}
where $\bx_{i,k}^{(t)}$ is the estimate of the $k^{th}$ eigenvector, $\bh_i(\bx_{i,k}^{(t)})$ is the pseudo-gradient given as $\bh_i(\bx_{i,k}^{(t)}) = \bC_i\bx_{i,k}^{(t)} - \frac{(\bx_{i,k}^{(t)})^T\bC_i\bx_{i,k}^{(t)}}{\|\bx_{i,k}^{(t)}\|^2}\bx_{i,k}^{(t)} - \sum_{p=1}^{k-1}\frac{\bx_{i,p}^{(t)}(\bx_{i,p}^{(t)})^T\bC_i}{\|\bx_{i,p}^{(t)}\|^2}\bx_{i,k}^{(t)}$ and $\bs_{i,k}^{(t)}$ is the estimate of the average pseudo-gradients. Let us define the following stacked versions of the quantities $\bx_{i,k}^{(t)}, \bs_{i,k}^{(t)}$, and $\bh_i(\bx_{i,k}^{(t)})$ for  $i = 1,\ldots,M$ as
\[
\bx_{k}^{(t)} = \begin{bmatrix}
	\bx_{1,k}^{(t)}\\
	\bx_{2,k}^{(t)}\\
	\vdots\\
	\bx_{M,k}^{(t)}\\
\end{bmatrix} ,\quad
\bh(\bx_{k}^{(t)})  = \begin{bmatrix}
	\bh_1(\bx_{1,k}^{(t)})\\
	\bh_2(\bx_{2,k}^{(t)})\\
	\vdots\\
	\bh_M(\bx_{M,k}^{(t)})\\
\end{bmatrix} ,  \quad
\bs_k^{(t)}  = \begin{bmatrix}
	\bs_{1,k}^{(t)}\\
	\bs_{2,k}^{(t)}\\
	\vdots \\	
	\bs_{M,k}^{(t)}\\
\end{bmatrix}. \]
Let $\bar{\bx}_{k}^{(t)}$ and $\bar{\bs}_{k}^{(t)}$ denote the average of $\{\bx_{i,k}^{(t)}\}_{i=1}^M$ and $\{\bs_{i,k}^{(t)}\}_{i=1}^M$, respectively. Taking the average of~\eqref{eq:edsa_xi} and~\eqref{eq:edsa_si} over all nodes $i=1,\ldots,M$, we get 
\begin{align}\label{eq:edsa_xavg}
	\frac{1}{M}\sum_{i=1}^{M}\bx_{i,k}^{(t+1)} &= \bar{\bx}_{k}^{(t+1)} = \bar{\bx}_{k}^{(t)} + \alpha\bar{\bs}_{k}^{(t)}\\ \label{eq:edsa_savg}
	\frac{1}{M}\sum_{i=1}^{M}\bs_{i,k}^{(t+1)} &= \bar{\bs}_{k}^{(t+1)} = \bar{\bs}_{k}^{(t)} + \bg(\bx_{k}^{(t+1)}) - \bg(\bx_{k}^{(t)}) = \bg(\bx_{k}^{(t+1)}),
\end{align}
where $\bg(\bx_k^{(t)}) = \frac{1}{M}\sum_{i=1}^{M} \bh_i(\bx_{i,k}^{(t)}) \in \R^d$. 
Additionally, we also define the following stacked versions (denoted by subscript `s') such that all these are in $\R^{Md}$:
\[
\bar{\bx}_{s,k}^{(t)}  = \begin{bmatrix}
	\bar{\bx}_{k}^{(t)}\\
	\bar{\bx}_{k}^{(t)}\\
	\vdots \\
	\bar{\bx}_{k}^{(t)}\\
\end{bmatrix},\quad
\bar{\bs}_{s,k}^{(t)}  = \begin{bmatrix}
	\bar{\bs}_{k}^{(t)}\\
	\bar{\bs}_{k}^{(t)}\\
	\vdots\\
	\bar{\bs}_{k}^{(t)}\\
\end{bmatrix},\quad
\bg_{s}(\bx_k^{(t)}) = \begin{bmatrix}
	\bg(\bx_{k}^{(t)})\\
	\bg(\bx_{k}^{(t)})\\
	\vdots\\
	\bg(\bx_{k}^{(t)})\\
\end{bmatrix}.
\]
Using these definitions,~\eqref{eq:edsa_xi} and~\eqref{eq:edsa_si} can be re-written as
\begin{align} \label{eq:adsax_stacked}
	\bx_k^{(t+1)} &= \frac{1}{2}((\bI_M+\bW)\otimes\bI_d)\bx_k^{(t)} + \alpha \bs_k^{(t)}, \\ \label{eq:adsas_stacked}
	\bs_k^{(t+1)} &= \frac{1}{2}((\bI_M+\bW)\otimes\bI_d)\bs_k^{(t)} + \bh(\bx_k^{(t+1)}) - \bh(\bx_k^{(t)}).
\end{align}
Also,
\begin{align}
	\label{eq:adsas_stacked_avg}
	\bar{\bs}_{s,k}^{(t+1)} &= \bar{\bs}_{s,k}^{(t)} + \bg_s(\bx_k^{(t+1)}) - \bg_s(\bx_k^{(t)}) = \bg_s(\bx_k^{(t+1)})\\
	\label{eq:adsax_stacked_avg}
	\bar{\bx}_{s,k}^{(t+1)} &= \bar{\bx}_{s,k}^{(t)} + \alpha \bar{\bs}_{s,k}^{(t)} = \bar{\bx}_{s,k}^{(t)} + \alpha \bg_s(\bx_k^{(t)})\\\nonumber
	{\bg}(\bar{\bx}_{s,k}^{(t)}) &= \frac{1}{M}\sum_{i=1}^{M} \Big(\bC_i\bar{\bx}_{k}^{(t)} - \frac{(\bar{\bx}_{k}^{(t)})^T\bC_i\bar{\bx}_{k}^{(t)}}{\|\bar{\bx}_{k}^{(t)}\|^2}\bar{\bx}_{k}^{(t)}- \\\nonumber
	&\qquad \qquad \qquad \qquad \sum_{p=1}^{k-1}\frac{\bx_{i,p}^{(t)}(\bx_{i,p}^{(t)})^T\bC_i}{\|\bx_{i,p}^{(t)}\|^2}\bar{\bx}_{k}^{(t)}\Big) \\ \nonumber
	&= \frac{1}{M}\Big(\bC\bar{\bx}_{k}^{(t)} - \frac{(\bar{\bx}_{k}^{(t)})^T\bC\bar{\bx}_{k}^{(t)}}{\|\bar{\bx}_{k}^{(t)}\|^2}\bar{\bx}_{k}^{(t)} - \\\label{eq:avg_g}
	&\qquad \qquad \qquad \quad \sum_{i=1}^{M} \sum_{p=1}^{k-1}\frac{\bx_{i,p}^{(t)}(\bx_{i,p}^{(t)})^T\bC_i}{\|\bx_{i,p}^{(t)}\|^2}\bar{\bx}_{k}^{(t)}\Big).
\end{align}
Now, we first show that $\bx_{i,1}^{(t)}$ converges to a multiple of $\bq_1$ at a linear rate and then proceed with the proof for $k=2,\ldots,K$ through induction.

\textbf{\textit{Case I for Induction -- $k=1$}:} The iterates of FAST-PCA for estimation of the dominant eigenvector are
\begin{align}\label{eq:edsa_xi1}
	\bx_{i,1}^{(t+1)} &= \frac{1}{2}\bx_{i,1}^{(t)} + \sum_{j\in \cN_i}\frac{w_{ij}}{2}\bx_{j,1}^{(t)} + \alpha\bs_{i,1}^{(t)}\\ \label{eq:edsa_si1}
	\bs_{i,1}^{(t+1)} &= \frac{1}{2}\bs_{i,1}^{(t)}+ \sum_{j\in \cN_i}\frac{w_{ij}}{2}\bs_{j,1}^{(t)} + \bh_i(\bx_{i,1}^{(t+1)}) - \bh_i(\bx_{i,1}^{(t)}) 
\end{align}
where $\bh_i(\bx_{i,1}^{(t)}) = \bC_i\bx_{i,1}^{(t)} - \frac{(\bx_{i,1}^{(t)})^T\bC_i\bx_{i,1}^{(t)}}{\|\bx_{i,1}^{(t)}\|^2}\bx_{i,1}^{(t)}$.
\begin{lemma}\label{lemma:lipshitz_continuous}
    The function $\bh_i: \R^d \rightarrow \R^d$ with $\bh_i(\bv) = \bC_i\bv - \frac{(\bv)^T\bC_i\bv}{\|\bv\|^2}\bv$ is Lipschitz continuous with Lipschitz constant $L_1 = 6\lambda_1$.
\end{lemma}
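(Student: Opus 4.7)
The plan is to prove the Lipschitz bound by a direct algebraic decomposition of $\bh_i(\bu) - \bh_i(\bv)$ and to control the ``Rayleigh quotient'' term by exploiting the $1$-homogeneity of $\bh_i$. Let $\rho_i(\bw) \defn \frac{\bw^\tT \bC_i \bw}{\|\bw\|^2}$ for $\bw \neq \bzero$, so that $\bh_i(\bw) = \bC_i \bw - \rho_i(\bw)\bw$. First I would write the telescoping identity
\begin{align*}
    \bh_i(\bu) - \bh_i(\bv) = \bC_i(\bu - \bv) - \rho_i(\bu)(\bu - \bv) - \bigl(\rho_i(\bu) - \rho_i(\bv)\bigr)\bv,
\end{align*}
which splits the difference into three terms that can be bounded separately.

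For the first two terms, the bounds $\|\bC_i\| \leq \lambda_1$ and $|\rho_i(\bu)| \leq \|\bC_i\| \leq \lambda_1$ (the latter being the standard Rayleigh quotient bound) immediately give
\begin{align*}
    \|\bC_i(\bu-\bv)\| + |\rho_i(\bu)|\,\|\bu-\bv\| \leq 2\lambda_1 \|\bu - \bv\|.
\end{align*}
The main obstacle is the third term $|\rho_i(\bu) - \rho_i(\bv)|\,\|\bv\|$, since $\rho_i$ involves a division by $\|\bw\|^2$ which can potentially be small. To handle this, I would exploit the fact that $\rho_i(\bw)$ depends only on the direction $\tilde{\bw} \defn \bw/\|\bw\|$, writing
\begin{align*}
    \rho_i(\bu) - \rho_i(\bv) = \tilde{\bu}^\tT \bC_i \tilde{\bu} - \tilde{\bv}^\tT \bC_i \tilde{\bv} = (\tilde{\bu}-\tilde{\bv})^\tT \bC_i \tilde{\bu} + \tilde{\bv}^\tT \bC_i (\tilde{\bu} - \tilde{\bv}),
\end{align*}
from which $|\rho_i(\bu) - \rho_i(\bv)| \leq 2\lambda_1 \|\tilde{\bu} - \tilde{\bv}\|$.

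To complete the argument, I would invoke the standard inequality
\begin{align*}
    \left\|\frac{\bu}{\|\bu\|} - \frac{\bv}{\|\bv\|}\right\| \leq \frac{2\|\bu - \bv\|}{\max(\|\bu\|,\|\bv\|)},
\end{align*}
which follows from adding and subtracting $\bv/\|\bu\|$ and applying the reverse triangle inequality. Multiplying by $\|\bv\|$ gives $\|\tilde{\bu} - \tilde{\bv}\|\,\|\bv\| \leq 2\|\bu - \bv\|$, so the third term is bounded by $4\lambda_1 \|\bu - \bv\|$. Summing the three contributions yields $\|\bh_i(\bu) - \bh_i(\bv)\| \leq 6\lambda_1 \|\bu - \bv\|$, as desired. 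The degenerate cases $\bu = \bzero$ or $\bv = \bzero$ can be handled by continuous extension of $\bh_i$ (with $\bh_i(\bzero) \defn \bzero$, consistent with $1$-homogeneity) and a limiting argument.
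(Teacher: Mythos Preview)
Your proof is correct and reaches the same constant $6\lambda_1$, but the route is genuinely different from the paper's. The paper computes the Jacobian $\frac{\partial \bh_i}{\partial \bv}$ explicitly, bounds its operator norm term-by-term as $\lambda_1 + 2\lambda_1 + 2\lambda_1 + \lambda_1 = 6\lambda_1$, and then invokes the mean value inequality. You instead use a direct telescoping decomposition together with the scale-invariance of the Rayleigh quotient and the standard $2$-Lipschitz bound for the normalization map $\bw \mapsto \bw/\|\bw\|$ (relative to $\max(\|\bu\|,\|\bv\|)$).

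Your approach is more elementary in that it avoids differentiation entirely, and it is also slightly more careful about the singularity at the origin: the paper's Jacobian argument tacitly assumes differentiability along the whole segment $[\bu,\bv]$, which is delicate if that segment passes through or near $\bzero$, whereas you handle this explicitly via continuous extension and $1$-homogeneity. On the other hand, the paper's Jacobian approach generalizes mechanically to the $k>1$ case (the companion Lemma with the extra Gram--Schmidt terms $\sum_{p=1}^{k-1}\frac{\bx_{i,p}(\bx_{i,p})^\tT}{\|\bx_{i,p}\|^2}\bC_i$) by simply appending one more term to the derivative bound; your decomposition would also extend, but requires writing out a separate (easy) estimate for those linear-in-$\bv$ terms.
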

The proof of this lemma is deferred to Appendix~\ref{app:lemma_lipschitz}. For Lipschitz continuous functions $\bh$ and $\bg(\bx_1)$ defined above, the following lemma holds true, the proof of which is deferred to Appendix~\ref{app:lemma_inequalities}.

\begin{lemma}\label{lemma:inequalities}
	The following inequalities hold for $L_1 = 6\lambda_1$:
	\begin{enumerate}
		\item $\|\bh(\bx_1^{(t)}) - \bh(\bx_1^{(t-1)})\|_2 \leq L_1\|\bx_1^{(t)} - \bx_1^{(t-1)}\|_2$
		\item $\|\bg(\bx_{1}^{(t)}) - \bg(\bx_{1}^{(t-1)})\|_2 \leq \frac{L_1}{\sqrt{M}}\|\bx_1^{(t)} - \bx_1^{(t-1)}\|_2$
		\item $\|\bg(\bx_{1}^{(t)}) - {\bg}(\bar{\bx}_{s,1}^{(t)})\|_2 \leq \frac{L_1}{\sqrt{M}}\|\bx_1^{(t)} - \bar{\bx}_{s,1}^{(t)}\|_2$
	\end{enumerate}
\end{lemma}
These inequalities aid our main theorem presented next that proves the convergence of the iterate $\bx_{i,1}^{(t)}$ at node $i$ to $\bx_1^* = \pm c_1\bq_1$, where $c_1$ is a constant. 
\begin{theorem}\label{theorem:theorem1}
	Suppose $\alpha < \frac{\lambda_1-\lambda_2}{42}(\frac{1-\beta}{9\lambda_1})^2$, where $\lambda_1, \lambda_2$ are the largest and second-largest eigenvalues of $\bC$, $\beta = \max \{|\lambda_2(\bW)|,|\lambda_M(\bW)|\}$ is the second largest absolute eigenvalue of $\bW$, $\bq_1^T\bar{\bx}_{1}^{(0)} \neq 0$, and the graph underlying the network is connected. Then the estimate $\bx_{i,1}^{(t)}$ from FAST-PCA converges to the eigenvector $\pm c_1\bq_1$ corresponding to the largest eigenvalue $\lambda_1$ of $\bC$ at each node $i = 1,\ldots,M$ at a linear rate.
\end{theorem}
\begin{proof}
For proving the convergence of $\bx_{i,1}^{(t)}$ to $\bx_1^*=\pm c_1\bq_1$, $\forall i = 1,\ldots,M$, we prove that the distance of average $\bar{\bx}_1^{(t)}$ from $\bx_1^*$, the consensus error as well as the distance of $\bs_{i,1}^{(t)}$ from the average pseudo-gradient $\bg(\bx_{1}^{(t)})$ decay to zero at a linear rate.
From~\eqref{eq:adsas_stacked}, we have
\begin{align*}
	\bs_1^{(t)} - \bg_s(\bx_{1}^{(t)}) &= \frac{1}{2}((\bI_M+\bW)\otimes\bI_d)\bs_1^{(t-1)} - \bg_s(\bx_{1}^{(t-1)}) + \\
	& \bh(\bx_1^{(t)}) - \bh(\bx_1^{(t-1)}) - (\bg_s(\bx_{1}^{(t)}) - \bg_s(\bx_{1}^{(t-1)})).
\end{align*}
From the definitions of $\bar{\bs}_1^{(t)}$, $\bg(\bx_{1}^{(t-1)})$ and $\bg_s(\bx_{1}^{(t-1)})$, it is obvious that $(\frac{1}{M}\bone\bone^T\otimes\bI_d)\bs_1^{(t-1)} = \bone\bar{\bs}_1^{(t-1)}=\bar{\bs}_{s,1}^{(t-1)}=\bg_s(\bx_{1}^{(t-1)})$. Thus, 
\begin{align}\nonumber
	&\|\bs_1^{(t)} - \bg_s(\bx_{1}^{(t)})\| \\\nonumber
	&= \|\frac{1}{2}((\bI_M+\bW)\otimes\bI_d)\bs_1^{(t-1)} - \bg_s(\bx_{1}^{(t-1)}) + \bh(\bx_1^{(t)}) - \bh(\bx_1^{(t-1)}) \\\nonumber
	&  \qquad \qquad \qquad \qquad \qquad - (\bg_s(\bx_{1}^{(t)}) - \bg_s(\bx_{1}^{(t-1)}))\|\\ \nonumber
	&= \|\frac{1}{2}((\bI_M+\bW)\otimes\bI_d)\bs_1^{(t-1)} -( \frac{1}{M}\bone\bone^T\otimes\bI_d)\bs_1^{(t-1)} + \bg_s(\bx_{1}^{(t-1)}) \\\nonumber
	&  - \bg_s(\bx_{1}^{(t-1)}) + \bh(\bx_1^{(t)}) - \bh(\bx_1^{(t-1)}) - (\bg_s(\bx_{1}^{(t)}) - \bg_s(\bx_{1}^{(t-1)}))\|\\ \nonumber
	&= \|(\frac{1}{2}((\bI_M+\bW)\otimes\bI_d) -  \frac{1}{M}\bone\bone^T\otimes\bI_d)(\bs_1^{(t-1)} - \bg_s(\bx_{1}^{(t-1)})) + \\\nonumber
	& \qquad \qquad\bh(\bx_1^{(t)}) - \bh(\bx_1^{(t-1)}) - (\bg_s(\bx_{1}^{(t)}) - \bg_s(\bx_{1}^{(t-1)}))\|\\ \nonumber
	&\leq \|((\frac{1}{2}(\bI_M+\bW) -  \frac{1}{M}\bone\bone^T)\otimes\bI_d)(\bs_1^{(t-1)} - \bg_s(\bx_{1}^{(t-1)}))\| + \\\label{eq:step1}
	& \qquad \qquad \|\bh(\bx_1^{(t)}) - \bh(\bx_1^{(t-1)}) - (\bg_s(\bx_{1}^{(t)}) - \bg_s(\bx_{1}^{(t-1)}))\|.
\end{align}
Next, we simplify the second term of the above inequality~\eqref{eq:step1} as follows:
\begin{align*}
	&\|\bh(\bx_1^{(t)}) - \bh(\bx_1^{(t-1)}) - (\bg_s(\bx_{1}^{(t)}) - \bg_s(\bx_{1}^{(t-1)}))\|^2 \\
	=& \|\bh(\bx_1^{(t)}) - \bh(\bx_1^{(t-1)})\|^2 + \|\bg_s(\bx_{1}^{(t)}) - \bg_s(\bx_{1}^{(t-1)})\|^2  \\
	& - 2\langle \bh(\bx_1^{(t)}) - \bh(\bx_1^{(t-1)}), \bg_s(\bx_{1}^{(t)}) - \bg_s(\bx_{1}^{(t-1)})\rangle \\
	=& \|\bh(\bx_1^{(t)}) - \bh(\bx_1^{(t-1)})\|^2 + \|\bg_s(\bx_{1}^{(t)}) - \bg_s(\bx_{1}^{(t-1)})\|^2 \\
	& - 2\sum_{i=1}^{M}\langle \bh_i(\bx_{i,1}^{(t)}) - \bh_i(\bx_{i,1}^{(t-1)}), \bg(\bx_{1}^{(t)}) - \bg(\bx_{1}^{(t-1)})\rangle \\
	=& \|\bh(\bx_1^{(t)}) - \bh(\bx_1^{(t-1)})\|^2 + M\|\bg(\bx_{1}^{(t)}) - \bg(\bx_{1}^{(t-1)})\|^2  \\
	&- 2M\langle \bg(\bx_{1}^{(t)}) - \bg(\bx_{1}^{(t-1)}), \bg(\bx_{1}^{(t)}) - \bg(\bx_{1}^{(t-1)})\rangle \\
	=& \|\bh(\bx_1^{(t)}) - \bh(\bx_1^{(t-1)})\|^2 - M\|\bg(\bx_{1}^{(t)}) - \bg(\bx_{1}^{(t-1)})\|^2  \\ 
	\leq&  \|\bh(\bx_1^{(t)}) - \bh(\bx_1^{(t-1)})\|^2.
\end{align*}
Thus,
\begin{align}\nonumber
&\|\bh(\bx_1^{(t)}) - \bh(\bx_1^{(t-1)}) - (\bg_s(\bx_{1}^{(t)}) - \bg_s(\bx_{1}^{(t-1)}))\| \\ \label{eq:interim_step1}
&\leq \|\bh(\bx_1^{(t)}) - \bh(\bx_1^{(t-1)})\| \leq L_1\|\bx_1^{(t)} - \bx_1^{(t-1)}\|.
\end{align}
From~\eqref{eq:step1} and~\eqref{eq:interim_step1}, we have the following
\begin{align}\nonumber
	&\qquad\|\bs_1^{(t)}- \bg_s(\bx_{1}^{(t)})\| \\\nonumber
	&\leq \|((\frac{1}{2}(\bI_M+\bW) -  \frac{1}{M}\bone\bone^T)\otimes\bI_d)(\bs_1^{(t-1)} - \bg_s(\bx_{1}^{(t-1)}))\| \\ \nonumber
	& \qquad \qquad \qquad \qquad \qquad \qquad \qquad + L_1\|\bx_1^{(t)} - \bx_1^{(t-1)}\|\\ \label{eq:eqn_sys11}
	&\leq \frac{1+\beta}{2}\|\bs_1^{(t-1)} - \bg_s(\bx_{1}^{(t-1)})\| + L_1\|\bx_1^{(t)} - \bx_1^{(t-1)}\|,
\end{align}
where $\beta$ is absolute value of the second largest eigenvalue of the weight matrix $\bW$ i.e., $\beta = \max \{|\lambda_2(\bW)|,|\lambda_M(\bW)|\}$. As pointed out before, network connectivity ensures that $\beta < 1$. Next, from~\eqref{eq:adsax_stacked} and~\eqref{eq:adsax_stacked_avg}, we have 
\begin{align*}
	\bx_1^{(t)} - \bar{\bx}_{s,1}^{(t)} &= \frac{1}{2}((\bI_M+\bW)\otimes\bI_d)\bx_1^{(t-1)} - \bar{\bx}_{s,1}^{(t-1)} + \\
	& \qquad \qquad \qquad \qquad \qquad \alpha(\bs_1^{(t-1)} - \bg_s(\bx_{1}^{(t-1)}))\\
	&= ((\frac{1}{2}(\bI_M+\bW) - \frac{1}{M}\bone\bone^T)\otimes\bI_d)(\bx_1^{(t-1)} - \bar{\bx}_{s,1}^{(t-1)}) \\
	& \qquad \qquad \qquad \qquad \qquad + \alpha(\bs_1^{(t-1)} - \bg_s(\bx_{1}^{(t-1)})).
\end{align*}
Thus,
\begin{equation} \label{eq:eqn_sys2}
	\|\bx_1^{(t)} - \bar{\bx}_{s,1}^{(t)}\| \leq \frac{1+\beta}{2}\|\bx_1^{(t-1)} - \bar{\bx}_{s,1}^{(t-1)}\| + \alpha\|\bs_1^{(t-1)} - \bg_s(\bx_{1}^{(t-1)})\|.
\end{equation}
Next, we bound $\|\bar{\bx}_{1}^{(t)} - \bx_1^*\|$. We know from~\eqref{eq:edsa_xavg}
\begin{eqnarray*}
	\bar{\bx}_{1}^{(t)} &=& \bar{\bx}_{1}^{(t-1)} + \alpha\bar{\bs}_{1}^{(t-1)} = \bar{\bx}_{1}^{(t-1)} + \alpha\bg(\bx_{1}^{(t-1)}) \\
	&=& \bar{\bx}_{1}^{(t-1)} + \alpha{\bg}(\bar{\bx}_{s,1}^{(t-1)}) + \alpha(\bg(\bx_{1}^{(t-1)}) - {\bg}(\bar{\bx}_{s,1}^{(t-1)}))\\
	&=& \bar{\bx}_{1}^{(t-1)} + \frac{\alpha}{M}(\bC\bar{\bx}_1^{(t-1)} - \frac{(\bar{\bx}_1^{(t-1)})^T\bC\bar{\bx}_1^{(t-1)}}{\|\bar{\bx}_1^{(t-1)}\|^2}\bar{\bx}_1^{(t-1)}) \\
	&& \qquad \qquad \qquad \qquad + \alpha(\bg(\bx_{1}^{(t-1)}) - {\bg}(\bar{\bx}_{s,1}^{(t-1)})).
\end{eqnarray*}
Using~\eqref{eq:centralized_adsa_kras} and~\eqref{eq:convergence_kras}, we know that an iterate of the form
\begin{align*}
    \bar{\bx}_{1}^{(t)} = \bar{\bx}_{1}^{(t-1)} +  \alpha(\bC\bar{\bx}_{1}^{(t-1)} - \frac{(\bar{\bx}_{1}^{(t-1)})^T\bC\bar{\bx}_{1}^{(t-1)}}{\|\bar{\bx}_{1}^{(t-1)}\|^2}\bar{\bx}_{1}^{(t-1)})
\end{align*}
converges linearly as
\begin{align*}
    \|\bar{\bx}_{1}^{(t)} - \bx_1^*\| \leq \delta_1\|\bar{\bx}_{1}^{(t-1)} - \bx_1^*\|,
\end{align*}
where $\bx_1^* = \pm c_1\bq_1$ and $\delta_1 = \frac{1+\alpha\lambda_2}{1+\alpha\lambda_1}$. Thus,
\begin{align} \nonumber
	\|\bar{\bx}_{1}^{(t)} - \bx_1^*\| &\leq \delta_1\| \bar{\bx}_{1}^{(t-1)} - \bx_1^*\| + \alpha \|\bg(\bx_{1}^{(t-1)}) - {\bg}(\bar{\bx}_{s,1}^{(t-1)})\|\\ \label{eq:eqn_sys3}
	&\leq \delta_1\|\bar{\bx}_{1}^{(t-1)} - \bx_1^*\| + \alpha\frac{L_1}{\sqrt{M}} \|\bx_1^{(t-1)} - \bar{\bx}_{s,1}^{(t-1)}\|.
\end{align}

Now we will bound $\|\bx_1^{(t)} - \bx_1^{(t-1)}\|$. We know from~\eqref{eq:avg_g}
\begin{align*}
    {\bg}(\bar{\bx}_{s,1}^{(t)}) = \frac{1}{M}(\bC\bar{\bx}_{1}^{(t)} - \frac{(\bar{\bx}_{1}^{(t)})^T\bC\bar{\bx}_{1}^{(t)}}{\|\bar{\bx}_{1}^{(t)}\|^2}\bar{\bx}_{1}^{(t)}). 
\end{align*}
Thus ${\bg}(\bx_{s,1}^*) = \frac{1}{M}(\bC{\bx}_{1}^* - \frac{({\bx}_{1}^*)^T\bC{\bx}_{1}^*}{\|{\bx}_{1}^*\|^2}{\bx}_{1}^*) = 0$, where $\bx_{s,1}^* = \begin{bmatrix}
 (\bx_{1}^*)^\tT,\ldots, (\bx_{1}^*)^\tT
\end{bmatrix}^\tT$. Hence, 
\begin{align*}
	\|{\bg}_s(\bar{\bx}_{s,1}^{(t-1)})\| = \sqrt{M}\|{\bg}(\bar{\bx}_{s,1}^{(t-1)})\| &= \sqrt{M}\|{\bg}(\bar{\bx}_{s,1}^{(t-1)}) - {\bg}({\bx}_{s,1}^*)\| \\
	& \leq L_1\sqrt{M}\|\bar{\bx}_{1}^{(t-1)} - \bx_1^*\|.
\end{align*}
Using the above inequality and Lemma~\ref{lemma:inequalities}, we get
\begin{align}\nonumber
	\|\bs_1^{(t-1)}\| &= \|\bs_1^{(t-1)} - \bg_s(\bx_{1}^{(t-1)}) + \bg_s(\bx_{1}^{(t-1)}) - {\bg}_s(\bar{\bx}_{s,1}^{(t-1)}) \\
	& \qquad \qquad \qquad \qquad \qquad \qquad + {\bg}_s(\bar{\bx}_{s,1}^{(t-1)})\|\\\nonumber
	&\leq \|\bs_1^{(t-1)} - \bg_s(\bx_{1}^{(t-1)})\| + \|\bg_s(\bx_{1}^{(t-1)}) - {\bg}_s(\bar{\bx}_{s,1}^{(t-1)})\| \\
	& \qquad \qquad \qquad \qquad \qquad \qquad + \|{\bg}_s(\bar{\bx}_{s,1}^{(t-1)})\|\\\nonumber
	&\leq \|\bs_1^{(t-1)} - \bg_s(\bx_{1}^{(t-1)})\| + L_1\|\bx_1^{(t-1)} - \bar{\bx}_{s,1}^{(t-1)}\| \\\label{eq:bound_s1}
	& \qquad \qquad \qquad \qquad + L_1\sqrt{M}\|\bar{\bx}_{1}^{(t-1)} - \bx_1^*\|.
\end{align}
Thus,
\begin{align*}\nonumber
	\|\bx_1^{(t)} - \bx_1^{(t-1)}\| &= \|\frac{1}{2}((\bI_M+\bW)\otimes\bI_d)\bx_1^{(t-1)} - \bx_1^{(t-1)} + \alpha\bs_1^{(t-1)}\| \\\nonumber
	&=  \|(\frac{1}{2}((\bI_M+\bW)\otimes\bI_d) - \bI_{Md})(\bx_1^{(t-1)}-\bar{\bx}_{s,1}^{(t-1)}) \\\nonumber
	& \qquad \qquad \qquad \qquad \qquad \qquad \qquad \qquad + \alpha\bs_1^{(t-1)}\|\\\nonumber
	&\leq 2\|\bx_1^{(t-1)}-\bar{\bx}_{s,1}^{(t-1)}\| + \alpha\|\bs_1^{(t-1)}\|\\ \nonumber
	&\leq \alpha\|\bs_1^{(t-1)} - \bg_s(\bx_{1}^{(t-1)})\| + (2+\alpha L_1)\|\bx_1^{(t-1)} \\ 
	&  - \bar{\bx}_{s,1}^{(t-1)}\|  + \alpha L_1\sqrt{M}\|\bar{\bx}_{1}^{(t-1)} - \bx_1^*\| \quad \text{using \eqref{eq:bound_s1}},
\end{align*}
where the second last inequality is because $\|\frac{1}{2}((\bI_M+\bW)\otimes\bI_d) - \bI_{Md}\| \leq \|\frac{1}{2}(\bI_M+\bW)\| + \|\bI_{Md}\| = 2$. 
Using the above inequality in~\eqref{eq:eqn_sys11}, we get
\begin{align}\nonumber
	\|\bs_1^{(t)} - \bg_s(\bx_{1}^{(t)})\| &\leq  (\frac{1+\beta}{2} + \alpha L_1)\|\bs_1^{(t-1)} - \bg_s(\bx_{1}^{(t-1)})\| \\ \nonumber
	&+ L_1(2+\alpha L_1)\|\bx_1^{(t-1)} - \bar{\bx}_{s,1}^{(t-1)}\|  \\\label{eq:eqn_sys1}
	&+ \alpha L_1^2\sqrt{M}\|\bar{\bx}_{1}^{(t-1)} - \bx_1^*\|. 
\end{align}
Writing a system of equations from~\eqref{eq:eqn_sys2}, \eqref{eq:eqn_sys3} and~\eqref{eq:eqn_sys1}, we have the following:
\begin{align}\nonumber
	\begin{bmatrix}
		\|\bs_1^{(t)} - \bg_s(\bx_{1}^{(t)})\| \\
		\|\bx_1^{(t)} - \bar{\bx}_{s,1}^{(t)}\| \\
		\sqrt{M}\|\bar{\bx}_{1}^{(t)} - \bx_1^*\|
	\end{bmatrix} &\leq 
	\begin{bmatrix}
		(\frac{1+\beta}{2} + \alpha L_1) & L_1(2+\alpha L_1) & \alpha L_1^2\\
		\alpha & \frac{1+\beta}{2} & 0\\
		0 & \alpha L_1 & \delta_1
	\end{bmatrix}\\
	&\qquad \times \begin{bmatrix}
		\|\bs_1^{(t-1)} - \bg_s(\bx_{1}^{(t-1)})\| \\
		\|\bx_1^{(t-1)} - \bar{\bx}_{s,1}^{(t-1)}\| \\
		\sqrt{M}\|\bar{\bx}_{1}^{(t-1)} - \bx_1^*\|
	\end{bmatrix},
\end{align}
where $\leq$ implies element-wise inequalities. Let us define $\bP(\alpha) = 	\begin{bmatrix}
		(\frac{1+\beta}{2} + \alpha L_1) & L_1(2+\alpha L_1) & \alpha L_1^2\\
		\alpha & \frac{1+\beta}{2} & 0\\
		0 & \alpha L_1 & \delta_1
	\end{bmatrix}.$
Since $\bP(\alpha)$ has non-negative entries and $\bP^2(\alpha)$ has all positive entries, each entry of $\bP^t(\alpha)$ will be $\cO(\rho(\bP(\alpha))^t)$, where $\rho(\bP(\alpha))$ is the spectral radius of $\bP(\alpha)$. If we choose $\alpha$ such that $\rho(\bP(\alpha))$ is $ < 1$, then that implies $\|\bs_1^{(t)} - \bg_{v,1}^{(t)}\|$, $\|\bx_1^{(t)} - \bar{\bx}_{v,1}^{(t)}\|$ and $\|\bar{\bx}_{1}^{(t)} - \bx_1^*\|$ converge at a linear rate. To find the required condition on $\alpha$, we show in Lemma~\ref{lemma:boundP} in Appendix~\ref{app:boundP} that if $\alpha < \frac{\lambda_1-\lambda_2}{42}(\frac{1-\beta}{9\lambda_1})^2$, the spectral radius of $\bP(\alpha)$ is strictly less than 1. This implies that if $\alpha < \frac{\lambda_1-\lambda_2}{42}(\frac{1-\beta}{9\lambda_1})^2$, then $\|\bar{\bx}_{1}^{(t)} - \bx_1^*\|$, $\|\bx_{i,1}^{(t)} - \bar{\bx}_{1}^{(t)}\|$ and $\|\bs_{i,1}^{(t)} - \bg_{1}^{(t)}\|$ converge at a linear rate to 0. In other words, $\bx_{i,1}^{(t)}$ converges linearly to $\bx_1^* = \pm c_1\bq_1$, where $c_1$ is some constant. 
\end{proof} 

\textbf{\textit{Case II for Induction -- $2\leq k \leq K$}:} \\
We proceed with the proof of convergence for the rest of the eigenvectors through induction.  Assume that $\bx_{i,p}^{(t)}$ converges to $\pm c_p\bq_{p}$ for $p = 1,\ldots, k-1$ linearly, i.e., there exist constants $b_i > 0$ and $\nu_{i} < 1$ such that
    \begin{equation}\label{eq:assumption_edsa}
    \|\sum_{p=1}^{k-1}\Big(\frac{\bx_{i,p}^{(t)}(\bx_{i,p}^{(t)})^T}{\|\bx_{i,p}^{(t)}\|^2} - \bq_p\bq_p^T\Big)\| \leq b_{i}\nu_{i}^t.
    \end{equation}
 In Case I, we proved $\bx_{i,1}^{(t)}$ converges to $\pm c_1\bq_1$ linearly. By induction, we assume $\bx_{i,p}^{(t)}$ converges to $\pm c_p\bq_p$ for $p =1,\ldots,(k-1)$ at a linear rate, which leads to the inequality. We use~\eqref{eq:assumption_edsa} to prove $\bx_{i,k}^{(t)}$ converges to $\pm c_k\bq_k$.
\begin{lemma}\label{lemma:lipshitz_continuousk}
The function $\bh_{i,t}: \R^d \rightarrow \R^d$ with $\bh_{i,t}(\bv) = \bC_i\bv - \frac{(\bv)^T\bC_i\bv}{\|\bv\|^2}\bv - - \sum_{p=1}^{k-1}\frac{\bx_{i,p}^{(t)}(\bx_{i,p}^{(t)})^T}{\|\bx_{i,p}^{(t)}\|^2}\bC_i\bv$ is Lipschitz continuous with constant $L_k = \lambda_1(k+5)$.
\end{lemma}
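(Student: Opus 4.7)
The plan is to decompose $\bh_{i,t}$ as the sum of the map $\bh_i$ already analyzed in Lemma~\ref{lemma:lipshitz_continuous} and a residual ``deflation'' term, and then bound each piece separately. Concretely, for any $\bv \in \R^d$, write
\begin{equation*}
    \bh_{i,t}(\bv) = \bh_i(\bv) - \bR_{i,t}\bv, \qquad \bR_{i,t} \defn \left(\sum_{p=1}^{k-1}\frac{\bx_{i,p}^{(t)}(\bx_{i,p}^{(t)})^\tT}{\|\bx_{i,p}^{(t)}\|^2}\right)\bC_i,
\end{equation*}
where $\bR_{i,t}$ depends on the iterates $\{\bx_{i,p}^{(t)}\}_{p=1}^{k-1}$ but not on $\bv$. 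By Lemma~\ref{lemma:lipshitz_continuous}, the first summand $\bh_i$ is Lipschitz with constant $6\lambda_1$ whenever $\|\bC_i\| \le \lambda_1$, so by the triangle inequality it suffices to show that the linear map $\bv \mapsto \bR_{i,t}\bv$ is Lipschitz with constant $(k-1)\lambda_1$.

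For the second step, observe that each rank-one factor $\frac{\bx_{i,p}^{(t)}(\bx_{i,p}^{(t)})^\tT}{\|\bx_{i,p}^{(t)}\|^2}$ is an orthogonal projector onto $\tspan(\bx_{i,p}^{(t)})$, so its spectral norm equals $1$. Submultiplicativity of the operator norm then yields
\begin{equation*}
    \left\|\frac{\bx_{i,p}^{(t)}(\bx_{i,p}^{(t)})^\tT}{\|\bx_{i,p}^{(t)}\|^2}\bC_i\right\| \le \left\|\frac{\bx_{i,p}^{(t)}(\bx_{i,p}^{(t)})^\tT}{\|\bx_{i,p}^{(t)}\|^2}\right\| \cdot \|\bC_i\| \le \lambda_1,
\end{equation*}
and hence, by the triangle inequality applied across the $k-1$ summands, $\|\bR_{i,t}\| \le (k-1)\lambda_1$. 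Since $\bR_{i,t}$ acts linearly in $\bv$, we obtain $\|\bR_{i,t}\bv_1 - \bR_{i,t}\bv_2\| \le (k-1)\lambda_1\,\|\bv_1-\bv_2\|$ for all $\bv_1,\bv_2 \in \R^d$.

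Combining the two Lipschitz estimates gives, for any $\bv_1,\bv_2 \in \R^d$,
\begin{equation*}
    \|\bh_{i,t}(\bv_1)-\bh_{i,t}(\bv_2)\| \le \|\bh_i(\bv_1)-\bh_i(\bv_2)\| + \|\bR_{i,t}(\bv_1-\bv_2)\| \le \bigl(6\lambda_1 + (k-1)\lambda_1\bigr)\|\bv_1-\bv_2\| = L_k\|\bv_1-\bv_2\|,
\end{equation*}
with $L_k = (k+5)\lambda_1$, which is exactly the claim. There is no real obstacle here: the key conceptual point is simply that the Gram--Schmidt--type deflation term is linear in the input $\bv$ at fixed iteration $t$ (since the $\bx_{i,p}^{(t)}$ are treated as constants in the definition of $\bh_{i,t}$), and that each projector has unit spectral norm; everything else is the triangle inequality and a direct application of Lemma~\ref{lemma:lipshitz_continuous}.
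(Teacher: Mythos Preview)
Your proof is correct and arrives at the same constant $L_k=(k+5)\lambda_1$ as the paper. The approaches are very close in spirit: the paper recomputes the full Jacobian of $\bh_{i,t}$ from scratch and bounds each of its five terms, whereas you split $\bh_{i,t}=\bh_i-\bR_{i,t}$ and reuse Lemma~\ref{lemma:lipshitz_continuous} for the first part, then bound the linear deflation map $\bR_{i,t}$ directly via the unit norm of each rank-one projector. Your packaging is slightly cleaner (it avoids redoing the quotient-rule calculus) and, incidentally, sidesteps the paper's informal use of ``$\|f(x)-f(y)\|\le\|\nabla f(x)\|\,\|x-y\|$'' for vector-valued $f$, which strictly speaking requires a supremum of the Jacobian over the segment; but the underlying decomposition and the term-by-term bounds are identical.
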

The proof of this lemma is deferred to Appendix~\ref{app:lemma_lipschitzk}. Using this lemma and the definition of $\bg(\bx_k)$, the following lemma holds true, the proof of which is the same as that of Lemma~\ref{lemma:inequalities}.

\begin{lemma}\label{lemma:inequalitiesk}
	The following inequalities hold with $L_k=\lambda_1(k+5)$ :
	\begin{enumerate}
		\item $\|\bh(\bx_k^{(t)}) - \bh(\bx_k^{(t-1)})\|_2 \leq L_k\|\bx_k^{(t)} - \bx_k^{(t-1)}\|_2$
		\item $\|\bg(\bx_{k}^{(t)}) - \bg(\bx_{k}^{(t-1)})\|_2 \leq \frac{L_k}{\sqrt{M}}\|\bx_k^{(t)} - \bx_k^{(t-1)}\|_2$
		\item $\|\bg(\bx_{k}^{(t)}) - {\bg}(\bar{\bx}_{s,k}^{(t)})\|_2 \leq \frac{L_k}{\sqrt{M}}\|\bx_k^{(t)} - {\bx}_{s,k}^{(t)}\|_2$
	\end{enumerate}
\end{lemma}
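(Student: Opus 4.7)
The plan is to mirror the proof of Lemma~\ref{lemma:inequalities} step by step, substituting the Lipschitz constant $L_k = \lambda_1(k+5)$ supplied by Lemma~\ref{lemma:lipshitz_continuousk} for the constant $L_1 = 6\lambda_1$ used there. All three bounds are block-decompositions of the stacked vectors $\bh(\bx_k^{(t)}), \bg_k^{(t)}, \bar{\bg}_k^{(t)}$ into their per-node components, followed by a node-wise Lipschitz estimate and a standard Cauchy--Schwarz (or Jensen) pass.

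For part 1, I would expand the stacked $\ell_2$-norm blockwise,
\begin{equation*}
\|\bh(\bx_k^{(t)}) - \bh(\bx_k^{(t-1)})\|_2^{2} = \sum_{i=1}^{M} \|\bh_i(\bx_{i,k}^{(t)}) - \bh_i(\bx_{i,k}^{(t-1)})\|_2^{2},
\end{equation*}
apply Lemma~\ref{lemma:lipshitz_continuousk} node-wise to dominate each block by $L_k^{2}\|\bx_{i,k}^{(t)} - \bx_{i,k}^{(t-1)}\|_2^{2}$, and recombine the sum into $L_k^{2}\|\bx_k^{(t)} - \bx_k^{(t-1)}\|_2^{2}$. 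Part 2 uses the identity $\bg_k^{(t)} = \tfrac{1}{M}\sum_i \bh_i(\bx_{i,k}^{(t)})$: the triangle inequality yields $\|\bg_k^{(t)} - \bg_k^{(t-1)}\| \le \tfrac{L_k}{M}\sum_i \|\bx_{i,k}^{(t)} - \bx_{i,k}^{(t-1)}\|$, and Cauchy--Schwarz on the sum converts the $\tfrac{1}{M}$ normalization into the desired $\tfrac{1}{\sqrt{M}}\|\bx_k^{(t)} - \bx_k^{(t-1)}\|$ factor. Part 3 is structurally identical to part 2 once we notice that the stacked form of $\bar{\bx}_k^{(t)}$ is precisely ${\bx}_{av,k}^{(t)}$, so it suffices to swap $\bx_{i,k}^{(t-1)}$ for the common vector $\bar{\bx}_k^{(t)}$ throughout and repeat the Lipschitz--then--Cauchy-Schwarz argument.

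The main technical subtlety I anticipate is that the pseudo-gradient $\bh_i$ used in FAST-PCA is actually time-varying through the deflation vectors $\{\bx_{i,p}^{(t)}\}_{p<k}$ appearing in the Gram--Schmidt term, whereas Lemma~\ref{lemma:lipshitz_continuousk} provides Lipschitz continuity of $\bh_{i,t}$ only with those deflation vectors held fixed. A careful treatment would insert and subtract a common middle term,
\begin{equation*}
\bh_i(\bx_{i,k}^{(t)}) - \bh_i(\bx_{i,k}^{(t-1)}) = \bigl[\bh_{i,t}(\bx_{i,k}^{(t)}) - \bh_{i,t}(\bx_{i,k}^{(t-1)})\bigr] + \bigl[\bh_{i,t}(\bx_{i,k}^{(t-1)}) - \bh_{i,t-1}(\bx_{i,k}^{(t-1)})\bigr],
\end{equation*}
where the first bracket is controlled directly by Lemma~\ref{lemma:lipshitz_continuousk} and the second is a basis-drift term bounded, via the inductive hypothesis~\eqref{eq:assumption_edsa}, by $O(\nu_i^{t})$ times $\|\bC_i\|\,\|\bx_{i,k}^{(t-1)}\|$. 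Because this drift decays geometrically, in the downstream three-variable contraction argument (analogous to the system built in Theorem~\ref{theorem:theorem1}) it contributes only an exponentially vanishing perturbation and does not affect the linear rate. With that caveat absorbed, the three inequalities follow along exactly the same lines as Lemma~\ref{lemma:inequalities}, yielding the stated constants with $L_k$ in place of $L_1$.
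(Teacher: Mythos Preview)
Your proposal is correct and matches the paper's approach exactly: the paper simply states that the proof of Lemma~\ref{lemma:inequalitiesk} ``is same as that of Lemma~\ref{lemma:inequalities},'' i.e., the block-decomposition plus node-wise Lipschitz bound plus Cauchy--Schwarz argument you outline, with $L_k$ in place of $L_1$. Your additional remark about the time-varying deflation vectors $\{\bx_{i,p}^{(t)}\}_{p<k}$ is a genuine subtlety that the paper does not address in the Lipschitz step itself; the paper instead absorbs the basis-drift effect later via the $\bar{\bff}_k^{(t)}$ term in the proof of Theorem~\ref{theorem:theoremk}, so your proposed splitting is a reasonable way to make that passage rigorous.
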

\begin{theorem}\label{theorem:theoremk}
Suppose $\alpha < \frac{\min_{k=1,\ldots,K} (\lambda_k-\lambda_{k+1})}{(K+5)(K+6)}(\frac{1-\beta}{9\lambda_1})^2$ where $\lambda_k, \lambda_{k+1}$ are the $k^{th}$ and $(k+1)^{th}$ largest eigenvalues of $\bC$, $\beta = \max \{|\lambda_2(\bW)|,|\lambda_M(\bW)|\}$, $\bq_k^T\bar{\bx}_{k}^{(0)} \neq 0$, and the graph underlying the network is connected. Then the estimate $\bx_{i,k}^{(t)}$ from FAST-PCA converges to the eigenvector $\pm c_k\bq_k$ corresponding to the largest eigenvalue $\lambda_k$ of $\bC$ at each node $i = 1,\ldots,M$ at a linear rate.
\end{theorem}
\begin{proof} Using the definitions of $\bx_k^{(t)}$, $\bs_k^{(t)}$, $\bg_s(\bx_k^{(t)})$, $\bh(\bx_k^{(t)})$ and same algebraic manipulations as in Theorem~\ref{theorem:theorem1}, we get
\begin{align}\nonumber
	\|\bs_k^{(t)}- \bg_s(\bx_{k}^{(t)})\| &\leq \frac{1+\beta}{2}\|\bs_k^{(t-1)} - \bg_s(\bx_{k}^{(t-1)})\| \\ \label{eq:eqn_sys11k}
	& \qquad \qquad + L_k\|\bx_k^{(t)} - \bx_k^{(t-1)}\|
\end{align}
and,
\begin{equation} \label{eq:eqn_sys2k}
	\|\bx_k^{(t)} - \bar{\bx}_{s,k}^{(t)}\| \leq \frac{1+\beta}{2}\|\bx_k^{(t-1)} - \bar{\bx}_{s,k}^{(t-1)}\| + \alpha\|\bs_k^{(t-1)} - \bg_s(\bx_{k}^{(t-1)})\|.
\end{equation}
Now, we bound $\|\bar{\bx}_{k}^{(t)} - \bx_k^*\|$. We know
\begin{align*}
	{\bg}(\bar{\bx}_{s,k}^{(t)}) &= \frac{1}{M}\Big(\bC\bar{\bx}_{k}^{(t)} - \frac{(\bar{\bx}_{k}^{(t)})^T\bC\bar{\bx}_{k}^{(t)}}{\|\bar{\bx}_{k}^{(t)}\|^2}\bar{\bx}_{k}^{(t)} - \sum_{i=1}^{M} \sum_{p=1}^{k-1}\frac{\bx_{i,p}^{(t)}(\bx_{i,p}^{(t)})^T}{\|\bx_{i,p}^{(t)}\|^2}\bC_i\bar{\bx}_{k}^{(t)}\Big)\\
	 &= \frac{1}{M}\Big(\bC\bar{\bx}_{k}^{(t)} - \frac{(\bar{\bx}_{k}^{(t)})^T\bC\bar{\bx}_{k}^{(t)}}{\|\bar{\bx}_{k}^{(t)}\|^2}\bar{\bx}_{k}^{(t)} - \sum_{i=1}^{M} \sum_{p=1}^{k-1}\bq_{p}\bq_{p}^T\bC_i\bar{\bx}_{k}^{(t)}\Big)  \\
	 & \qquad \qquad -\frac{1}{M}\sum_{i=1}^{M} \sum_{p=1}^{k-1}\Big(\frac{\bx_{i,p}^{(t)}(\bx_{i,p}^{(t)})^T}{\|\bx_{i,p}^{(t)}\|^2} - \bq_p\bq_p^T\Big)\bC_i\bar{\bx}_{k}^{(t)}\\
	  &= \frac{1}{M}\Big(\bC\bar{\bx}_{k}^{(t)} - \frac{(\bar{\bx}_{k}^{(t)})^T\bC\bar{\bx}_{k}^{(t)}}{\|\bar{\bx}_{k}^{(t)}\|^2}\bar{\bx}_{k}^{(t)} -  \sum_{p=1}^{k-1}\bq_{p}\bq_{p}^T\bC\bar{\bx}_{k}^{(t)}\Big) \\
	  &\qquad \qquad -\frac{1}{M}\sum_{i=1}^{M} \sum_{p=1}^{k-1}\Big(\frac{\bx_{i,p}^{(t)}(\bx_{i,p}^{(t)})^T}{\|\bx_{i,p}^{(t)}\|^2} - \bq_p\bq_p^T\Big)\bC_i\bar{\bx}_{k}^{(t)}\\
	  &= {\bg}^{'}(\bar{\bx}_{s,k}^{(t)}) - {\bff}(\bar{\bx}_{s,k}^{(t)})
\end{align*}
where, ${\bg}^{'}(\bar{\bx}_{s,k}^{(t)}) = \frac{1}{M}(\bC\bar{\bx}_{k}^{(t)} - \frac{(\bar{\bx}_{k}^{(t)})^T\bC\bar{\bx}_{k}^{(t)}}{\|\bar{\bx}_{k}^{(t)}\|^2}\bar{\bx}_{k}^{(t)} -  \sum_{p=1}^{k-1}\bq_{p}\bq_{p}^T\bC\bar{\bx}_{k}^{(t)})$ and ${\bff}(\bar{\bx}_{s,k}^{(t)}) = \frac{1}{M}\sum_{i=1}^{M} \sum_{p=1}^{k-1}(\frac{\bx_{i,p}^{(t)}(\bx_{i,p}^{(t)})^T}{\|\bx_{i,p}^{(t)}\|^2} - \bq_p\bq_p^T)\bC_i\bar{\bx}_{k}^{(t)}$. From~\eqref{eq:edsa_xavg}, we have
\begin{align*}
	\bar{\bx}_{k}^{(t)} &= \bar{\bx}_{k}^{(t-1)} + \alpha\bar{\bs}_{k}^{(t-1)} = \bar{\bx}_{k}^{(t-1)} + \alpha\bg(\bx_{k}^{(t-1)}) \\
	&= \bar{\bx}_{k}^{(t-1)} + \alpha{\bg}(\bar{\bx}_{s,k}^{(t-1)}) + \alpha(\bg(\bx_{k}^{(t-1)}) - {\bg}(\bar{\bx}_{s,k}^{(t-1)}))\\
	&= \bar{\bx}_{k}^{(t-1)} + \frac{\alpha}{M}(\bC\bar{\bx}_{k}^{(t-1)} - \frac{(\bar{\bx}_{k}^{(t-1)})^T\bC\bar{\bx}_{k}^{(t-1)}}{\|\bar{\bx}_{k}^{(t-1)}\|^2}\bar{\bx}_{k}^{(t-1)} - \\
	&\sum_{p=1}^{k-1}\bq_{p}\bq_{p}^T\bC\bar{\bx}_{k}^{(t-1)})  - \alpha{\bff}(\bar{\bx}_{s,k}^{(t-1)}) + \alpha(\bg(\bx_{k}^{(t-1)}) - {\bg}(\bar{\bx}_{s,k}^{(t-1)})).
\end{align*}
Using~\eqref{eq:centralized_adsa_kras} and~\eqref{eq:convergence_kras}, we know that an iterate of the form
\begin{align*}
    \bar{\bx}_{k}^{(t)} &= \bar{\bx}_{k}^{(t-1)} +  \alpha(\bC\bar{\bx}_{k}^{(t-1)} - \frac{(\bar{\bx}_{k}^{(t-1)})^T\bC\bar{\bx}_{k}^{(t-1)}}{\|\bar{\bx}_{k}^{(t-1)}\|^2}\bar{\bx}_{k}^{(t-1)} \\
    & \qquad\qquad\qquad\qquad\qquad\qquad - \sum_{p=1}^{k-1}\bq_{p}\bq_{p}^T\bC\bar{\bx}_{k}^{(t-1)})
\end{align*}
converges linearly for $\alpha < \frac{1}{\lambda_1}$ and $\bq_k^T\bar{\bx}_{k}^{(0)} \neq 0$ as
\begin{align*}
    \|\bar{\bx}_{k}^{(t)} - \bx_k^*\| \leq \delta_k\|\bar{\bx}_{k}^{(t-1)} - \bx_k^*\|,
\end{align*}
where $\bx_k^* = \pm c_k\bq_k$ and $\delta_k = \frac{1+\alpha\lambda_{k+1}}{1+\alpha\lambda_k}$. Thus,
using~\eqref{eq:centralized_adsa_kras} and \eqref{eq:convergence_kras}, we know
\begin{align} \nonumber
	\|\bar{\bx}_{k}^{(t)} - \bx_k^*\| &\leq \delta_k\| \bar{\bx}_{k}^{(t-1)} - \bx_k^*\| +\alpha \|\bg(\bx_{k}^{(t-1)}) - {\bg}(\bar{\bx}_{s,k}^{(t-1)})\|  + \\
	& \qquad\qquad\qquad \qquad\qquad\qquad \alpha\|{\bff}(\bar{\bx}_{s,k}^{(t-1)})\|\\\nonumber
	&\leq \delta_k\|\bar{\bx}_{k}^{(t-1)} - \bx_k^*\| + \alpha\frac{L_k}{\sqrt{M}} \|\bx_k^{(t-1)} - \bar{\bx}_{s,k}^{(t-1)}\| + \\ \label{eq:eqn_sys3k}
	& \qquad\qquad\qquad  \qquad\qquad\quad  \alpha\|{\bff}(\bar{\bx}_{s,k}^{(t-1)})\|.
\end{align}
Now, we will bound $\|\bx_k^{(t)} - \bx_k^{(t-1)}\|$. Since ${\bg}^{'}(\bar{\bx}_{s,k}^{(t)}) = \frac{1}{M}(\bC\bar{\bx}_{k}^{(t)} - \frac{(\bar{\bx}_{k}^{(t)})^T\bC\bar{\bx}_{k}^{(t)}}{\|\bar{\bx}_{k}^{(t)}\|^2}\bar{\bx}_{k}^{(t)} -  \sum_{p=1}^{k-1}\bq_{p}\bq_{p}^T\bC\bar{\bx}_{k}^{(t)})$,  we have ${\bg}^{'}(\bx_{s,k}^{*}) = \frac{1}{M}(c_k\bC\bq_{k} - \frac{c_k\bq_{k}^T\bC c_k\bq_k}{c_k^2\|\bq_{k}\|^2}\bq_{k} -  \sum_{p=1}^{k-1}\bq_{p}\bq_{p}^T\bC c_k\bq_{k}) = 0.$ Hence, 
\begin{align*}
	\|{\bg}_s^{'}(\bar{\bx}_{s,k}^{(t-1)})\| &= \sqrt{M}\|{\bg}^{'}(\bar{\bx}_{s,k}^{(t-1)})\| \\
	&= \sqrt{M}\|{\bg}^{'}(\bar{\bx}_{s,k}^{(t-1)}) - {\bg}^{'}({\bx}_{s,k}^*)\| \\
	&\leq L_k\sqrt{M}\|\bar{\bx}_{k}^{(t-1)} - \bx_k^*\|.
\end{align*}
Using the above inequality and Lemma~\ref{lemma:inequalitiesk}, we get
\begin{align}\nonumber
	\|\bs_k^{(t-1)}\| &= \|\bs_k^{(t-1)} - \bg_s(\bx_{k}^{(t-1)}) + \bg_s(\bx_{k}^{(t-1)})  - {\bg}_s(\bar{\bx}_{s,k}^{(t-1)}) \\\nonumber
	& \qquad\qquad\qquad + {\bg}_s^{'}(\bar{\bx}_{s,k}^{(t-1)}) - {\bff}_s(\bar{\bx}_{s,k}^{(t-1)})\|\\ \nonumber
	&\leq \|\bs_k^{(t-1)} - \bg_s(\bx_{k}^{(t-1)})\| + \|\bg_s(\bx_{k}^{(t-1)})  - {\bg}_s(\bar{\bx}_{s,k}^{(t-1)}) \| \\\nonumber
	& \qquad\qquad\qquad + \|{\bg}_s^{'}(\bar{\bx}_{s,k}^{(t-1)})\| + \|{\bff}_s(\bar{\bx}_{s,k}^{(t-1)})\|\\\nonumber
	&\leq \|\bs_k^{(t-1)} - \bg_s(\bx_{k}^{(t-1)})\| + L_k\|\bx_k^{(t-1)} - \bar{\bx}_{s,k}^{(t-1)}\| \\\label{eq:bound_sk}
	& +L_k\sqrt{M}\|\bar{\bx}_{k}^{(t-1)} - \bx_k^*\| + \sqrt{M}\|{\bff}(\bar{\bx}_{s,k}^{(t-1)})\|.
\end{align}
Thus,
\begin{align}\nonumber
	\|\bx_k^{(t)} - \bx_k^{(t-1)}\| &= \|(\bW\otimes\bI)\bx_k^{(t-1)} - \bx_k^{(t-1)} + \alpha\bs_k^{(t-1)}\| \\\nonumber
	&=  \|(\bW\otimes\bI - \bI)(\bx_k^{(t-1)}-\bar{\bx}_{s,k}^{(t-1)}) + \alpha\bs_k^{(t-1)}\|\\\nonumber
	&\leq 2\|\bx_k^{(t-1)}-\bar{\bx}_{s,k}^{(t-1)}\| + \alpha\|\bs_k^{(t-1)}\|\\ \label{eq:eqn_sys12k}
	&\leq \alpha\|\bs_k^{(t-1)} - \bg_s(\bx_{k}^{(t-1)})\| + (2+\alpha L_k)\times \\\nonumber
	&\|\bx_k^{(t-1)} - \bar{\bx}_{s,k}^{(t-1)}\| + \alpha L_k\sqrt{M}\|\bar{\bx}_{k}^{(t-1)} - \bx_k^*\|  \\
	& + \alpha \sqrt{M}\|{\bff}(\bar{\bx}_{s,k}^{(t-1)})\|\quad \text{using \eqref{eq:bound_sk}}
\end{align}
Using the above inequality in~\eqref{eq:eqn_sys11k}, we get
\begin{align}\nonumber
	&\|\bs_k^{(t)} - \bg_s(\bx_{k}^{(t)})\| \leq  (\frac{1+\beta}{2}+ \alpha L_k)\|\bs_k^{(t-1)} - \bg_s(\bx_{k}^{(t-1)})\| \\\nonumber
	&\qquad\qquad\qquad+ L_k(2+\alpha L_k)\|\bx_k^{(t-1)} - \bar{\bx}_{s,k}^{(t-1)}\|  + \\ \label{eq:eqn_sys1k}
	& \alpha L_k^2\sqrt{M}\|\bar{\bx}_{k}^{(t-1)} - \bx_k^*\| + \alpha L_k \sqrt{M}\|{\bff}(\bar{\bx}_{s,k}^{(t-1)})\|.
\end{align}
Writing a system of equations from~\eqref{eq:eqn_sys1k}, \eqref{eq:eqn_sys2k} and \eqref{eq:eqn_sys3k}, we have the following:
\begin{align}\nonumber
	&\begin{bmatrix}
		\|\bs_k^{(t)} - \bg_s(\bx_{k}^{(t)})\| \\
		\|\bx_k^{(t)} - \bar{\bx}_{s,k}^{(t)}\| \\
		\sqrt{M}\|\bar{\bx}_{k}^{(t)} - \bx_k^*\|
	\end{bmatrix} \leq
	\begin{bmatrix}
		(\frac{1+\beta}{2} + \alpha L_k) & L_k(2+\alpha L_k) & \alpha L_k^2\\
		\alpha & \frac{1+\beta}{2} & 0\\
		0 & \alpha L_k & \delta_k
	\end{bmatrix}\\
	&
	\times \begin{bmatrix}
		\|\bs_k^{(t-1)} - \bg_s(\bx_{k}^{(t-1)})\| \\
		\|\bx_k^{(t-1)} - \bar{\bx}_{s,k}^{(t-1)}\| \\
		\sqrt{M}\|\bar{\bx}_{k}^{(t-1)} - \bx_k^*\|
	\end{bmatrix} +
\|{\bff}(\bar{\bx}_{s,k}^{(t-1)})\|\begin{bmatrix}
 \alpha L_k \sqrt{M}  \\
0\\
	\alpha\sqrt{M}
\end{bmatrix}.
\end{align}
Let us define $\bP_k(\alpha) = \begin{bmatrix}
		(\frac{1+\beta}{2} + \alpha L_k) & L_k(2+\alpha L_k) & \alpha L_k^2\\
		\alpha & \frac{1+\beta}{2} & 0\\
		0 & \alpha L_k & \rho_k
	\end{bmatrix}.$
Since $\bP_k(\alpha)$ has non-negative entries and $\bP_k^2(\alpha)$ has all positive entries, each entry of $\bP_k^t(\alpha)$ will be $\cO(\rho(\bP_k(\alpha))^t)$, where $\rho(\bP_k(\alpha))$ is the spectral radius of $\bP_k(\alpha)$. From Lemma~\ref{lemma:boundP} in Appendix~\ref{app:boundP} we know if we choose $\alpha < \frac{\lambda_k-\lambda_{k+1}}{(k+5)(k+6)}(\frac{1-\beta}{9\lambda_1})^2$, then $\rho(\bP_k(\alpha))<1$.
Also, we know ${\bff}(\bar{\bx}_{s,k}^{(t)}) = \frac{1}{M}\sum_{i=1}^{M} \sum_{p=1}^{k-1}(\frac{\bx_{i,p}^{(t)}(\bx_{i,p}^{(t)})^T}{\|\bx_{i,p}^{(t)}\|^2} - \bq_p\bq_p^T)\bC_i\bar{\bx}_{k}^{(t)}$. Thus,
\begin{align*}
    \|{\bff}(\bar{\bx}_{s,k}^{(t)})\| &= \|\frac{1}{M}\sum_{i=1}^{M} \sum_{p=1}^{k-1}(\frac{\bx_{i,p}^{(t-1)}(\bx_{i,p}^{(t-1)})^T}{\|\bx_{i,p}^{(t-1)}\|^2} - \bq_p\bq_p^T)\bC_i\bar{\bx}_{k}^{(t-1)}\| \\
    &\leq \frac{1}{M}\sum_{i=1}^{M} \|\sum_{p=1}^{k-1}(\frac{\bx_{i,p}^{(t-1)}(\bx_{i,p}^{(t-1)})^T}{\|\bx_{i,p}^{(t-1)}\|^2} - \bq_p\bq_p^T)\|\|\bC_i\|\|\bar{\bx}_{k}^{(t-1)}\|.
\end{align*}
From~\eqref{eq:assumption_edsa}, we know $\|\sum_{p=1}^{k-1}(\frac{\bx_{i,p}^{(t-1)}(\bx_{i,p}^{(t-1)})^T}{\|\bx_{i,p}^{(t)}\|^2} - \bq_p\bq_p^T)\| \leq b_{i}\nu_{i}^t$. Let $b = (\max_i b_i)\lambda_1\|\bar{\bx}_{k}^{(t-1)}\| > 0$ and $\nu = \max_i \nu_i < 1$.  
Thus the system of equations becomes
\begin{align}\nonumber
	\begin{bmatrix}
		\|\bs_k^{(t)} - \bg_s(\bx_{k}^{(t)})\| \\
		\|\bx_k^{(t)} - \bar{\bx}_{s,k}^{(t)}\| \\
		\sqrt{M}\|\bar{\bx}_{k}^{(t)} - \bx_k^*\|
	\end{bmatrix} &\leq
	\rho(\bP_k(\alpha))^t
		\begin{bmatrix}
		\|\bs_k^{(0)} - \bg_s(\bx_{k}^{(0)})\| \\
		\|\bx_k^{(0)} - \bar{\bx}_{s,k}^{(0)}\| \\
		\sqrt{M}\|\bar{\bx}_{k}^{(0)} - \bx_k^*\|
	\end{bmatrix}  \\
	& + b\nu^t
\begin{bmatrix}
 \alpha L_k \sqrt{M}  \\
0\\
	\alpha\sqrt{M}
\end{bmatrix}.
\end{align}
This implies that if $\alpha < \frac{\lambda_k-\lambda_{k+1}}{(k+5)(k+6)}(\frac{1-\beta}{9\lambda_1})^2$, then $\|\bar{\bx}_{k}^{(t)} - \bx_k^*\|$, $\|\bx_{i,k}^{(t)} - \bar{\bx}_{k}^{(t)}\|$ and $\|\bs_{i,k}^{(t)} - \bg_{k}^{(t)}\|$ converge at a linear rate to 0. In other words, $\bx_{i,k}^{(t)}$ converges linearly to $\bx_k^* = \pm c_k\bq_k$, where $c_k$ is some constant.
\end{proof} 
From Theorem~\ref{theorem:theorem1} and Theorem~\ref{theorem:theoremk}, we can see that if $\alpha < \frac{\min_k(\lambda_k-\lambda_{k+1})}{(K+5)(K+6)}(\frac{1-\beta}{9\lambda_1})^2$, where $\lambda_1$ is the largest eigenvalue of $\bC$, $K$ is the number of eigenvectors to be estimated and $\beta$ is the absolute value of the second-largest eigenvalue of the weight matrix $\bW$, then the estimates $\bx_{i,k}^{(t)}$ of the $k^{th}$ eigenvector for $k=1,\ldots,K$ at $i^{th}$ node, $i=1,\ldots,M$ converge at a linear rate to a multiple of the eigenvector $\bq_k$ of $\bC$ i.e., $\pm c_k \bq_k$. It is clear from the condition on $\alpha$ that with larger eigengap $(\lambda_k - \lambda_{k+1})$, a larger range of step size is possible which directly affects the rate of convergence. Also, as the connectivity in the network increases, $\beta$ decreases which again increases the range of $\alpha$ thus increasing the rate of convergence. The final step of the algorithm normalizes the estimated eigenvectors to ensure that they are orthonormal. 

\section{Experimental Results}\label{sec:results}
In this section, we demonstrate the efficacy of our proposed FAST-PCA algorithm through experiments on synthetic as well as real-world data. We compare the performance of our algorithm with existing algorithms of (centralized) orthogonal iteration (OI), (centralized) sequential power method (SeqPM), distributed sequential power method (SeqDistPM), distributed orthogonal iteration algorithms (S-DOT, SA-DOT)~\cite{xiang.gang.bajwa.2021}, an orthogonal iteration+gradient tracking based method DeEPCA~\cite{deepca} and our previously proposed distributed Sanger's algorithm (DSA)~\cite{gang.bajwa.2021}. In the case of OI and SeqPM, we assume that all the samples are available at a single location and, for the estimation of $K$ dominant eigenvectors of $\bC$, SeqPM performs power method $K$ times sequentially starting from the most dominant eigenvector. SeqDistPM is the distributed version of SeqPM, which uses an explicit consensus loop with a fixed number $T_c$ of consensus iterations per iteration of the power iteration~\cite{cksvd.allerton.2013, cksvd}, whereas S-DOT and SA-DOT are distributed versions of OI using fixed and increasing number of consensus iterations per orthogonal iteration. The DSA is a distributed generalized Hebbian algorithm that converges linearly to a neighborhood of the true eigenvectors of the global covariance matrix. Assuming that the cost of communicating $\R^{d\times K}$ matrices across the network in one (outer loop) iteration is one unit, the x-axes of all the plots indicate the total communication cost, i.e., total inner and outer loop communications. In the algorithms with one time scale, this is the same as the number of total outer loop iterations (since inner iterations = 0). The y-axes of the plots express the average angle between the estimated eigenvectors $\bx_{i,k}^{(t)}$ and the true eigenvectors $\pm\bq_k$ across all the $M$ nodes in the network given by
\begin{align}
        \cE = \frac{1}{MK}\sum_{i=1}^{M}\sum_{k=1}^{K}\bigg(1 - \bigg(\frac{\bx_{i,k}^T\bq_k}{\|\bx_{i,k}\|}\bigg)^2\bigg).
\end{align}
\subsection{Synthetic Data}
We study the effects of factors like eigengap and distinct/repeated eigenvalues on the performance of our algorithm in comparison to various other existing PCA and distributed PCA algorithms. To that end, we generate Erdos-Renyi graphs ($p=0.5$) and cyclic graphs to simulate the distributed setup with $M=20$ nodes. We also generate synthetic data with different eigengaps of $\Delta_K = \frac{\lambda_{K+1}}{\lambda_K} \in \{0.8, 0.97\}$. The data is generated such that each node has 5000 i.i.d samples, i.e. $N_i = 5000$ with $d=20$ drawn from a multivariate Gaussian distribution with zero mean and fixed covariance matrix $\bSigma$. The number of eigenvectors to be estimated is set to $K =5$. For SeqPM, SeqDistPM and S-DOT, the number of consensus iterations per outer loop iteration is $T_c = 50$ and the number of maximum consensus iterations in the case of SA-DOT is set to $50$ as well. For the Erdos-Renyi topology, we use a step size of $\alpha=0.7$ for our algorithm and for cyclic graph, we use $\alpha=0.1$ picked by trial and error. The results reported are an average of $10$ Monte-Carlo simulations.

Figure~\ref{fig:fast_comp1} compares the performance of our proposed FAST-PCA algorithm with centralized OI, SeqPM, SeqDistPM, S-DOT, SA-DOT, DeEPCA and DSA when the subspace eigenvalues $\lambda_1,\ldots, \lambda_K$ are distinct, i.e. $\lambda_1>\lambda_2>\ldots, >\lambda_K$. It is clear that our algorithm significantly outperforms SeqPM and SeqDistPM since estimating one eigenvector at a time slows down the convergence of these methods. Also, the requirement of an explicit consensus loop implies the communication cost of these methods is high as indicated by the plots. Even though S-DOT and SA-DOT estimate the whole subspace (but not necessarily the eigenvectors) simultaneously, an explicit consensus loop makes those relatively slow as well. As expected, since DSA converges only to a neighborhood of the true solutions, our new proposed algorithm outperforms it. The performance of FAST-PCA is similar to DeEPCA in this case, although DeEPCA requires explicit QR normalization after every iteration whereas FAST-PCA requires no explicit normalization. This normalization step in DeEPCA requires an additional $\cO(K^2d)$ computations per iterations. It is desired from any distributed algorithm to perform similar to their centralized counterparts and it is clear from the figures that our algorithm FAST-PCA performs very similar to centralized OI.

Figure~\ref{fig:fast_comp2} shows a similar performance comparison when the subspace eigenvalues are very close to each other, i.e. $\lambda_1\approx\lambda_2\approx\ldots\approx\lambda_K$. The Gaussian distribution generated in this case has covariance matrix $\bSigma$ with equal subspace eigenvalues but due to the finite number of samples, the eigenvalues of $\bC$ are not exactly equal albeit almost equal. It is evident that the performance of every algorithm significantly deprecates in this scenario. Nonetheless, in this case FAST-PCA outperforms all other algorithms including DeEPCA, while still being close to centralized OI in terms of performance.
\begin{figure}
	\centering
	\begin{subfigure}{.23\textwidth}
		\centering
		\includegraphics[width=\linewidth]{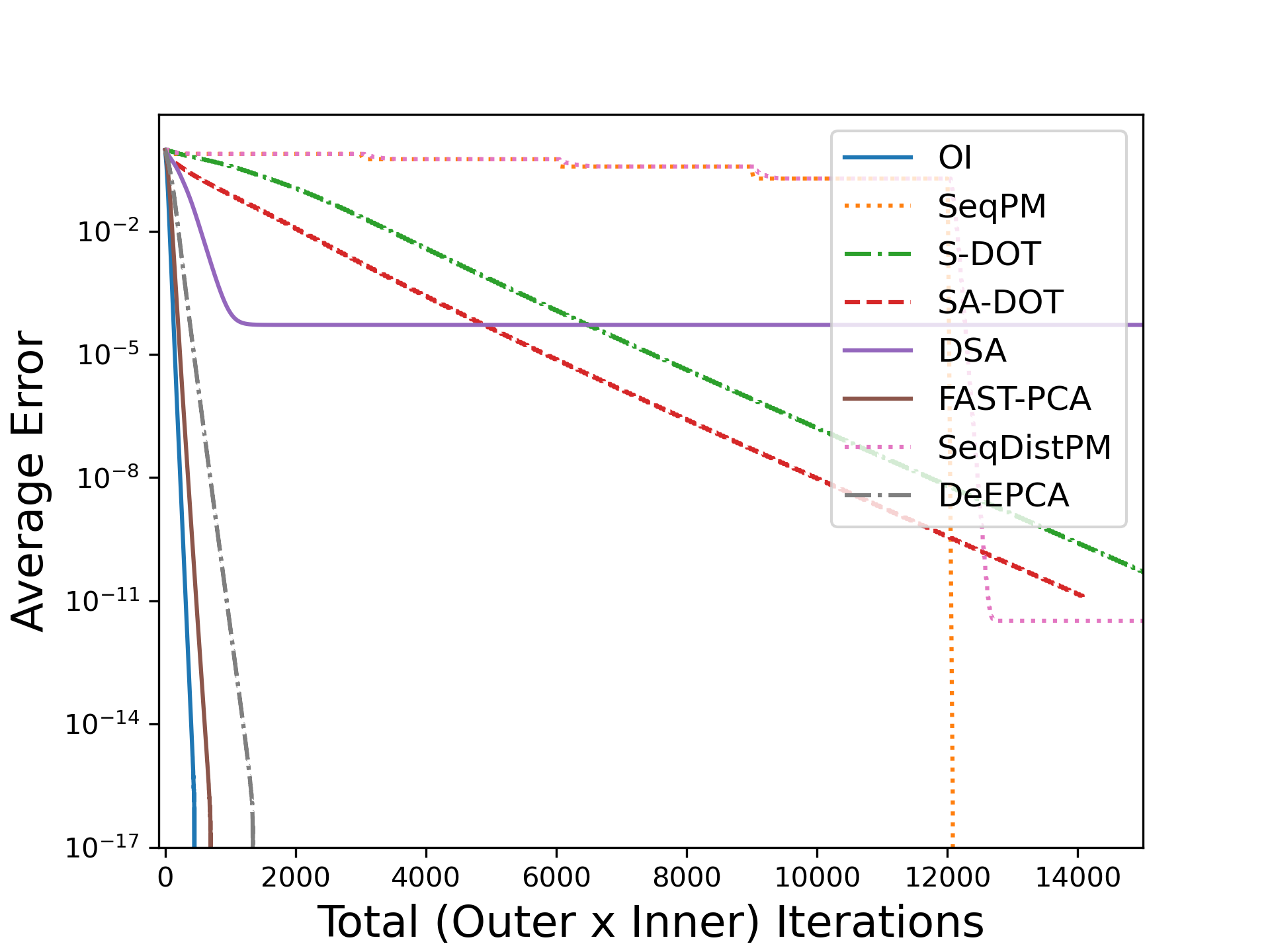}
		\caption{Erdos-Renyi, $\Delta_K = 0.8$}
		\label{fig:a1}
	\end{subfigure}
	\hfil
	\begin{subfigure}{.23\textwidth}
		\centering
		\includegraphics[width=\linewidth]{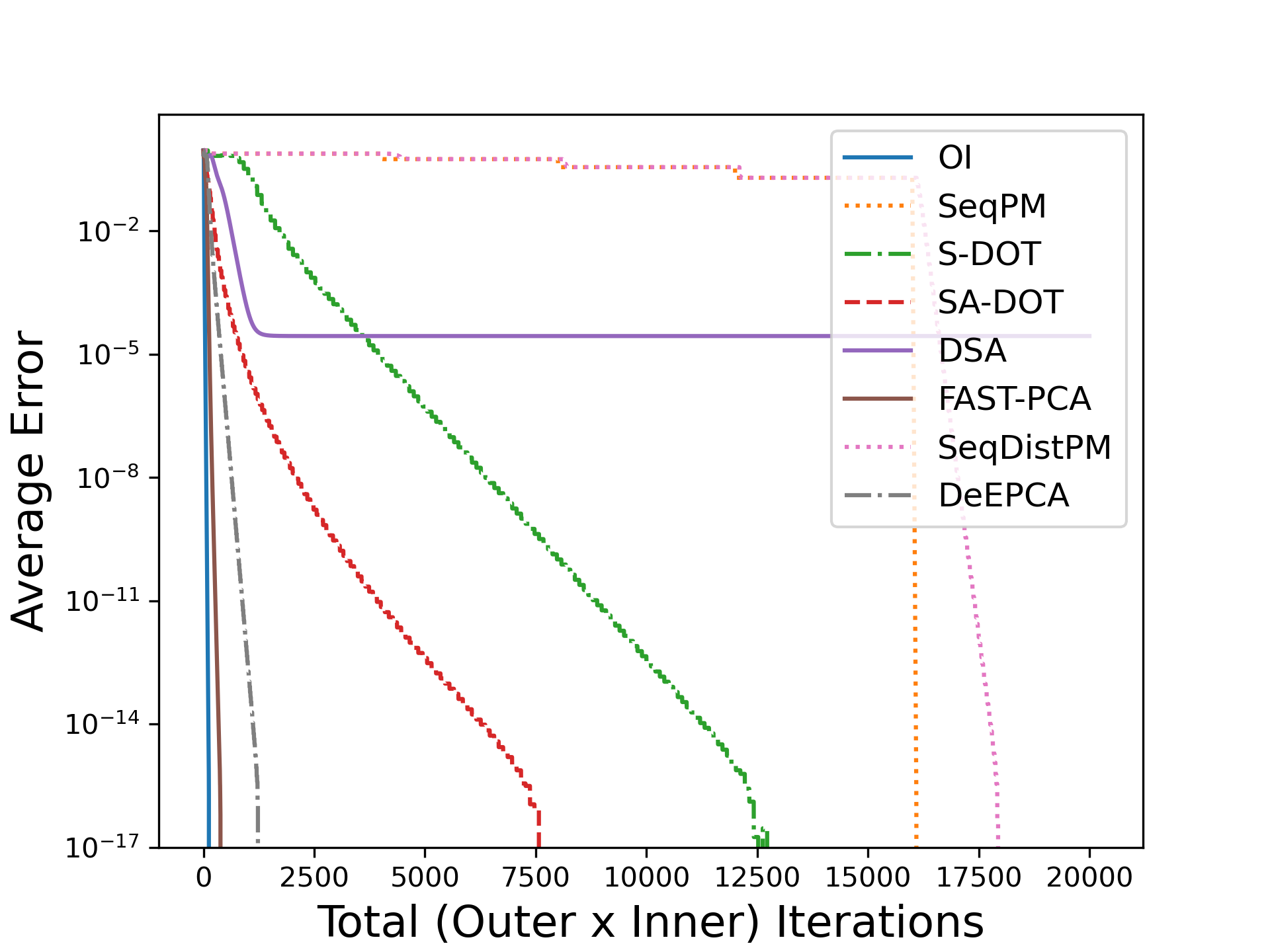}
		\caption{Erdos-Renyi, $\Delta_K = 0.97$}
		\label{fig:b1}
	\end{subfigure}
	\hfil
	\begin{subfigure}{.23\textwidth}
		\centering
		\includegraphics[width=\linewidth]{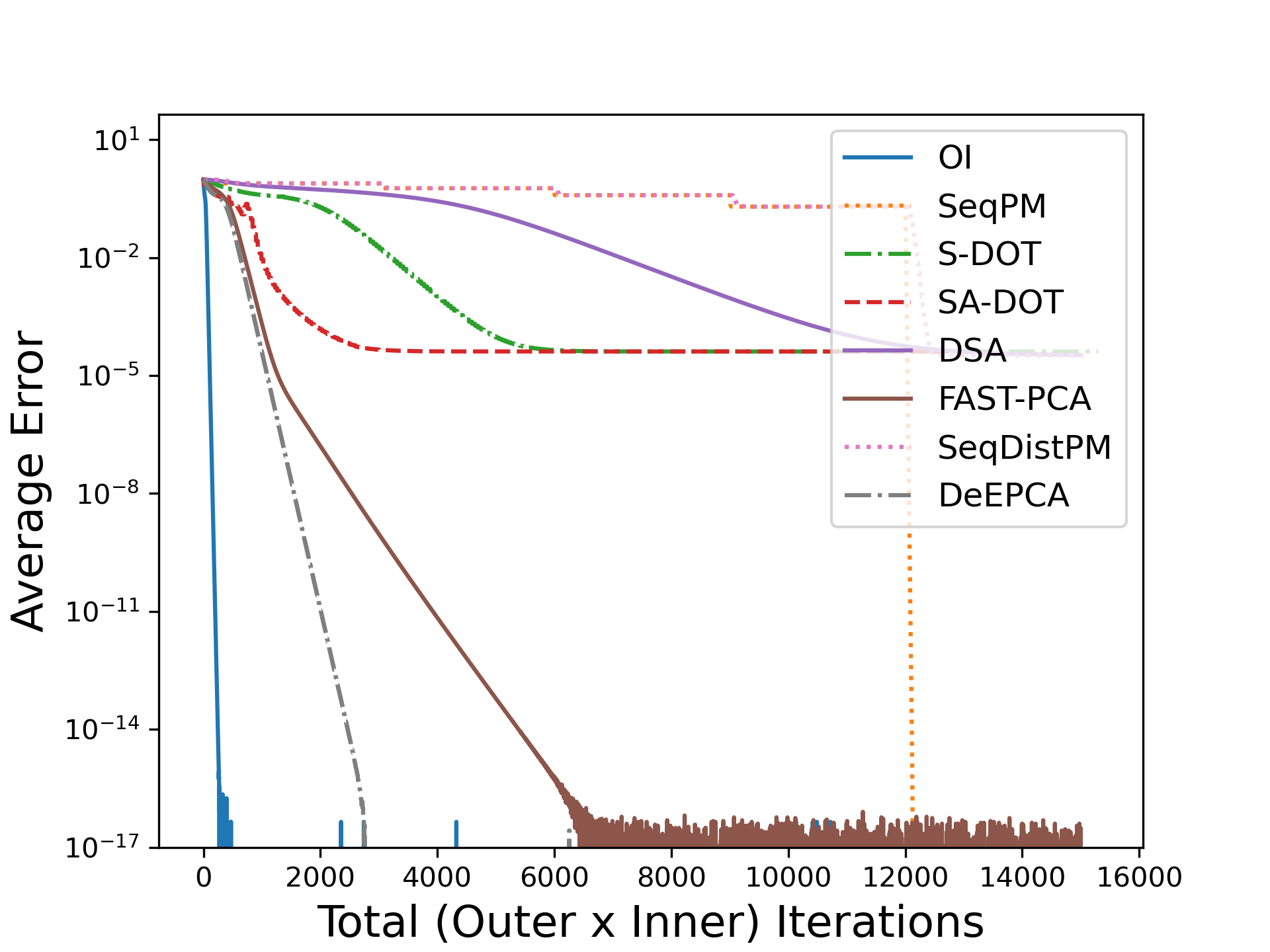}
		\caption{Cycle, $\Delta_K = 0.8$}
		\label{fig:c1}
	\end{subfigure}
	\hfil
	\begin{subfigure}{.23\textwidth}
		\centering
		\includegraphics[width=\linewidth]{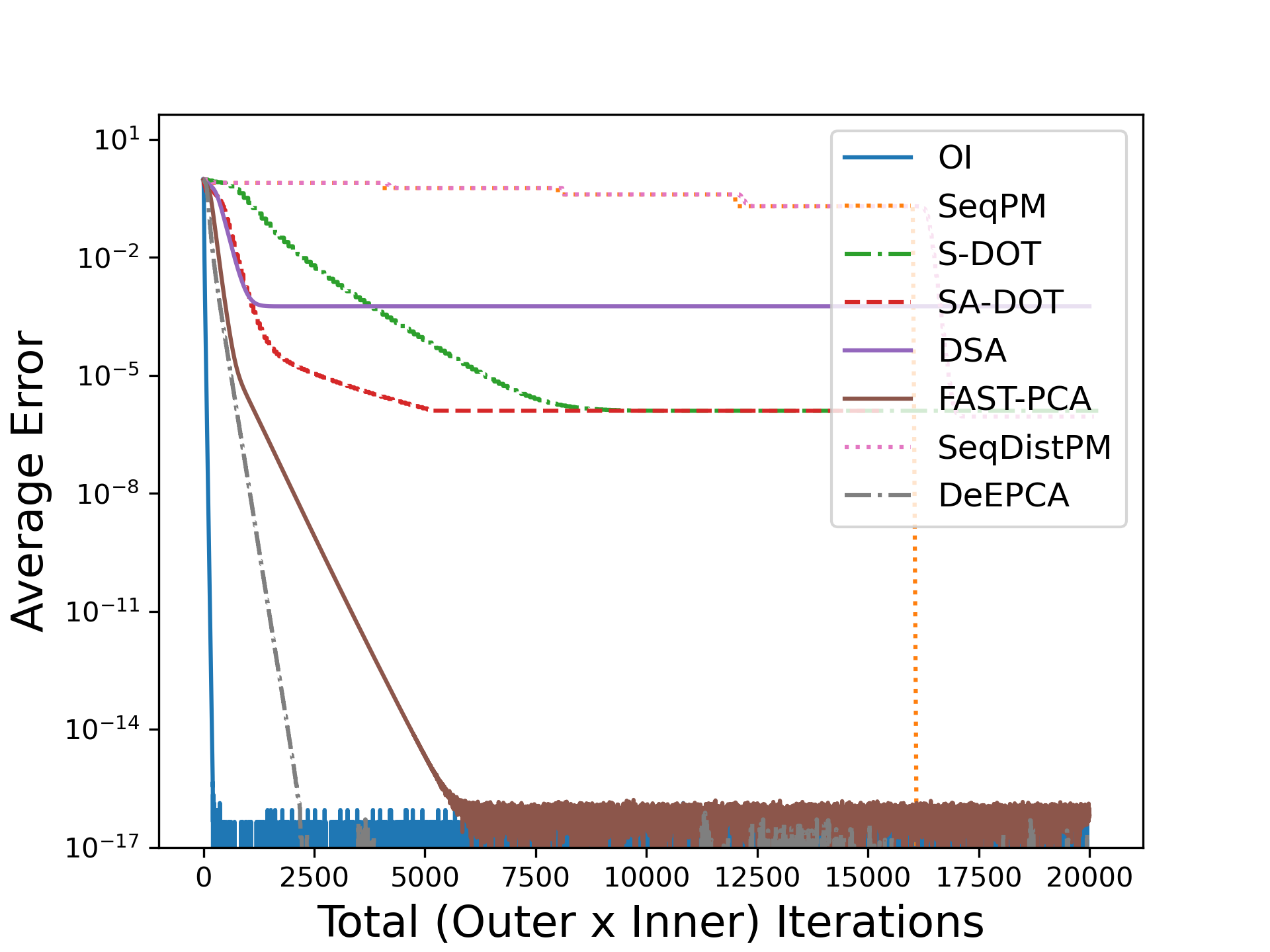}
		\caption{Cycle, $\Delta_K = 0.97$}
		\label{fig:d1}
	\end{subfigure}
	\caption{Performance comparison of FAST-PCA with various algorithms for two different eigengaps and two graph topologies. Here, the top $K+1$ eigenvalues of $\bC$ are distinct.}
	\label{fig:fast_comp1}
\end{figure}

\begin{figure}
	\centering
	\begin{subfigure}{.23\textwidth}
		\centering
		\includegraphics[width=\linewidth]{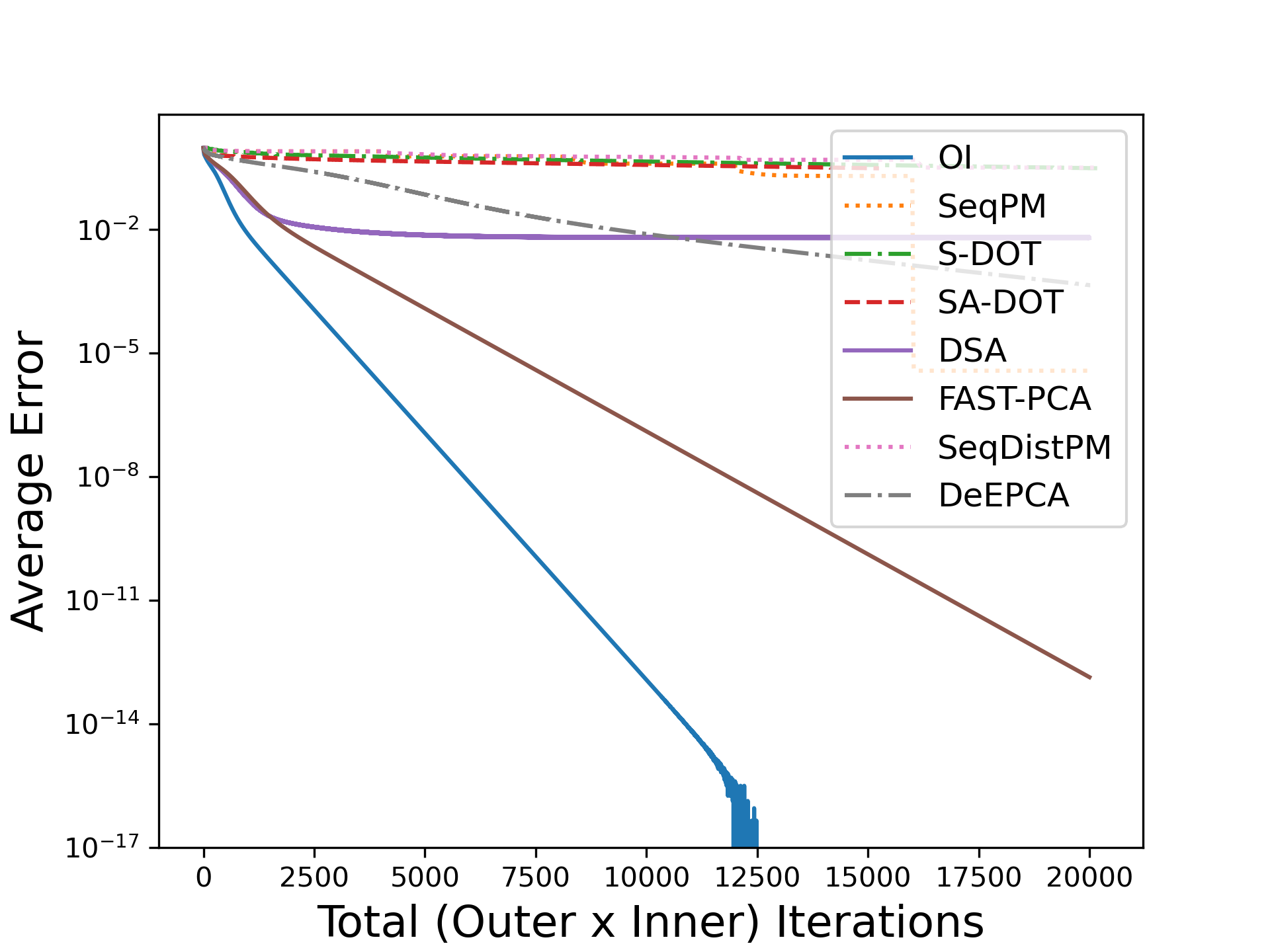}
		\caption{Erdos-Renyi, $\Delta_K = 0.8$}
		\label{fig:a2}
	\end{subfigure}
	\hfil
	\begin{subfigure}{.23\textwidth}
		\centering
		\includegraphics[width=\linewidth]{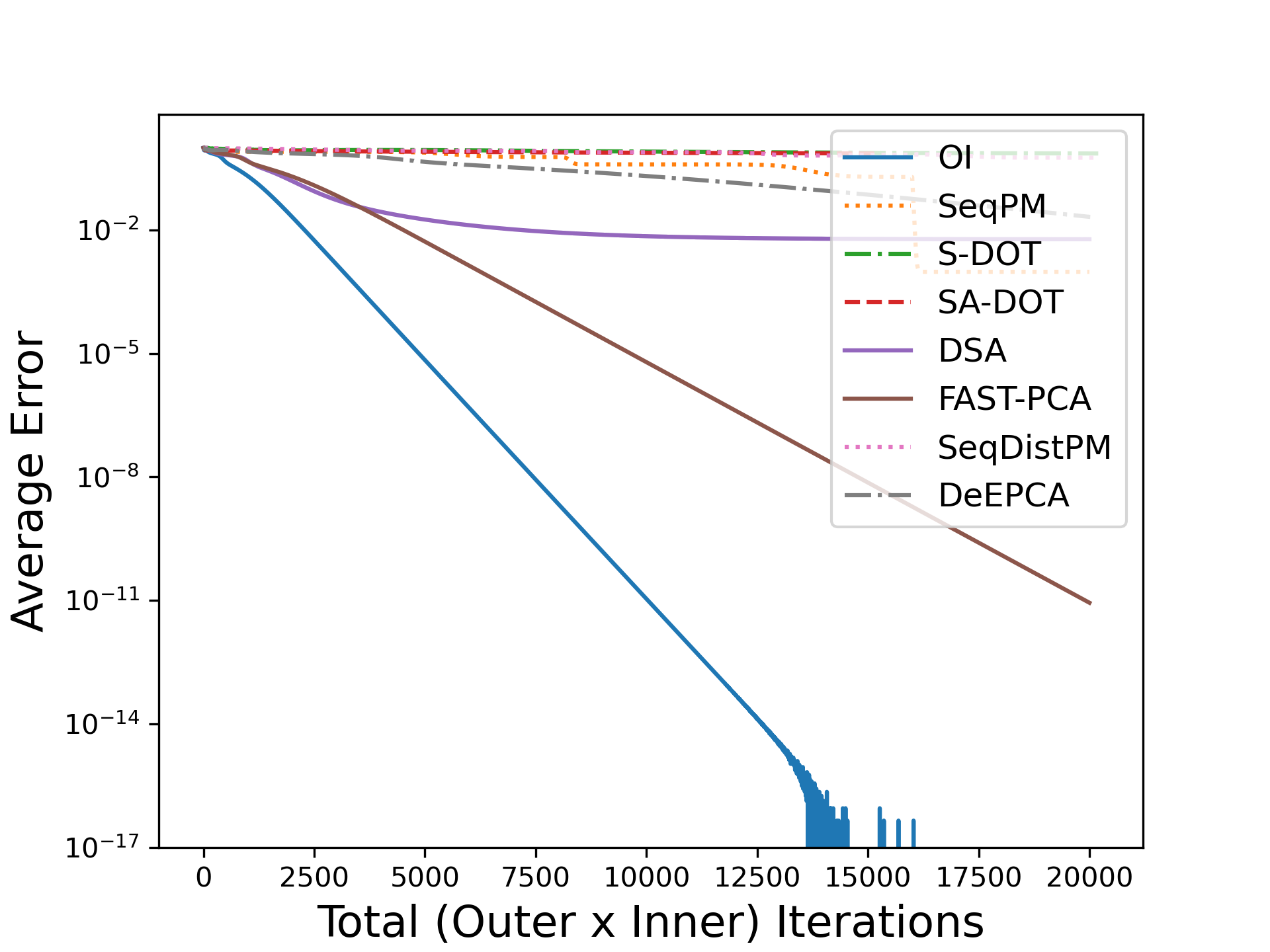}
		\caption{Erdos-Renyi, $\Delta_K = 0.97$}
		\label{fig:b2}
	\end{subfigure}
	\hfil
	\begin{subfigure}{.23\textwidth}
		\centering
		\includegraphics[width=\linewidth]{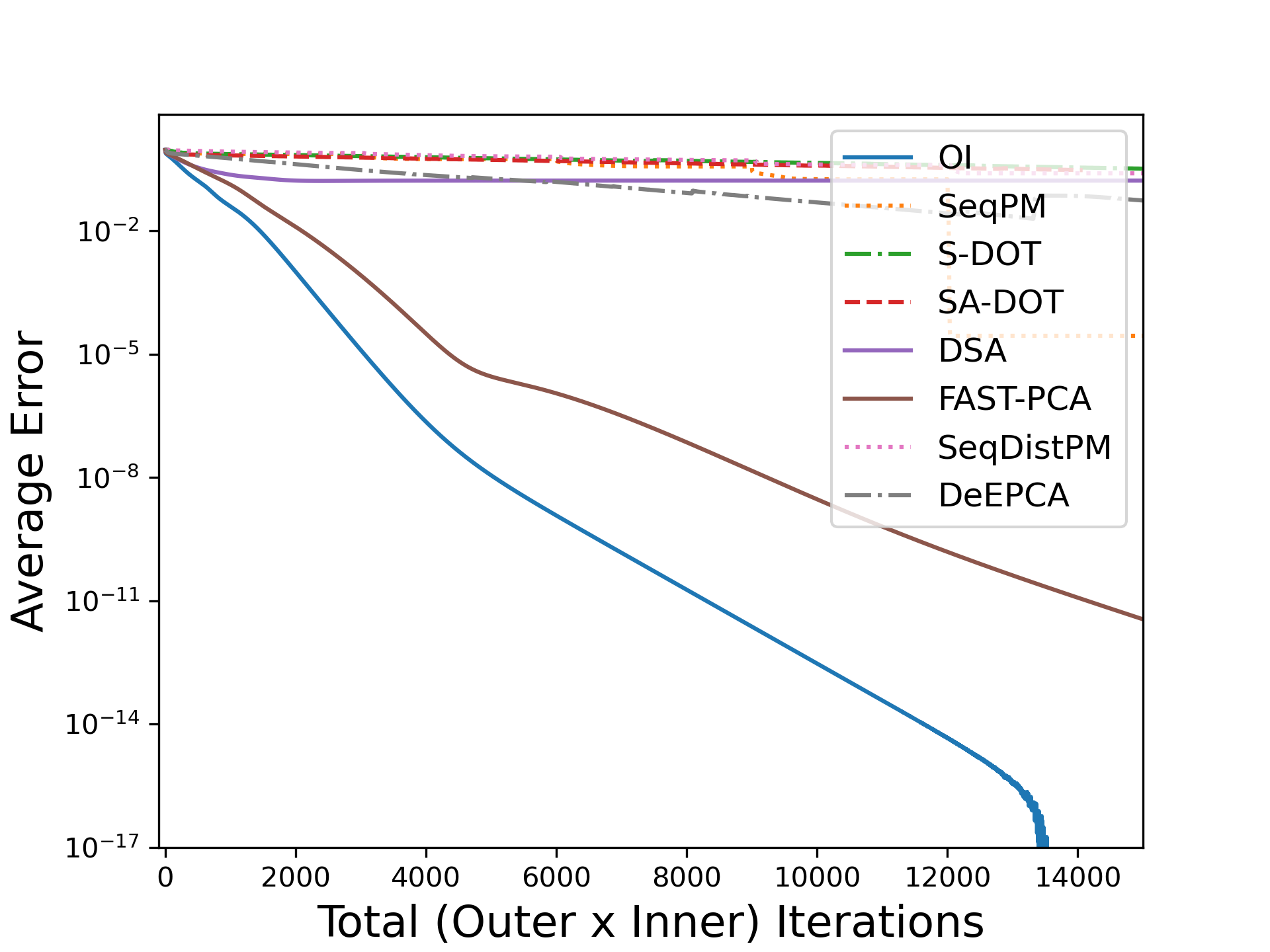}
		\caption{Cycle, $\Delta_K = 0.8$}
		\label{fig:c2}
	\end{subfigure}
	\hfil
	\begin{subfigure}{.23\textwidth}
		\centering
		\includegraphics[width=\linewidth]{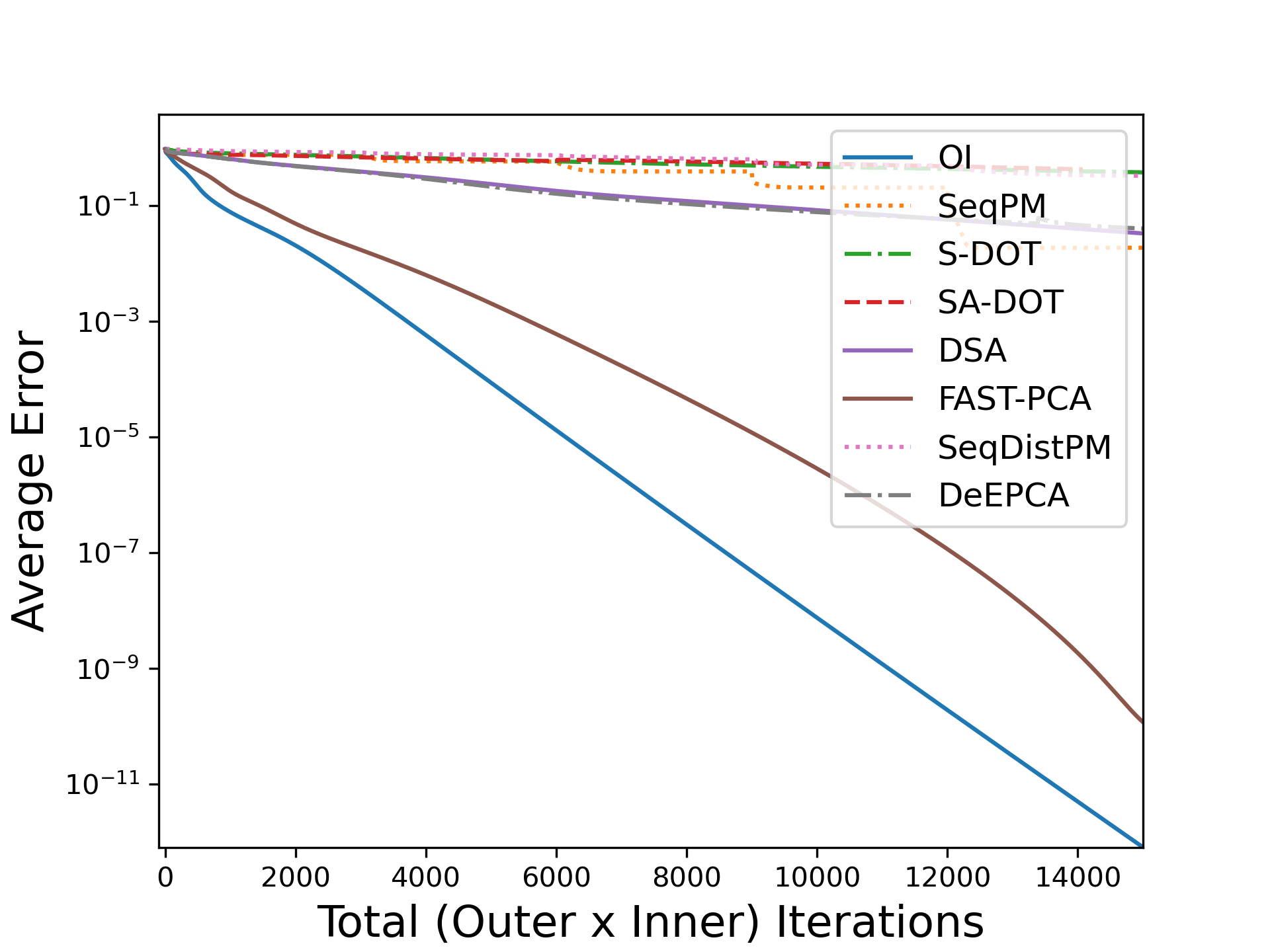}
		\caption{Cycle, $\Delta_K = 0.97$}
		\label{fig:d2}
	\end{subfigure}
	\caption{Performance comparison of FAST-PCA with various algorithms for two different eigengaps and two graph topologies in the case of first $K$ (almost) equal eigenvalues.}
	\label{fig:fast_comp2}
\end{figure}

\subsection{Real-World Data}
We also provide some results for the real-world datasets of MNIST~\cite{mnist.2010} and CIFAR10~\cite{cifar.2009}. We simulate the distributed setup with an Erdos-Renyi graph with $p=0.5$ and $M=20$ nodes. Both these datasets have $N=60,000$ samples distributed equally among the nodes, making $N_i=3000$. The data dimensions are $d=784$ for MNIST and $d=1024$ for CIFAR10. Figure~\ref{fig:a3} shows the comparison of the various PCA algorithms for MNIST dataset when $K=10$ dominant eigenvectors are estimated. The step size used for FAST-PCA and DSA in this case is $\alpha=0.1$. Similar results for CIFAR10 are shown in Figure~\ref{fig:b3} when $K=5$ and $\alpha = 0.8$ is used.

\begin{figure}
	\centering
	\begin{subfigure}{.23\textwidth}
		\centering
		\includegraphics[width=\linewidth]{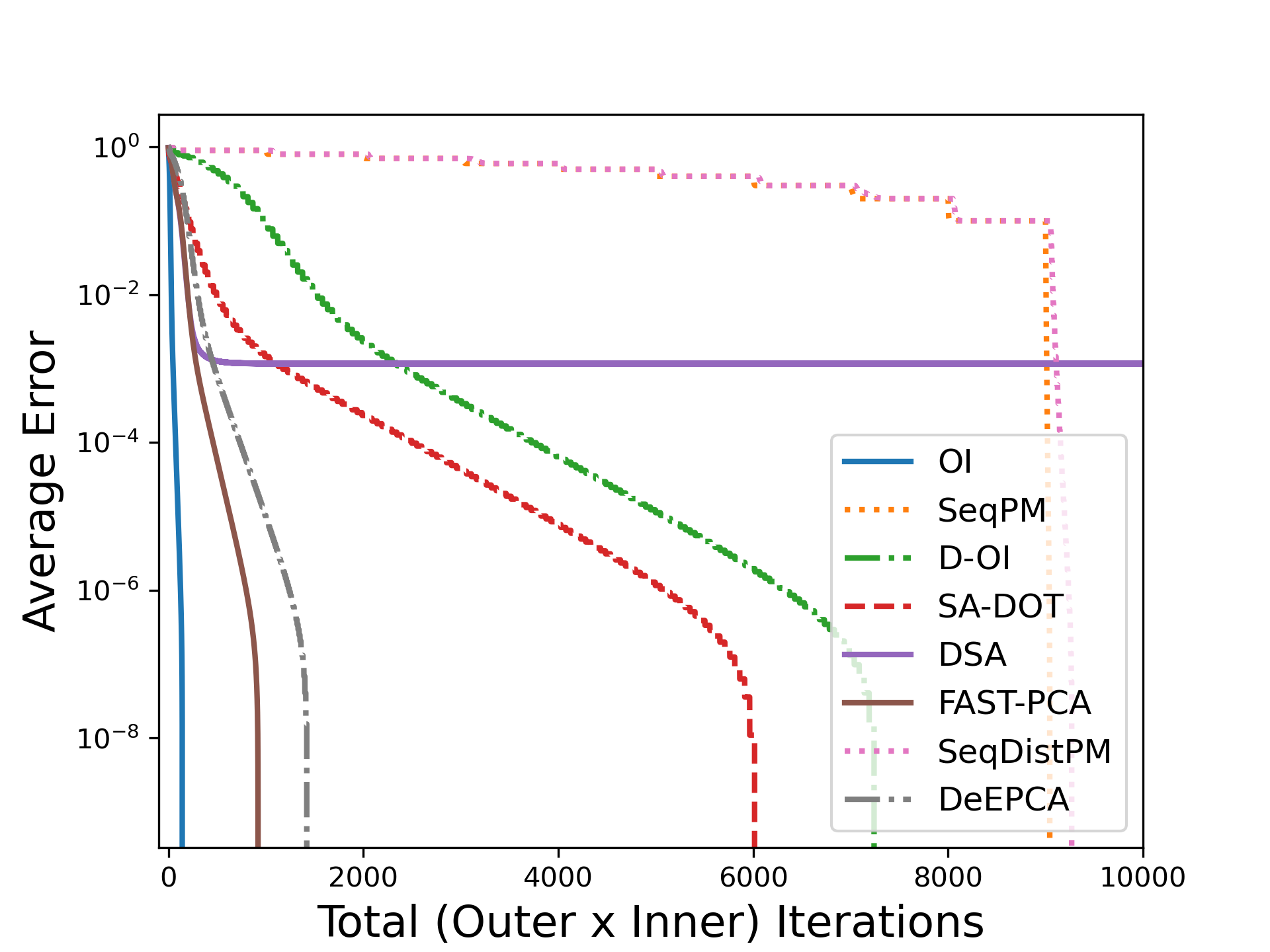}
		\caption{MNIST, $K=10$}
		\label{fig:a3}
	\end{subfigure}
	\hfil
	\begin{subfigure}{.23\textwidth}
		\centering
		\includegraphics[width=\linewidth]{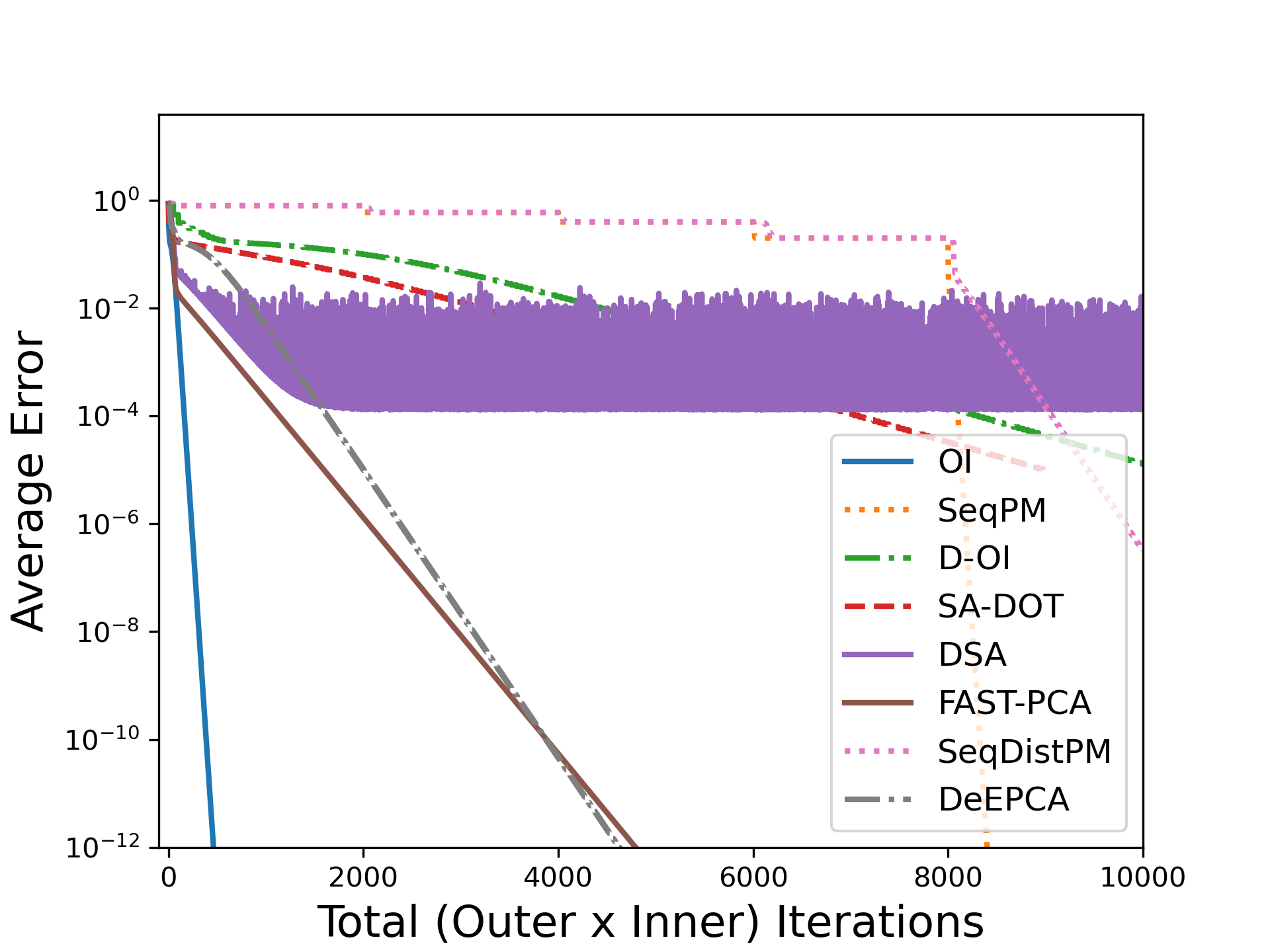}
		\caption{CIFAR10, $K = 5$}
		\label{fig:b3}
	\end{subfigure}
	\caption{Performance comparison of FAST-PCA with various algorithms for MNIST and CIFAR10}
	\label{fig:fast_comp3}
\end{figure} 
\section{Conclusion}\label{sec:conc}
In this paper, we proposed and analyzed a novel algorithm for distributed Principal Component Analysis (PCA) that truly serves the complete purpose of dimension reduction and uncorrelated feature learning in the scenario where data samples are distributed across a network. We provided detailed theoretical analysis to prove that our proposed algorithm converges linearly, exactly and globally, i.e., starting from any random unit vectors, to the eigenvectors of the global covariance matrix. We also provided experimental results that further validate our claims and demonstrate the communication efficiency and overall effectiveness of our solution. In the future, we aim to solve the problem of distributed PCA for estimation of multiple eigenvectors in the case of streaming data. Other possible directions are considering asynchronicity in the network and the case of directed and time-varying graphs.

\begin{appendices} 

\section{Statement and Proof of Lemma~\ref{lemma:coeff_decay_lower}}\label{app:lemma3}
\begin{lemma}\label{lemma:coeff_decay_lower}
	Suppose $z_{k,k}^{(0)} \neq 0$ and $\alpha <\frac{1}{\lambda_1}$. Then the following is true for $\gamma_k = \Big(\frac{1}{1 + \alpha\lambda_k }\Big)^2 < 1 $ and some constant $a_1 > 0$:
	\begin{equation}
	\sum_{l=1}^{k-1}({z}_{k,l}^{(t+1)})^2 \leq a_1\gamma_k^{t+1}.
	\end{equation}
\end{lemma}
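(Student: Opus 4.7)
My plan is to bound each lower-order coefficient $z_{k,l}^{(t)}$, $l<k$, by comparing it with $z_{k,k}^{(t)}$ through the explicit scalar recursions \eqref{eq:eqn_z_lower} and \eqref{eq:eqn_z_upper}. Dividing these two recursions, the normalization factor $a_k^{(t)}$ cancels, and I obtain the ratio identity
\begin{equation*}
\frac{z_{k,l}^{(t+1)}}{z_{k,k}^{(t+1)}} \;=\; \frac{z_{k,l}^{(t)}}{z_{k,k}^{(t)}} \cdot \frac{1-\alpha R^{(t)}}{1+\alpha\lambda_k -\alpha R^{(t)}}, \qquad R^{(t)}:=(\tilde{\bx}_{g,k}^{(t)})^{\tT}\bC\,\tilde{\bx}_{g,k}^{(t)}.
\end{equation*}

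The first step is a well-posedness check: since $\bC\succeq 0$ and $\|\tilde{\bx}_{g,k}^{(t)}\|=1$, we have $0\le R^{(t)}\le\lambda_1$, so the hypothesis $\alpha<1/\lambda_1$ gives $1-\alpha R^{(t)}\ge 0$ and $1+\alpha\lambda_k-\alpha R^{(t)}>0$. Together with $a_k^{(t)}>0$ (a ratio of positive norms) and $z_{k,k}^{(0)}\ne 0$, an easy induction using \eqref{eq:eqn_z_upper} at $l=k$ shows $z_{k,k}^{(t)}\ne 0$ for every $t$, so the ratio above is well-defined.

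The key algebraic step is to show
\begin{equation*}
\frac{1-\alpha R^{(t)}}{1+\alpha\lambda_k -\alpha R^{(t)}} \;\le\; \frac{1}{1+\alpha\lambda_k}.
\end{equation*}
Clearing denominators, this reduces to $-\alpha^{2}\lambda_k R^{(t)}\le 0$, which is immediate since all three factors are nonnegative. Iterating the ratio identity $t+1$ times therefore yields
\begin{equation*}
\left|\frac{z_{k,l}^{(t+1)}}{z_{k,k}^{(t+1)}}\right| \;\le\; \Big(\frac{1}{1+\alpha\lambda_k}\Big)^{\!t+1}\!\left|\frac{z_{k,l}^{(0)}}{z_{k,k}^{(0)}}\right|.
\end{equation*}

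To finish, I use $|z_{k,k}^{(t+1)}|\le 1$, which is automatic from $\|\tilde{\bx}_{g,k}^{(t+1)}\|=1$ and the orthonormality of $\{\bq_l\}$. Multiplying through and squaring gives $(z_{k,l}^{(t+1)})^{2}\le \gamma_k^{t+1}\,(z_{k,l}^{(0)})^{2}/(z_{k,k}^{(0)})^{2}$, and summing over $l=1,\dots,k-1$ produces the claim with the explicit constant
\begin{equation*}
a_1 \;=\; \frac{\sum_{l=1}^{k-1}(z_{k,l}^{(0)})^{2}}{(z_{k,k}^{(0)})^{2}} \;\le\; \frac{1}{(z_{k,k}^{(0)})^{2}},
\end{equation*}
which is finite by the initialization hypothesis. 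The only mild obstacle is the positivity bookkeeping for $1-\alpha R^{(t)}$ and $z_{k,k}^{(t)}$; once the step-size condition $\alpha<1/\lambda_1$ is used to pin those down, the contraction argument is a short algebraic manipulation rather than a dynamical one.
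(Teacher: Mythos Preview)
Your proof is correct and follows essentially the same approach as the paper: form the ratio $z_{k,l}^{(t)}/z_{k,k}^{(t)}$ so that the normalization $a_k^{(t)}$ cancels, show the multiplicative factor is at most $1/(1+\alpha\lambda_k)$ using $0\le R^{(t)}\le\lambda_1<1/\alpha$, iterate, and then use $|z_{k,k}^{(t+1)}|\le 1$ together with the initialization hypothesis to extract the constant. Your write-up is in fact slightly more careful than the paper's, since you explicitly verify $z_{k,k}^{(t)}\neq 0$ for all $t$ before dividing, and your constant $a_1=\sum_{l<k}(z_{k,l}^{(0)})^2/(z_{k,k}^{(0)})^2$ is sharper than the paper's cruder $(k-1)/\tilde\eta$.
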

\begin{proof}
For $l = 1,\ldots, k-1$ we know from \eqref{eq:eqn_z_upper} that ${z}_{k,l}^{(t+1)} = a_k^{(t)}\big(1 - \alpha(\tilde{\bx}_{g,k}^{(t)})^T\bC\tilde{\bx}_{g,k}^{(t)}\big){z}_{k,l}^{(t)}$. Since $(\tilde{\bx}_{g,k}^{(t)})^T\bC\tilde{\bx}_{g,k}^{(t)} \leq \lambda_1 < \frac{1}{\alpha}$, we have $1 + \alpha(\lambda_k - (\tilde{\bx}_{g,k}^{(t)})^T\bC\tilde{\bx}_{g,k}^{(t)}) >  \alpha\lambda_k \geq 0$.

Thus, we have for $l = 1, \cdots k-1$,
\begin{eqnarray*}
	\Bigg(\frac{{z}_{k,l}^{(t+1)}}{{z}_{k,k}^{(t+1)}}\Bigg)^2 &=& \Bigg(\frac{1 - \alpha (\tilde{\bx}_{g,k}^{(t)})^T\bC\tilde{\bx}_{g,k}^{(t)}}{1 +\alpha(\lambda_k - (\tilde{\bx}_{g,k}^{(t)})^T\bC\tilde{\bx}_{g,k}^{(t)})}\Bigg)^2\Bigg(\frac{{z}_{k,l}^{(t)}}{{z}_{k,k}^{(t)}}\Bigg)^2 \\
	&=&  \Bigg(1 - \frac{\alpha \lambda_k}{1 + \alpha (\lambda_k  - (\tilde{\bx}_{g,k}^{(t)})^T\bC\tilde{\bx}_{g,k}^{(t)})}\Bigg)^2\Bigg(\frac{{z}_{k,l}^{(t)}}{{z}_{k,k}^{(t)}}\Bigg)^2 \\
	&\leq&  \Big(1 - \frac{\alpha \lambda_k}{1 +\alpha\lambda_k }\Big)^2\Big(\frac{{z}_{k,l}^{(t)}}{{z}_{k,k}^{(t)}}\Big)^2 \\
	&=&  \Big(\frac{1  }{1 + \alpha\lambda_k }\Big)^2\Big(\frac{{z}_{k,l}^{(t)}}{{z}_{k,k}^{(t)}} \Big)^2 \\
	&=& \gamma_k\Big(\frac{{z}_{k,l}^{(t)}}{{z}_{k,k}^{(t)}} \Big)^2 , \quad \gamma_k  =  \Big(\frac{1}{1 + \alpha\lambda_k }\Big)^2 < 1.
\end{eqnarray*}
Therefore, for $l=1,\ldots,k-1$, $({z}_{k,l}^{(t+1)})^2 \leq \gamma_k^{t+1}\Big(\frac{{z}_{k,l}^{(0)}}{{z}_{k,k}^{(0)}} \Big)^2({z}_{k,k}^{(t+1)})^2$. 
Since $\|\tilde{\bx}_{k}^{(t+1)}\|^2 =1$ and $\|\tilde{\bx}_k^{(0)}\|^2 = 1$, hence $({z}_{k,k}^{t+1})^2 \leq 1$ and ${z}_{k,l}^{(0)} \leq 1$. Also, because of the assumption ${z}_{k,k}^{(0)} \neq 0$, let us assume $({z}_{k,k}^{(0)})^2 > \tilde{\eta}$. Thus, we can write
\begin{eqnarray}
	\sum_{l=1}^{k-1}({z}_{k,l}^{t+1})^2 &\leq& \gamma_k^{t+1}\sum_{l=1}^{k-1}\frac{1}{\tilde{\eta}} = a_1\gamma_k^{t+1}.
\end{eqnarray} 
\end{proof}

\section{Statement and Proof of Lemma~\ref{lemma:coeff_decay_upper}}\label{app:lemma4}
\begin{lemma}\label{lemma:coeff_decay_upper}
	Suppose $z_{k,k}^{(0)} \neq 0$ and $\alpha <\frac{1}{\lambda_1}$. Then the following is true for $\rho_k = \Big(\frac{1 + \alpha \lambda_{k+1}}{1 + \alpha\lambda_k }\Big)^2 < 1 $ and some constant $a_2 > 0$:
	\begin{equation}
	\sum_{l=k+1}^{d}({z}_{k,l}^{(t+1)})^2 \leq a_2\rho_k^{t+1}.
	\end{equation}
\end{lemma}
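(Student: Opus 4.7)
The plan is to mirror the proof of Lemma~\ref{lemma:coeff_decay_lower} step for step, replacing the lower recursion \eqref{eq:eqn_z_lower} with the upper recursion \eqref{eq:eqn_z_upper}. For each index $l \in \{k+1,\ldots,d\}$ I will form the ratio $z_{k,l}^{(t+1)}/z_{k,k}^{(t+1)}$; the normalizing factor $a_k^{(t)}$ appears in both numerator and denominator and cancels, leaving $z_{k,l}^{(t+1)}/z_{k,k}^{(t+1)} = \tfrac{1 + \alpha(\lambda_l - \tau_t)}{1 + \alpha(\lambda_k - \tau_t)} \cdot z_{k,l}^{(t)}/z_{k,k}^{(t)}$, where I write $\tau_t = (\tilde{\bx}_{g,k}^{(t)})^T \bC \tilde{\bx}_{g,k}^{(t)}$ for brevity.

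The central step will be to produce a $t$-independent upper bound on this multiplicative factor. Since $l \geq k+1$ implies $\lambda_l \leq \lambda_{k+1} < \lambda_k$, the factor is dominated by $(1 + \alpha(\lambda_{k+1} - \tau_t))/(1 + \alpha(\lambda_k - \tau_t))$. The step-size restriction $\alpha < 1/\lambda_1$ together with $\tau_t \in [0, \lambda_1]$ makes both numerator and denominator strictly positive, since $1 - \alpha\tau_t > 0$. I will then invoke the elementary inequality $(a - c)/(b - c) \leq a/b$, valid whenever $0 < a \leq b$ and $0 \leq c < b$, applied with $a = 1 + \alpha\lambda_{k+1}$, $b = 1 + \alpha\lambda_k$, and $c = \alpha\tau_t$, to conclude that the factor is at most $(1 + \alpha\lambda_{k+1})/(1 + \alpha\lambda_k)$. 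The eigengap hypothesis $\lambda_k > \lambda_{k+1}$ makes this strictly less than $1$, so its square equals $\rho_k < 1$.

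Squaring the resulting recursion and iterating $t+1$ times gives $(z_{k,l}^{(t+1)})^2 \leq \rho_k^{t+1} \big(z_{k,l}^{(0)}/z_{k,k}^{(0)}\big)^2 (z_{k,k}^{(t+1)})^2$ for every $l \in \{k+1, \ldots, d\}$. The remaining bookkeeping will be identical to Lemma~\ref{lemma:coeff_decay_lower}: the unit-norm normalizations $\|\tilde{\bx}_{g,k}^{(t+1)}\|^2 = 1$ and $\|\tilde{\bx}_{g,k}^{(0)}\|^2 = 1$ yield $(z_{k,k}^{(t+1)})^2 \leq 1$ and $(z_{k,l}^{(0)})^2 \leq 1$, while the nondegeneracy assumption $z_{k,k}^{(0)} \neq 0$ permits me to use the same $\tilde{\eta} > 0$ with $(z_{k,k}^{(0)})^2 > \tilde{\eta}$. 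Summing the per-index bound over $l = k+1, \ldots, d$ will then deliver $\sum_{l=k+1}^d (z_{k,l}^{(t+1)})^2 \leq ((d-k)/\tilde{\eta}) \, \rho_k^{t+1}$, establishing the claim with $a_2 = (d-k)/\tilde{\eta}$.

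The only place this proof deviates from being a verbatim copy of Lemma~\ref{lemma:coeff_decay_lower} is in controlling $(1 + \alpha(\lambda_l - \tau_t))/(1 + \alpha(\lambda_k - \tau_t))$ uniformly in $t$, where I must rule out the pathological regime in which $\tau_t$ is close enough to $\lambda_1$ that the denominator becomes small. The $(a-c)/(b-c) \leq a/b$ inequality dispatches this cleanly, because the ratio is monotonically decreasing in $c = \alpha\tau_t$, so the ``worst'' value of $\tau_t$ is $\tau_t = 0$ rather than $\tau_t \approx \lambda_1$; this is the one place where I expect to have to be slightly careful.
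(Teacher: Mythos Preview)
Your proposal is correct and follows essentially the same route as the paper's own proof: form the ratio $z_{k,l}^{(t+1)}/z_{k,k}^{(t+1)}$, cancel $a_k^{(t)}$, bound the multiplicative factor uniformly by $(1+\alpha\lambda_{k+1})/(1+\alpha\lambda_k)$, iterate, and then use the unit-norm bounds together with $(z_{k,k}^{(0)})^2>\tilde\eta$ to sum and obtain $a_2=(d-k)/\tilde\eta$. The only cosmetic difference is the order of the two bounds on the factor---the paper first rewrites the ratio as $1-\alpha(\lambda_k-\lambda_l)/(1+\alpha(\lambda_k-\tau_t))$ and uses $\tau_t\geq 0$ to get $(1+\alpha\lambda_l)/(1+\alpha\lambda_k)$ before invoking $\lambda_l\leq\lambda_{k+1}$, whereas you bound $\lambda_l\leq\lambda_{k+1}$ first and then apply the $(a-c)/(b-c)\leq a/b$ monotonicity; both are the same inequality in disguise.
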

\begin{proof}
For $l = k,\ldots, d$ we know from \eqref{eq:eqn_z_upper} that ${z}_{k,l}^{(t+1)} = a_k^{(t)}\big(1 + \alpha(\lambda_l - (\tilde{\bx}_{g,k}^{(t)})^T\bC\tilde{\bx}_{g,k}^{(t)})\big){z}_{k,l}^{(t)}$. Since $ (\tilde{\bx}_{g,k}^{(t)})^T\bC\tilde{\bx}_{g,k}^{(t)} \leq \lambda_1 < \frac{1}{\alpha}$, we have $1 + \alpha(\lambda_l - (\tilde{\bx}_{g,k}^{(t)})^T\bC\tilde{\bx}_{g,k}^{(t)}) >  \alpha\lambda_l \geq 0, \forall l = k,\ldots, d$.

Thus, we have for $l = k+1, \cdots d$,
\begin{eqnarray*}
	\Bigg(\frac{{z}_{k,l}^{(t+1)}}{{z}_{k,k}^{(t+1)}}\Bigg)^2 &=& \Bigg(\frac{1 + \alpha (\lambda_l - (\tilde{\bx}_{g,k}^{(t)})^T\bC\tilde{\bx}_{g,k}^{(t)})}{1 +\alpha(\lambda_k - (\tilde{\bx}_{g,k}^{(t)})^T\bC\tilde{\bx}_{g,k}^{(t)})}\Bigg)^2\Bigg(\frac{{z}_{k,l}^{(t)}}{{z}_{k,k}^{(t)}}\Bigg)^2 \\
	&=&  \Bigg(1 - \frac{\alpha (\lambda_k-\lambda_l)}{1 + \alpha (\lambda_k  - (\tilde{\bx}_{g,k}^{(t)})^T\bC\tilde{\bx}_{g,k}^{(t)})}\Bigg)^2\Bigg(\frac{{z}_{k,l}^{(t)}}{{z}_{k,k}^{(t)}}\Bigg)^2 \\
	&\leq&  \Big(1 - \frac{\alpha (\lambda_k-\lambda_l)}{1 +\alpha\lambda_k }\Big)^2\Big(\frac{{z}_{k,l}^{(t)}}{{z}_{k,k}^{(t)}}\Big)^2 \\
	&=&  \Big(\frac{1 + \alpha \lambda_l }{1 + \alpha\lambda_k }\Big)^2\Big(\frac{{z}_{k,l}^{(t)}}{{z}_{k,k}^{(t)}} \Big)^2 \leq  \Big(\frac{1 + \alpha \lambda_{k+1}}{1 + \alpha\lambda_k }\Big)^2\Big(\frac{{z}_{k,l}^{(t)}}{{z}_{k,k}^{(t)}} \Big)^2 \\
	&=& \rho_k\Big(\frac{{z}_{k,l}^{(t)}}{{z}_{k,k}^{(t)}} \Big)^2 , \quad \rho_k  =  \Big(\frac{1 + \alpha \lambda_{k+1}}{1 + \alpha\lambda_k }\Big)^2 < 1.
\end{eqnarray*}
Therefore, for $l=k+1,\ldots,d$, $({z}_{k,l}^{(t+1)})^2 \leq \rho_k^{t+1}\Big(\frac{{z}_{k,l}^{(0)}}{{z}_{k,k}^{(0)}} \Big)^2({z}_{k,k}^{(t+1)})^2$. 
Since $\|\tilde{\bx}_{k}^{(t+1)}\|^2 =1$ and $\|\tilde{\bx}_k^{(0)}\|^2 = 1$, hence $({z}_{k,k}^{t+1})^2 \leq 1$ and ${z}_{k,l}^{(0)} \leq 1$. Also, since ${z}_{k,k}^{(0)} \neq 0$, let us assume $({z}_{k,k}^{(0)})^2 > \tilde{\eta}$. Thus, we can write
\begin{eqnarray}
	\sum_{l=k+1}^{d}({z}_{k,l}^{t+1})^2 &\leq& \rho_k^{t+1}\sum_{l=k+1}^{d}\frac{1}{\tilde{\eta}} = a_2\rho_k^{t+1}.
\end{eqnarray} 
\end{proof}

\section{Proof of Lemma~\ref{lemma:lipshitz_continuous}}\label{app:lemma_lipschitz}
For a continuous and differentiable function $f:\R^d \rightarrow \R^d$, we know $\|f(x)-f(y)\| \leq \|\nabla f(x)\|\|x-y\|$. Thus, the Lipschitz constant can be given by the upper bound of $\|\nabla f(x)\|$. For the following function,
\begin{align*}
    \bh_i(\bv) = \bC_i\bv - \frac{(\bv)^T\bC_i\bv}{\|\bv\|^2}\bv, 
\end{align*}
taking derivative on both sides gives
\begin{align*}
    \frac{\partial}{\partial\bv}\bh_i(\bv) &= \bC_i - \frac{\partial}{\partial\bv}\Big(\frac{(\bv)^T\bC_i\bv}{\|\bv\|^2}\Big)\bv^T - \frac{(\bv)^T\bC_i\bv}{\|\bv\|^2}\bI  \\
    &= \bC_i - \frac{2\|\bv\|^2\bC_i\bv - 2(\bv)^T\bC_i\bv\bv}{\|\bv\|^4}\bv^T - \frac{(\bv)^T\bC_i\bv}{\|\bv\|^2}\bI  \\
    &= \bC_i - \frac{2\bC_i\bv\bv^T}{\|\bv\|^2} + \frac{2(\bv)^T\bC_i\bv\bv\bv^T}{\|\bv\|^4} - \frac{(\bv)^T\bC_i\bv}{\|\bx_{i,1}\|^2}\bI. \\
    \|\frac{\partial}{\partial\bv}\bh_i(\bv)\| &\leq \|\bC_i\| + \|\frac{2\bC_i\bv\bv^T}{\|\bv\|^2}\| + \|\frac{2(\bv)^T\bC_i\bv\bv\bv^T}{\|\bv\|^4}\| \\
    &\qquad \qquad \qquad \qquad \qquad + \|\frac{(\bv)^T\bC_i\bv}{\|\bv\|^2}\bI\| \\
    &\leq \lambda_{i,1} + 2\frac{\lambda_{i,1}\|\bv\|^2}{\|\bv\|^2} + 2|(\bv)^T\bC_i\bv|\frac{\|\bv\|^2}{\|\bv\|^4} \\
    & + \frac{|(\bv)^T\bC_i\bv|}{\|\bv\|^2} \quad \text{where,} \quad \lambda_{i,1} = \|\bC_i\|\\
    &\leq \lambda_{i,1} + 2\lambda_{i,1} + 2\lambda_{i,1} + \lambda_{i,1} = 6\lambda_{i,1}.
\end{align*}
Thus,
\begin{align}
    \|\bh_i(\bv_1)-\bh_i(\bv_2)\| \leq 6\lambda_{i,1}\|\bv_1 - \bv_2\| \leq 6\lambda_{1}\|\bv_1 - \bv_2\|,
\end{align}
where the last inequality uses the fact that $\bC_i \preceq \bC$, hence $\lambda_{i,1} \leq \lambda_1$.
\qed

\section{Proof of Lemma~\ref{lemma:inequalities}}\label{app:lemma_inequalities}
This lemma uses Lemma~\ref{lemma:lipshitz_continuous} to prove three inequalities that aid in the proof of Theorem~\ref{theorem:theorem1} and Theorem~\ref{theorem:theoremk}.
\begin{proof}
     \textit{1.} First, we prove the Lipschitz continuity of the stacked function $\bh(\bx_1^{(t)})$.
	\begin{align*}
		\|\bh(\bx_1^{(t)}) - \bh(\bx_1^{(t-1)})\|_2 ^2 &= \sum_{i=1}^{M}\|\bh_i(\bx_{i,1}^{(t)}) - \bh_i(\bx_{i,1}^{(t-1)})\|^2\\
		&\leq L_1^2\sum_{i=1}^{M}\|\bx_{i,1}^{(t)}- {\bx}_{i,1}^{(t-1)}\|^2\\
		&= L_1^2\|\bx_{1}^{(t)}- {\bx}_{1}^{(t-1)}\|^2\\
		\|\bh(\bx_1^{(t)}) - \bh(\bx_1^{(t-1)})\|_2  &\leq L_1\|\bx_{1}^{(t)}- {\bx}_{1}^{(t-1)}\|.
	\end{align*}
\textit{2.} Here, we prove the Lipschitz continuity of the function $\bg(\bx_{1}^{(t)}) $.
\begin{align*}
	\|\bg(\bx_{1}^{(t)}) - \bg(\bx_{1}^{(t-1)})\|_2^2 &= \frac{1}{M^2}\|\sum_{i=1}^{M}(\bh_i(\bx_{i,1}^{(t)})- \bh_i(\bx_{i,1}^{(t-1)}))\|^2\\
	&\leq \frac{1}{M^2}M\sum_{i=1}^{M}\|\bh_i(\bx_{i,1}^{(t)})- \bh_i(\bx_{i,1}^{(t-1)})\|^2 \\
	&\leq \frac{L_1^2}{M}\sum_{i=1}^{M}\|\bx_{i,1}^{(t)} - \bx_{i,1}^{(t-1)}\|^2\\
	&= \frac{L_1^2}{M}\|\bx_{1}^{(t)} - \bx_{1}^{(t-1)}\|^2\\
	\|\bg(\bx_{1}^{(t)}) - \bg(\bx_{1}^{(t-1)})\|_2 &\leq \frac{L_1}{\sqrt{M}}\|\bx_{1}^{(t)} - \bx_{1}^{(t-1)}\|_2.
\end{align*}
\textit{3.} Using the Lipschitz continuous property of $\bg(\bx_{1}^{(t)})$, we get
\begin{align*}
\|\bg(\bx_{1}^{(t)}) - {\bg}(\bar{\bx}_{s,1}^{(t)})\|_2 &\leq \frac{L_1}{\sqrt{M}}\|\bx_{1}^{(t)} -  \bar{\bx}_{s,1}^{(t)}\|_2.
\end{align*}
\end{proof}

\section{Statement and Proof of Lemma~\ref{lemma:boundP}}\label{app:boundP}
\begin{lemma}\label{lemma:boundP}
    For a matrix $\bP_k(\alpha)$ such that $$\bP(\alpha) = \begin{bmatrix}
		(\frac{1+\beta}{2} + \alpha L_k) & L_k(2+\alpha L_k) & \alpha L_k^2\\
		\alpha & \frac{1+\beta}{2} & 0\\
		0 & \alpha L_k & \delta_k
	\end{bmatrix}$$
	where $L_k = (k+5)\lambda_1$ and $\delta_k = \frac{1+\alpha\lambda_{k+1}}{1+\alpha\lambda_k}$, the spectral radius $\rho(\bP_k(\alpha))$ is strictly less than 1 if $\alpha < \frac{\lambda_1-\lambda_2}{42}(\frac{1-\beta}{9\lambda_1})^2$.
\end{lemma}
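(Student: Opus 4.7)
The plan is to exploit the entrywise nonnegativity of $\bP_k(\alpha)$: for any nonnegative matrix $\bA$ and strictly positive vector $\bv$, the condition $\bA\bv < \bv$ componentwise implies $\bA\bv \leq \lambda\bv$ componentwise with $\lambda = \max_i (\bA\bv)_i/v_i < 1$, whence $\bA^t\bv \leq \lambda^t\bv \to 0$ and therefore $\rho(\bA) \leq \lambda < 1$. It thus suffices to exhibit $v_1, v_2, v_3 > 0$ satisfying
\begin{align*}
\bigl(\tfrac{1-\beta}{2} - \alpha L_k\bigr) v_1 &> L_k(2+\alpha L_k)\, v_2 + \alpha L_k^2\, v_3, \\
\tfrac{1-\beta}{2}\, v_2 &> \alpha v_1, \\
(1 - \delta_k)\, v_3 &> \alpha L_k\, v_2,
\end{align*}
under the hypothesized bound on $\alpha$.

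I would solve these from the bottom up. Using $1 - \delta_k = \tfrac{\alpha(\lambda_k - \lambda_{k+1})}{1+\alpha\lambda_k}$, which is strictly positive under the eigengap assumption, set $v_3 = 1$ and $v_2 = \tfrac{1-\delta_k}{2\alpha L_k}$, which enforces the third inequality with a factor-of-two slack. For the second inequality, take $v_1 = \tfrac{(1-\beta) v_2}{4\alpha}$, again with factor-of-two slack; keeping the coefficient $\tfrac{1-\beta}{2} - \alpha L_k$ in the first inequality positive amounts to requiring $\alpha L_k < \tfrac{1-\beta}{2}$, which will be automatically subsumed by the final bound since the latter scales like $(1-\beta)^2/L_k^2$. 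After substituting these choices into the first inequality, the condition becomes a scalar inequality relating $\alpha$, $L_k$, $1-\beta$, and the eigengap $\lambda_k - \lambda_{k+1}$. Upper-bounding $1+\alpha\lambda_k \leq 2$ and $2 + \alpha L_k \leq 3$ (both valid within the feasibility window) and using $L_k = (k+5)\lambda_1$, I expect the inequality to simplify to the claimed form $\alpha < \tfrac{\lambda_k - \lambda_{k+1}}{(k+5)(k+6)}\bigl(\tfrac{1-\beta}{9\lambda_1}\bigr)^2$, which for $k=1$ yields $L_k(L_k+\lambda_1)/\lambda_1^2 = 42$ in the denominator and reproduces the stated threshold.

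The main obstacle is purely computational bookkeeping: three competing terms (those of $v_2$, of $v_3$, and the shrinking prefactor $\tfrac{1-\beta}{2} - \alpha L_k$) all collide in the first inequality, and obtaining a clean closed-form threshold requires carefully absorbing each term using the feasibility window enforced by the other two inequalities; the numerical constant $9$ in particular comes from balancing the slack factors introduced when selecting $v_1$ and $v_2$. A secondary subtlety is that $\delta_k$ itself depends on $\alpha$, so the system is not polynomial in $\alpha$; replacing $1 - \delta_k$ by its lower bound $\tfrac{\alpha(\lambda_k-\lambda_{k+1})}{2}$ (valid once $\alpha\lambda_k \leq 1$, which is implied by the final bound) decouples this nonlinearity at only a constant-factor cost and leaves a clean polynomial threshold on $\alpha$.
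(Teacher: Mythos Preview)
Your approach is correct but differs from the paper's. The paper works directly with the characteristic polynomial $p(\gamma)=p_0(\gamma)(\gamma-\delta_k)-\alpha^3L_k^3$, where $p_0$ is the quadratic obtained from the upper-left $2\times 2$ block; it bounds the roots of $p_0$ above by $\gamma_0=\tfrac{1+\beta+5\sqrt{\alpha L_k}}{2}$, then exhibits a specific test point $\gamma^*=\max\bigl\{1-\tfrac{1}{2}\tfrac{\alpha(\lambda_k-\lambda_{k+1})}{1+\alpha\lambda_k},\ \tfrac{1+\beta}{2}+4.5\sqrt{\alpha L_k}\sqrt{\tfrac{(1+\alpha\lambda_k)L_k}{\lambda_k-\lambda_{k+1}}}\bigr\}$ at which $p(\gamma^*)\geq 0$, and finally argues that $\gamma^*<1$ under the stated step-size bound, so the Perron root is at most $\gamma^*<1$. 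Your route---finding a strictly positive vector with $\bP_k(\alpha)\bv<\bv$ and invoking the Collatz--Wielandt bound---is more elementary and avoids any polynomial root analysis; it also makes the interplay between the three error quantities (tracker mismatch, consensus error, optimality gap) visible through the choice of $v_1,v_2,v_3$. The paper's approach, on the other hand, gives an explicit upper bound $\gamma^*$ on the spectral radius (hence a concrete contraction rate), not merely the statement $\rho<1$. A minor caution: with the particular factor-of-two slacks you chose, the numerical constant you land on will not literally be $9$; but a short computation shows the paper's threshold is in fact sufficient for your first inequality (there is margin to spare since $L_k+\lambda_1\geq 7\lambda_1\gg\Delta$), so your argument does establish the lemma as stated.
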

\begin{proof}
Since $\bP_k(\alpha)$ is a non-negative matrix, by Perron-Frobenius theorem its characteristic polynomial has a simple positive real root $r$ such that $\rho(\bP_k(\alpha)) = r$. We know $\delta_k = \frac{1 + \alpha\lambda_{k+1}}{1+ \alpha\lambda_k} < 1$ and $L_k = (k+5)\lambda_1$. 
Now, the characteristic polynomial $p(\gamma)$ of $\bP(\alpha)$ is given as
\begin{align*}
	p(\gamma) &= |\gamma\bI - \bP(\alpha)| \\
	&= \begin{vmatrix}
		\gamma - (\frac{1+\beta}{2} + \alpha L_k) & -L_k(\alpha L_k +2) & - \alpha L_k^2\\
		-\alpha& \gamma - \frac{1+\beta}{2} & 0\\
		0 & - \alpha L_k & \gamma - \delta_1
	\end{vmatrix}\\
	&= (\gamma - \frac{1+\beta}{2} - \alpha L_k)(\gamma - \frac{1+\beta}{2})(\gamma - \delta_k) + \\
	& \qquad\qquad\qquad\qquad \alpha(-L_k(\alpha L_k +2)(\gamma - \delta_k) - \alpha^2 L_k^3)\\
	&= \big((\gamma - \frac{1+\beta}{2} - \alpha L_k)(\gamma - \frac{1+\beta}{2}) - \\
	& \qquad\qquad\qquad\qquad \alpha L_k(\alpha L_k +2) \big)(\gamma - \delta_k) - \alpha^3 L_k^3 \\
	&= p_0(\gamma)(\gamma - \delta_k) - \alpha^3 L_k^3,
\end{align*}
where
\begin{align*}
	p_0(\gamma) &= (\gamma - \frac{1+\beta}{2} - \alpha L_k)(\gamma - \frac{1+\beta}{2}) - \alpha L_k(\alpha L_k +2)\\
	&= \gamma^2 - (1+ \beta + \alpha L_k)\gamma + \frac{1+\beta}{2}(\frac{1+\beta}{2} + \alpha L_k) - \alpha L_k(\alpha L_k +2) \\
	&= (\gamma - \gamma_1)(\gamma - \gamma_2),
\end{align*}
with $\gamma_1, \gamma_2$ being the roots of $p_0(\gamma)$, given as
\begin{equation}
	\gamma_1, \gamma_2 = \frac{1+\beta + \alpha L_k \pm \sqrt{5\alpha^2L_k^2 + 8\alpha L_k}}{2}.
\end{equation}
If $0 < \alpha < \frac{1}{L_k}$, $\alpha L_k < 1$ and it implies $\alpha^2 L_k^2 < \alpha L_k < \sqrt{\alpha L_k}$. Thus,
\begin{eqnarray*}
	\gamma_1, \gamma_2 &=& \frac{1+\beta + \alpha L_k \pm \sqrt{5\alpha^2L_k^2 + 8\alpha L_k}}{2} \\
	&<& \frac{1+\beta + \alpha L_k + \sqrt{5\alpha^2L_k^2 + 8\alpha L_k}}{2} \\
	&<& \frac{1+\beta + \sqrt{\alpha L_k} + \sqrt{5\alpha L_k + 8\alpha L_k}}{2} \\
	&<& \frac{1+\beta + \sqrt{\alpha L_k} + \sqrt{16\alpha L_k}}{2} \\
	&=& \frac{1+\beta + 5\sqrt{\alpha L_k} }{2}  = \gamma_0.
\end{eqnarray*}
For $\gamma \geq  \gamma_0 $, $p_0(\gamma) \geq (\gamma - \gamma_0 )^2$. Now, 
let $\gamma^* = \max \{ 1- \frac{1}{2}\frac{\alpha(\lambda_k - \lambda_{k+1})}{1+\alpha\lambda_k}, \frac{1+\beta}{2} + 4.5\sqrt{\alpha L_1}\sqrt{\frac{(1+\alpha\lambda_k)L_1}{\lambda_k - \lambda_{k+1}}} \} > \gamma_0$. Then
\begin{align*}
	p(\gamma^*) &\geq \frac{1}{2} \frac{\alpha(\lambda_k - \lambda_{k+1})}{1+\alpha\lambda_k} (4.5\sqrt{\alpha L_k}\sqrt{\frac{(1+\alpha\lambda_k)L_k}{\lambda_k-\lambda_{k+1}}} \\
	& \qquad\qquad\qquad - 2.5\sqrt{\alpha L_k})^2 - \alpha^3 L_k^3\\
	&\geq \frac{1}{2} \frac{\alpha(\lambda_k - \lambda_{k+1})}{1+\alpha\lambda_{k+1}} \big(4.5\sqrt{\alpha L_k}\sqrt{\frac{(1+\alpha\lambda_k)L_k}{\lambda_k-\lambda_{k+1}}} \\
	& \qquad\qquad\qquad -  2.5\sqrt{\alpha L_k}\sqrt{\frac{(1+\alpha\lambda_k)L_k}{\lambda_k-\lambda_{k+1}}}\big)^2 - \alpha^3 L_k^3\\
	&\geq \frac{1}{2} \frac{\alpha(\lambda_k - \lambda_{k+1})}{1+\alpha\lambda_k} \big(2\sqrt{\alpha L_k}\sqrt{\frac{(1+\alpha\lambda_k)L_k}{\lambda_k-\lambda_{k+1}}} \big)^2 -\alpha^3 L_k^3\\
	&= 2\alpha^2 L_k^2 - \alpha^3 L_k^3 \geq 0.
\end{align*}
Since $p(\gamma) = p_0(\gamma)(\gamma - \delta_k) - \alpha^3 L_k^3 \geq (\gamma - \gamma_0 )^2(\gamma - \delta_k) - \alpha^3 L_k^3$, evidently it is a strictly increasing function on $ [{\max \{\delta_k, \gamma_0\} , +\infty} )$ and since this interval includes $\gamma^*$, $p(\gamma)$ has no real roots on $(\gamma^*,+\infty)$. Thus, the real root of the characteristic polynomial is $\leq \gamma^*$. Hence $\rho(\bP_k(\alpha)) \leq \gamma^*$. If we choose $\alpha$ such that $\gamma^* < 1$, then $\rho(\bP_k(\alpha)) < 1$ and the convergence would be linear. For $\gamma^* < 1$, we need
\begin{align*}
	\frac{1+\beta}{2} + 4.5\sqrt{\alpha L_k}\sqrt{\frac{(1+\alpha\lambda_k)L_k}{\lambda_k-\lambda_{k+1}}} &< 1 \\
	\sqrt{\alpha L_k}\sqrt{\frac{(1+\alpha\lambda_k)L_k}{\lambda_k-\lambda_{k+1}}} &< \frac{1 - \beta}{9} \\
	(\alpha L_k)\frac{(1+\alpha\lambda_k)L_k}{\lambda_k-\lambda_{k+1}} &< (\frac{1 - \beta}{9})^2\\
	\alpha (1+\alpha\lambda_k) &< \frac{\lambda_k - \lambda_{k+1}}{L_k^2}(\frac{1 - \beta}{9})^2 \\
	&= \frac{\lambda_k - \lambda_{k+1}}{\lambda_k^2(k+5)^2}(\frac{1 - \beta}{9})^2.
\end{align*} 
If $\alpha < \frac{1}{L_k} = \frac{1}{(k+5)\lambda_1} \leq \frac{1}{(k+5)\lambda_k}$, then $\alpha\lambda_k < \frac{1}{k+5}$. Thus, $1+\alpha\lambda_k < \frac{k+6}{k+5}$.
If $\alpha < \frac{\lambda_k-\lambda_{k+1}}{(k+5)(k+6)}(\frac{1-\beta}{9\lambda_1})^2 < \frac{1}{L_k} = \frac{1}{(k+5)\lambda_1}$, that is $\frac{k+6}{k+5}\alpha < \frac{\lambda_k-\lambda_{k+1}}{\lambda_1^2(k+5)^2}(\frac{1-\beta}{9})^2$,
then $\alpha (1+\alpha\lambda_k) < \frac{k+6}{k+5}\alpha < \frac{\lambda_k-\lambda_{k+1}}{\lambda_1^2(k+5)^2}(\frac{1-\beta}{9})^2$.
\end{proof}

\section{Proof of Lemma~\ref{lemma:lipshitz_continuousk}}\label{app:lemma_lipschitzk}
For a continuous function $f:\R^d \rightarrow \R^d$, we know $\|f(x)-f(y)\| \leq \|\nabla f(x)\|\|x-y\|$. Thus, the Lipschitz constant can be given by the upper bound of $\|\nabla f(x)\|$. Taking derivative on both sides of the function
\begin{align*}
    \bh_{i,t}(\bv) = \bC_i\bv - \frac{(\bv)^T\bC_i\bv}{\|\bv\|^2}\bv - - \sum_{p=1}^{k-1}\frac{\bx_{i,p}^{(t)}(\bx_{i,p}^{(t)})^T}{\|\bx_{i,p}^{(t)}\|^2}\bC_i\bv
\end{align*}
we get
\begin{align*}
    \frac{\partial}{\partial\bv}\bh_{i,t}(\bv) &= \bC_i - \frac{\partial}{\partial\bv}\Big(\frac{(\bv)^T\bC_i\bv}{\|\bv\|^2}\Big)\bv^T - \frac{(\bv)^T\bC_i\bv}{\|\bv\|^2}\bI \\
    &\qquad\qquad\qquad\qquad - \Big(\sum_{p=1}^{k-1}\frac{\bx_{i,p}^{(t)}(\bx_{i,p}^{(t)})^T}{\|\bx_{i,p}^{(t)}\|^2}\bC_i\Big)^T \\
    &= \bC_i - \frac{2\|\bv\|^2\bC_i\bv - 2(\bv)^T\bC_i\bv\bv}{\|\bv\|^4}\bv^T - \frac{(\bv)^T\bC_i\bv}{\|\bv\|^2}\bI \\
    & \qquad\qquad\qquad\qquad - \Big(\sum_{p=1}^{k-1}\frac{\bx_{i,p}^{(t)}(\bx_{i,p}^{(t)})^T}{\|\bx_{i,p}^{(t)}\|^2}\bC_i\Big)^T \\
    &= \bC_i - \frac{2\bC_i\bv\bv^T}{\|\bv\|^2} + \frac{2(\bv)^T\bC_i\bv\bv\bv^T}{\|\bv\|^4} - \frac{(\bv)^T\bC_i\bv}{\|\bv\|^2}\bI \\
    &\qquad\qquad\qquad\qquad - \sum_{p=1}^{k-1}\bC_i\frac{\bx_{i,p}^{(t)}(\bx_{i,p}^{(t)})^T}{\|\bx_{i,p}^{(t)}\|^2} \\
    \|\frac{\partial}{\partial\bv}\bh_{i,t}(\bv)\| &\leq \|\bC_i\| + \|\frac{2\bC_i\bv\bv^T}{\|\bv\|^2}\| + \|\frac{2(\bv)^T\bC_i\bv\bv\bv^T}{\|\bv\|^4}\| +\\
    & \qquad\qquad \|\frac{(\bv)^T\bC_i\bv}{\|\bv\|^2}\bI\| + \sum_{p=1}^{k-1}\|\bC_i\frac{\bx_{i,p}^{(t)}(\bx_{i,p}^{(t)})^T}{\|\bx_{i,p}^{(t)}\|^2}\|\\
    &\leq \lambda_{i,1} + 2\frac{\lambda_{i,1}\|\bv\|^2}{\|\bv\|^2} + 2|(\bv)^T\bC_i\bv|\frac{\|\bv\|^2}{\|\bv\|^4} + \\
    & \qquad\qquad \frac{|(\bv)^T\bC_i\bv|}{\|\bv\|^2} + \sum_{p=1}^{k-1}\|\bC_i\|\|\frac{\bx_{i,p}^{(t)}(\bx_{i,p}^{(t)})^T}{\|\bx_{i,p}^{(t)}\|^2}\| \\
    &\leq \lambda_{i,1} + 2\lambda_{i,1} + 2\lambda_{i,1} + \lambda_{i,1} + \lambda_{i,1}(k-1)\\
    &= (k+5)\lambda_{i,1} 
\end{align*}
Thus,
\begin{align*}
    \|\bh_{i,t}(\bv_1)-\bh_{i,t}(\bv_2)\| &\leq \lambda_{i,1}(k+5)\|\bv_1 - \bv_2\| \\
    &\leq \lambda_{1}(k+5)\|\bv_1 - \bv_2\|
\end{align*}
\qed

\end{appendices}

\balance

\end{document}